%% file: main.tex
\documentclass{article}

\usepackage[main, final]{Styles/neurips_2026}
\makeatletter
\renewcommand{\@noticestring}{Preprint.}
\makeatother

\usepackage{microtype}
\usepackage{graphicx}
\usepackage{subcaption}
\usepackage{booktabs} % for professional tables

\usepackage{hyperref}

\PassOptionsToPackage{table,dvipsnames}{xcolor}

\input{math_commands.tex}

\usepackage{url}
\usepackage{annotate-equations}
\usepackage{tikz}
\usepackage[skins,listings,most]{tcolorbox}

\usetikzlibrary{positioning,fit,backgrounds,arrows.meta,calc}

\usepackage{amsfonts}       % blackboard math symbols
\usepackage{nicefrac}       % compact symbols for 1/2, etc.
\usepackage{multirow}
\usepackage{threeparttable}
\usepackage{makecell}
\usepackage{colortbl}
\usepackage{amsmath,amscd,amsbsy,amsfonts,latexsym,url,bm,amsthm,amssymb,dsfont}
\usepackage{wrapfig}
\usepackage{caption}
\usepackage{fvextra}
\usepackage{csquotes}
\usepackage{enumitem}
\usepackage{titletoc}

\usepackage{cleveref}
\crefname{section}{$\mathsection$}{$\mathsection\mathsection$}
\Crefname{section}{$\mathsection$}{$\mathsection\mathsection$}

\usepackage{listings}
\usepackage{mdframed}
\usepackage[colorinlistoftodos,prependcaption]{todonotes}
\usepackage{xargs}
\usepackage{BOONDOX-uprscr}
\usepackage{thmtools}
\usepackage{lipsum}    
\usepackage{algorithm}
\usepackage{algorithmic}
\usepackage{float}         % For better float control

\makeatletter
\renewcommand{\section}{%
  \@startsection{section}{1}{\z@}%
                {-1.6ex \@plus -0.35ex \@minus -0.15ex}%
                {0.95ex \@plus 0.2ex \@minus 0.1ex}%
                {\large\bf\raggedright}%
}
\renewcommand{\subsection}{%
  \@startsection{subsection}{2}{\z@}%
                {-1.35ex \@plus -0.3ex \@minus -0.12ex}%
                {0.65ex \@plus 0.15ex}%
                {\normalsize\bf\raggedright}%
}
\renewcommand{\subsubsection}{%
  \@startsection{subsubsection}{3}{\z@}%
                {-1.1ex \@plus -0.25ex \@minus -0.1ex}%
                {0.45ex \@plus 0.12ex}%
                {\normalsize\bf\raggedright}%
}
\renewcommand{\paragraph}{%
  \@startsection{paragraph}{4}{\z@}%
                {0.4ex \@plus 0.12ex \@minus 0.05ex}%
                {-0.6em}%
                {\normalsize\bfseries}%
}
\makeatother
\newtheorem{theorem}{Theorem}
\newtheorem{proposition}[theorem]{Proposition}
\newtheorem{lemma}[theorem]{Lemma}
\newtheorem{corollary}[theorem]{Corollary}
\newtheorem{definition}{Definition}

\tcbset{
  graybox/.style={
    enhanced, breakable,
    colback=Gray!5!white,
    colframe=Gray!50!black,
    arc=2mm, boxrule=0.8pt,
    left=4pt, right=4pt, top=2pt, bottom=2pt,
    before skip=8pt, after skip=8pt
  },
  redbox/.style={
    enhanced, breakable,
    colback=BrickRed!8!white,
    colframe=BrickRed!80!black,
    arc=2mm, boxrule=0.8pt,
    left=4pt, right=4pt, top=2pt, bottom=2pt,
    before skip=8pt, after skip=8pt
  },
  bluebox/.style={
    enhanced, breakable,
    colback=NavyBlue!8!white,
    colframe=NavyBlue!80!black,
    arc=2mm, boxrule=0.8pt,
    left=4pt, right=4pt, top=2pt, bottom=2pt,
    before skip=8pt, after skip=8pt
  }
}

\newcommand{\red}[1]{#1}

\newtcolorbox{learningbox}{
   enhanced,
   colback=Gray!3!white,
   colframe=Gray,
   arc=4mm,
   boxrule=1pt,
   left=4pt,
   right=4pt,
   top=4pt,
   bottom=4pt,
   drop shadow={opacity=0.15, xshift=1.5pt, yshift=-1.5pt},
   fonttitle=\bfseries,
   coltitle=Gray,
   before skip=8pt,
   after skip=15pt
}

\newcommand{\CALL}{\langle\texttt{call}\rangle}
\newcommand{\UNCALL}{\langle/\texttt{call}\rangle}
\newcommand{\ANSWER}{\langle\texttt{return}\rangle}
\newcommand{\UNANSWER}{\langle/\texttt{return}\rangle}
\newcommand{\BOS}{\langle\texttt{bos}\rangle}
\newcommand{\mask}{\Pi}
\newcommand{\maskrec}{\Pi_{\mathrm{rec}}}
\newcommand{\frcm}{g}
\newcommand{\fcot}{f}

\title{Recursive Language Models Generalize Out of Domain}
\author{%
  Chenxiao Yang \qquad
  Zhiyuan Li \qquad
  David McAllester \qquad
  Nathan Srebro \\
  Toyota Technological Institute at Chicago \\
  \texttt{\{chenxiao,zhiyuanli,mcallester,nati\}@ttic.edu}
}

\begin{document}

\maketitle

\begin{abstract}
We study when limiting what a language model can see improves learning. We compare standard CoT, the more general learner that reads the full trace, with recursive language models, which restricts itself by solving each subtask in an isolated context. In-distribution, this generality comes for free: CoT can efficiently simulate the recursive rule, so the IID generalization guarantee changes only by a constant factor, and recursion does not offer much. But out of domain, CoT can fit training by relying on context outside the current subtask, i.e. a shortcut that breaks once those tokens change; recursive context isolation rules out this failure mode. Even though CoT's class still covers the recursive rule, simplicity bias picks the shortcut over the truth. Thus, to go beyond distributional accuracy and truly reason, covering the right rule is not enough---in contrast to classical learning theory.
\end{abstract}

\section{Introduction}
\label{sec:introduction}

A central lesson of machine learning is to favor generality over hand-built specificity~\citep{sutton2019bitter}: use a broad, scalable model class and let data pick the rule. This is statistically cheap in IID learning, where the usual guarantees pay for a simple rule that fits the data, not for every rule the class could express~\citep{rissanen1978modeling,grunwald2007minimum,shalev2014understanding}. This paper asks where that argument stops. The argument is distributional at heart, but language models are increasingly expected to reason---to learn the rule itself, and able to answer correctly on any instance, including ones outside training. This is the out-of-domain (OOD) generalization question.

We make this tension concrete in language modeling. In standard chain-of-thought (CoT) reasoning~\citep{nye2021show,wei2022chain}, every subproblem and every intermediate step is written into one growing sequence. A recursive model (RM)~\citep{yang2025recursive} instead starts a new isolated sequence for each subproblem, runs the same predictor there, and passes only the answer back to the parent. A child can open further subproblems, producing a stack of contexts (\Cref{fig:recursive-example}). The same design principle---isolated context instead of the full history---is widely used in current LLM systems~\citep{anthropic2025contexteng,openai2026orchestration}. The immediate benefit is computational: RM reads a shorter sequence at each step and can handle computations whose flattened CoT trace would exceed the context budget.

But the learnability question is subtler. Setting aside the context-length and compute benefits, does RM learn better because it hides the parts of the complete sequence that are irrelevant to the current subproblem? The answer depends on the kind of generalization. \red{On i.i.d.\ samples from the training generator, RM needs fewer samples to reach high accuracy but CoT catches up with enough data (\Cref{sec:exp-iid-compositional,fig:iid-sample-complexity}); in distribution, CoT is not punished for seeing too much.} Out of domain, the conclusion reverses: CoT collapses on traces only modestly longer than training, while RM stays accurate across much longer traces (\Cref{sec:exp-length-depth,tab:per-bin-generalization}).

In \Cref{sec:id-generalization}, we formalize the in-distribution result via the usual generality-over-specificity argument. CoT is the more general learner: it reads the complete sequence rather than enforcing the context isolation that RM builds in. This generality is cheap in distribution because the recursive rule is compactly contained inside CoT: a CoT Transformer can reconstruct the sequence RM would have seen and apply the same next-token rule, using only constant overhead in depth and parameter count. The standard IID description-length bound therefore changes only by a constant factor, so recursion offers at most a marginal statistical advantage.

OOD is where specificity starts to matter. The same generality that lets CoT contain the recursive rule also lets its attention range over the whole flattened trace at each prediction step. Tokens outside the current subproblem can form easy-to-read patterns---nearby intermediate values, repeated formatting, regularities in where answers appear---that are predictive in the training domain without being part of the subproblem's actual rule. For example, an inner subproblem may return a value that remains visible in the flattened trace and happens to equal the final answer on the training domain, so CoT can learn to copy it (\Cref{sec:exp-case-studies,tab:robustness}). RM's context isolation hides such cues by construction.

\Cref{sec:ood} formalizes this danger via the Minimum Description Length (MDL) view of learning~\citep{rissanen1978modeling,grunwald2007minimum}. The simplicity bias that makes generality cheap in distribution does not distinguish the intended mechanism from a shorter rule over the complete sequence that merely fits training---a shortcut. The desired counterproperty is invariance~\citep{peters2016causal,arjovsky2019invariant}: if the correct answer for a subproblem depends only on the subproblem's own description, then changing the surrounding sequence should not change the prediction. RM satisfies this by construction; CoT can represent the invariant rule, but can also represent shorter non-invariant rules that fit the same training examples. We prove that a CoT model can generalize IID perfectly over complete traces and still fail when asked to solve that subproblem alone---it has learned how the answer is exposed in the trace, not how to solve the subproblem.

This suggests that out-of-domain reasoning needs specificity, not just coverage --- the bitter lesson.

\input{sections/sec2_background}
\input{sections/sec3_setup}
\input{sections/sec4_iid}
\input{sections/sec5_ood}
\input{sections/sec5_experiments}

\section{Conclusion}
\label{sec:conclusion}

\red{Recursive language models can be more sample-efficient in distribution, but they do not give an unbounded IID advantage.
CoT can simulate the recursive rule with little statistical cost and catches up once enough data is available.}
The separation appears out of domain because the broader CoT view can make a wrong rule easier to describe than the intended subproblem rule.
Our bitter lesson for out-of-domain reasoning is that coverage can hurt: seeing more can make the wrong rule easier to learn.
Reasoning beyond the training domain requires more than containing the right rule; it requires the specificity that makes the learner use it.

\section*{Acknowledgments}
This work is supported by DARPA AIQ under Agreement No. HR00112520023, NSF CAREER Award 2544658 and OpenAI Superalignment Fast Grant.

\bibliographystyle{plainnat}
\bibliography{iclr2026_conference}

%%%%%%%%%%%%%%%%%%%%%%%%%%%%%%%%%%%%%%%%%%%%%%%%%%%%%%%%%%%%%%%%%%%%%%%%%%%%%%%
%%%%%%%%%%%%%%%%%%%%%%%%%%%%%%%%%%%%%%%%%%%%%%%%%%%%%%%%%%%%%%%%%%%%%%%%%%%%%%%
% APPENDIX
%%%%%%%%%%%%%%%%%%%%%%%%%%%%%%%%%%%%%%%%%%%%%%%%%%%%%%%%%%%%%%%%%%%%%%%%%%%%%%%
%%%%%%%%%%%%%%%%%%%%%%%%%%%%%%%%%%%%%%%%%%%%%%%%%%%%%%%%%%%%%%%%%%%%%%%%%%%%%%%
\newpage
\appendix

\section*{Appendix Table of Contents}
\startcontents[appendix]
\printcontents[appendix]{l}{1}{\setcounter{tocdepth}{2}}

\input{sections/appendix_related}
\input{sections/appendix_compression}
\input{sections/appendix_transformer}
\input{sections/appendix_simulation}
\input{sections/appendix_iid}
\input{sections/appendix_proofs}
\input{sections/appendix_experiments}

%%%%%%%%%%%%%%%%%%%%%%%%%%%%%%%%%%%%%%%%%%%%%%%%%%%%%%%%%%%%%%%%%%%%%%%%%%%%%%%
% NeurIPS Paper Checklist (commented out for camera-ready draft)
%%%%%%%%%%%%%%%%%%%%%%%%%%%%%%%%%%%%%%%%%%%%%%%%%%%%%%%%%%%%%%%%%%%%%%%%%%%%%%%
% \newpage
% \input{sections/checklist}

\end{document}

%% file: math_commands.tex
\usepackage{amsmath,amsfonts,bm}

\def\eqref#1{equation~\ref{#1}}
\def\1{\bm{1}}

\DeclareMathAlphabet{\mathsfit}{\encodingdefault}{\sfdefault}{m}{sl}
\SetMathAlphabet{\mathsfit}{bold}{\encodingdefault}{\sfdefault}{bx}{n}

\newcommand{\concat}{\mathbin{\Vert}}

%% file: sections/sec2_background.tex
\section{Background: Why Learning Works, and the Sweet Lesson}
\label{sec:id-learning}

A recurring lesson of machine learning is that we often do not need to build the right structure into the model by hand. A broad, scalable function class can be enough, as long as it \emph{covers} a good rule and learning has a bias toward finding a simple one.

The simplest version of this argument is the finite-class bound. A standard IID guarantee for a finite class $\mathcal{F}$ depends on the size of the class only through $\log|\mathcal{F}|$, up to accuracy and confidence factors. Thus if a generic class $\mathcal{F}_{\mathrm{gen}}$ contains an expert class $\mathcal{F}_{\mathrm{expert}}$ and $|\mathcal{F}_{\mathrm{gen}}|=10^6|\mathcal{F}_{\mathrm{expert}}|$, then replacing the expert class by the generic one adds only $\log_2 10^6\approx 20$ to the usual log-class-size term. A broad class can contain many irrelevant predictors without making IID learning impossible, as long as it covers a good rule.

We now move from class size to description length, which lets us compare individual predictors rather than whole classes.

\subsection{Occam's Razor, Simplicity Bias, and Minimum Description Length}

Learning from finite data rests on a preference for simplicity: regularities in the data make it possible to describe the observations more concisely than by listing them. We therefore need a way to assign a description length to each predictor.

\begin{definition}[Complexity Measure]
\label{def:complexity-measure}
A \emph{complexity measure} is a function $\ell$ assigning each predictor $f$ a score $\ell(f)\in\mathbb{R}_{\ge0}\cup\{+\infty\}$ such that $\sum_f 2^{-\ell(f)}\le1$. We write $|f|_\ell:=\ell(f)$.
\end{definition}

The term \emph{description length} is justified by the Kraft condition in the definition: it is the standard prefix-code condition that lets $|f|_\ell$ be read as the number of bits needed to describe $f$. Thus a complexity measure can also be viewed as a \emph{description language}; we use the two terms interchangeably. The key consequence is scarcity: at most $2^k$ predictors can have description length at most $k$ (see \Cref{sec:appendix-compression} for the code interpretation). Thus $\ell$ encodes a built-in preference: before seeing the data, shorter descriptions are treated as more plausible~\citep{shalev2014understanding}.

This definition includes several standard complexity measures. \red{A traditional hypothesis class is recovered by assigning finite length to predictors in the class and length $+\infty$ to predictors outside it. For a finite class $\mathcal{F}$, the uniform choice gives length $\log_2|\mathcal{F}|$ to each predictor in $\mathcal{F}$.} Parameter count gives another example: if a model $f_\theta$ is stored with $b$ bits per parameter, then its description length is proportional to $b|\theta|$. Norm-based measures are also common proxies for simplicity, motivated by explicit regularization and by the implicit bias of gradient descent toward lower-complexity interpolating solutions~\citep{neyshabur2015search,gunasekar2017implicit,soudry2018implicit}. Formally, continuous parameter classes require a discretization or coding scheme; throughout, our bounds apply to the resulting countable set of predictors.

\noindent\textbf{MDL learning.}
Once predictors have description lengths, the MDL rule is immediate: among all predictors consistent with the observed data, select one with the shortest description. This is the Minimum Description Length (MDL) learner~\citep{rissanen1978modeling,grunwald2007minimum}.

Let $\Omega$ be the input space and let $\mathcal{X}\subseteq\Omega$ be the observed inputs. For two predictors $f_1,f_2$, write $f_1\equiv_{\mathcal{X}} f_2$ if $f_1(\mathbf{x})=f_2(\mathbf{x})$ for all $\mathbf{x}\in\mathcal{X}$.

\begin{definition}[MDL Learner]
\label{def:mdl-learner}
\red{Given a complexity measure $\ell$, target $f^*$, and observed inputs $\mathcal{X}$, the \emph{MDL learner} returns}
\begin{align}
    \mathsf{MDL}_\ell(\mathcal{X},f^*)=\arg\min_{f:\,f\equiv_{\mathcal{X}} f^*}|f|_\ell .
\end{align}
\end{definition}

\red{We assume there is at least one finite-length predictor consistent with the observations; in the realizable setting below, this holds because $|f^*|_\ell<+\infty$.} If several predictors have the same shortest length, we use a fixed deterministic tie-breaking rule; equivalently, one can refine $\ell$ once and for all so that the MDL output is unique. We denote this output by $\hat f$.

\begin{theorem}[Realizable MDL Generalization Bound]
\label{thm:mdl-gen-bound}
\red{Fix a complexity measure $\ell$, a target $f^*$ with $|f^*|_\ell<+\infty$, and $\delta\in(0,1)$.} Let $\mathcal{X}=\{\mathbf{x}^{(1)},\ldots,\mathbf{x}^{(n)}\}$ be $n$ i.i.d.\ samples from $\mathcal{D}$ on $\Omega$, and write $\mathcal{L}_{\mathcal{D}}(h)=\Pr_{\mathbf{x}\sim\mathcal{D}}[h(\mathbf{x})\neq f^*(\mathbf{x})]$. \red{For $\hat f=\mathsf{MDL}_\ell(\mathcal{X},f^*)$,} with probability at least $1-\delta$,
\begin{align}
\label{eq:mdl-bound-interp}
\mathcal{L}_{\mathcal{D}}(\hat f)
&\le \frac{|\hat f|_\ell\ln2+\ln(1/\delta)}{n}
\le \frac{|f^*|_\ell\ln2+\ln(1/\delta)}{n}.
\end{align}
\end{theorem}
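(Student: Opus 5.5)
This is an Occam/union-bound argument over the countable set of finite-length predictors, with confidence split according to the Kraft weights $2^{-|f|_\ell}$. The statement is data-dependent (the bound involves $|\hat f|_\ell$), so I will prove a uniform statement over all predictors rather than analyze $\hat f$ directly.

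\textbf{Step 1 (per-predictor bound).} Fix a predictor $f$ with $|f|_\ell<\infty$ and set
\[
\epsilon_f=\frac{|f|_\ell\ln2+\ln(1/\delta)}{n}.
\]
Suppose $\mathcal{L}_{\mathcal{D}}(f)>\epsilon_f$. Since the samples are i.i.d., the probability that $f$ agrees with $f^*$ on all of $\mathcal{X}$ is
\[
(1-\mathcal{L}_{\mathcal{D}}(f))^n\le e^{-n\epsilon_f}=\delta\,2^{-|f|_\ell}.
\]

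\textbf{Step 2 (union bound).} Predictors with $|f|_\ell=+\infty$ contribute nothing. The remaining predictors form a countable family, because the Kraft sum $\sum_f 2^{-\ell(f)}\le1$ forces only countably many to have finite length. Summing Step 1 over this family and using the Kraft condition from \Cref{def:complexity-measure} gives
\[
\Pr\bigl[\exists f:\ f\equiv_{\mathcal{X}} f^*,\ \mathcal{L}_{\mathcal{D}}(f)>\epsilon_f\bigr]\le\delta\sum_f 2^{-|f|_\ell}\le\delta.
\]
So with probability at least $1-\delta$, every predictor consistent with the sample satisfies $\mathcal{L}_{\mathcal{D}}(f)\le\epsilon_f$.

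\textbf{Step 3 (apply to the MDL output).} By \Cref{def:mdl-learner}, $\hat f$ is consistent with $f^*$ on $\mathcal{X}$, so on the good event the first inequality holds for $f=\hat f$. This step is legitimate even though $\hat f$ depends on the sample, because the bound from Step 2 holds simultaneously for all consistent predictors. The deterministic tie-breaking rule plays no role.

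\textbf{Step 4 (second inequality).} The target $f^*$ is itself consistent with the sample and has finite length. Minimality of $\hat f$ therefore gives $|\hat f|_\ell\le|f^*|_\ell$, which yields the second inequality for every sample.

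The only step needing care is Step 2: the union bound must be justified over an infinite, sample-dependent choice. I handle this by bounding all consistent predictors uniformly, with the Kraft weights keeping the total failure probability at most $\delta$.
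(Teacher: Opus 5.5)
Your proposal is correct and follows the paper's proof essentially step for step. Both use the per-predictor bound $(1-p)^n\le e^{-np}$ at the threshold $\epsilon_h=(|h|_\ell\ln2+\ln(1/\delta))/n$, then a Kraft-weighted union bound that makes the guarantee hold for all consistent finite-length predictors at once, and then apply it to $\hat f$ using consistency and $|\hat f|_\ell\le|f^*|_\ell$. Your remark that the Kraft condition allows only countably many finite-length predictors is a point the paper leaves implicit, and it is what justifies the countable union bound.
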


The proof is the standard realizable Occam argument, included in \Cref{sec:proof-mdl-gen}: a wrong predictor is unlikely to fit all samples, and the Kraft inequality lets us union bound over predictors. The second inequality connects the bound to the target rule. Since the true target $f^*$ is itself a \red{finite-length consistent candidate}, MDL returns a predictor no longer than $f^*$. Thus IID learning pays for a short consistent rule, not for every rule \red{with finite description length}. If the target has a $k$-bit description, then $O(k/\epsilon)$ samples suffice for IID error $\epsilon$, up to confidence terms.

\subsection{Sweet Lesson: Coverage over Specificity in Distribution}
\label{sec:sweet-lesson}
\label{sec:id-generalization-background}

Think of $\ell_A$ as a specialized description language and $\ell_B$ as a more general one. The general language may describe many additional predictors, including irrelevant ones. For IID prediction, this extra coverage is harmless if every rule that was concise under $\ell_A$ is still concise under $\ell_B$.

We say that $\ell_B$ \emph{dominates} $\ell_A$ with constant factor $C\ge1$ (written $\ell_A\preceq\ell_B$) if
\begin{align}
\label{eq:language-domination}
    |f|_{\ell_B}\le C|f|_{\ell_A}
\end{align}
for every predictor $f$ with finite $\ell_A$-length. This is the description-length version of moving from an expert class to a generic function approximator: the generic language may express more functions, but it does not make the expert solutions much longer to describe. For parameter-count codes, this condition holds when every specialized predictor can be implemented in the general architecture with at most a constant-factor increase in parameters.

\begin{corollary}[Expressiveness Is Cheap for IID Prediction]
\label{cor:id-simulation}
\label{cor:expressiveness-free}
\red{If $\ell_A\preceq\ell_B$ and $\hat f_A=\mathsf{MDL}_{\ell_A}(\mathcal{X},f^*)$, $\hat f_B=\mathsf{MDL}_{\ell_B}(\mathcal{X},f^*)$, then}
\begin{align}
\label{eq:length-bound}
|\hat f_B|_{\ell_B}\le C|\hat f_A|_{\ell_A}.
\end{align}
Hence the IID upper bound under the broader language changes only by a constant factor in the description-length term, and so does the corresponding sample-complexity upper bound, up to confidence terms.
\end{corollary}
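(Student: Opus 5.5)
The inequality follows by chaining the minimality of $\hat f_B$ with the domination hypothesis, using $\hat f_A$ as an explicit competitor in the $\ell_B$ minimization. The key observation is that both MDL learners minimize over the \emph{same} feasible set. That set is $\{f: f\equiv_{\mathcal{X}} f^*\}$, which depends only on the observed inputs and the target, not on the description language. In particular, $\hat f_A$ is consistent with $f^*$ on $\mathcal{X}$ by definition of $\mathsf{MDL}_{\ell_A}$, so it is a feasible candidate for $\mathsf{MDL}_{\ell_B}$.

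The steps, in order, are as follows.

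\textbf{Finiteness of $|\hat f_A|_{\ell_A}$.} By the standing realizability assumption $|f^*|_{\ell_A}<+\infty$, and $f^*$ is itself feasible. Minimality then gives $|\hat f_A|_{\ell_A}\le|f^*|_{\ell_A}<+\infty$.

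\textbf{Applying domination.} Since $\hat f_A$ has finite $\ell_A$-length, \eqref{eq:language-domination} applies to it and gives $|\hat f_A|_{\ell_B}\le C|\hat f_A|_{\ell_A}$.

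\textbf{Minimality of $\hat f_B$.} Because $\hat f_A$ is feasible for the $\ell_B$ problem and $\hat f_B$ minimizes $\ell_B$-length over that set, $|\hat f_B|_{\ell_B}\le|\hat f_A|_{\ell_B}$. Tie-breaking is irrelevant here, since any minimizer attains the minimal value. Chaining the last two inequalities yields \eqref{eq:length-bound}.

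For the ``hence'' part, I would apply \Cref{thm:mdl-gen-bound} to $\ell_B$. Its first inequality gives, with probability at least $1-\delta$,
\[
\mathcal{L}_{\mathcal{D}}(\hat f_B)\le\frac{|\hat f_B|_{\ell_B}\ln2+\ln(1/\delta)}{n}\le\frac{C|\hat f_A|_{\ell_A}\ln2+\ln(1/\delta)}{n}.
\]
The right-hand side is at most $C$ times the $\ell_A$ bound, since $C\ge1$ and the confidence term only shrinks relative to $C\ln(1/\delta)$. Replacing $|\hat f_A|_{\ell_A}$ by $|f^*|_{\ell_A}$ gives the target-based version. Solving for $n$ to reach error $\epsilon$ then shows that the required sample size grows by at most a factor $C$ in the description-length term. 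Note that \Cref{thm:mdl-gen-bound} requires $|f^*|_{\ell_B}<+\infty$, which follows from domination applied to $f^*$.

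There is no real obstacle. The only point needing care is checking that domination is invoked only on a predictor of finite $\ell_A$-length, which the first step ensures.
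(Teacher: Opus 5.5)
Your proof is correct and follows the paper's argument: $\hat f_A$ is feasible for the $\ell_B$-MDL problem, domination gives $|\hat f_A|_{\ell_B}\le C|\hat f_A|_{\ell_A}$, and minimality of $\hat f_B$ gives $|\hat f_B|_{\ell_B}\le|\hat f_A|_{\ell_B}$. The paper leaves two details implicit that you make explicit: the check that $|\hat f_A|_{\ell_A}$ is finite, which is needed to invoke domination, and the derivation of the constant-factor change in the bound of \Cref{thm:mdl-gen-bound}.
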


Indeed, $\hat f_A$ fits the same training data, so it is a feasible candidate for the $\ell_B$-MDL problem. Domination gives $|\hat f_A|_{\ell_B}\le C|\hat f_A|_{\ell_A}$, and the $\ell_B$-MDL output can only be no longer than this candidate.

\noindent\textbf{Sweet lesson.}
This is the sweet side of Sutton's bitter lesson~\citep{sutton2019bitter}: for IID prediction, coverage is more important than perfect specificity. A more general language is not penalized merely because it can express more functions; the IID guarantee pays for the compact rule that learning returns.

There is one important limitation. The corollary compares risks on the training distribution; it does not say that MDL selects the same rule under the two languages. Two predictors can both fit the training data and receive comparable IID guarantees, while disagreeing elsewhere. That difference is invisible to the IID bound, but it is exactly what matters once we ask whether the learned rule works beyond the training distribution.

%% file: sections/sec3_setup.tex
\section{Learning Recursive Language Models}
\label{sec:formal-model}
\label{sec:setup}
\label{sec:modularity-lm}
\label{sec:ar-setup}
\label{sec:recursive-model}
\label{sec:visible-context-setup}

Recursive language models are our test case for this tension. They let the model open a local context for a subproblem, solve it there, and return only the answer to the parent computation. This section sets up that execution model and the learning problems we compare.

\subsection{Autoregressive CoT and Recursive Models}

\noindent\textbf{Autoregressive generation.}
Let $\Sigma$ be a finite vocabulary and set $\Omega := \Sigma^*$. We write $\epsilon$ for the empty sequence and $\concat$ for concatenation. For a parameter value $\theta$, a next-token predictor $f_\theta : \Omega \to \Sigma$ maps a sequence to the next token. Starting from a prompt $\mathbf{x}_0 \in \Omega$, autoregressive generation produces $\mathbf{x}_{t+1} = \mathbf{x}_t \concat f_\theta(\mathbf{x}_t)$, where $\mathbf{x}_t$ is the sequence after $t$ prediction steps. When no time index is shown, $\mathbf{x}$ denotes an arbitrary sequence. In chain-of-thought (CoT) reasoning~\citep{nye2021show,wei2022chain}, the complete sequence $\mathbf{x}_t$ is visible to the predictor.

\noindent\textbf{Recursive generation.}
The \emph{recursive model}~\citep{yang2025recursive} changes what the model reads at each step. When it reaches a subproblem, it starts a new instance of the same predictor, with the same weights but a new isolated context initialized by that subproblem. This child instance may solve the subproblem directly or make further recursive calls. When it returns, only the answer is passed back to the parent context; the child's scratch work is recorded in the complete sequence but is not copied into the parent.

Equivalently, the recursive model maintains a stack of \emph{frames}. The top frame is the \emph{active frame}, and it is the only text shown to the running model instance. Reserve four distinct control tokens $\Sigma_{\mathrm{ctrl}} := \{\CALL,\UNCALL,\ANSWER,\UNANSWER\} \subseteq \Sigma$, and write $\Sigma_{\mathrm{ord}} := \Sigma \setminus \Sigma_{\mathrm{ctrl}}$. A call block $\CALL\,\mathbf{q}\,\UNCALL$ with $\mathbf{q} \in \Sigma_{\mathrm{ord}}^*$ pushes a child frame initialized with $\mathbf{q}$; a return block $\ANSWER\,\mathbf{a}\,\UNANSWER$ with $\mathbf{a} \in \Sigma_{\mathrm{ord}}^*$ pops the child frame and appends $\mathbf{a}$ to its parent. All other tokens append to the top frame.

\begin{figure}[t]
\centering
\begin{minipage}{\linewidth}
\centering
\begin{tcolorbox}[
  enhanced, colback=Gray!2, colframe=Gray!50!black, boxrule=0.4pt, arc=2pt,
  title={\bfseries\sffamily (A) Recursive Execution},
  fonttitle=\footnotesize,
  width=\linewidth, left=4pt, right=4pt, top=.5pt, bottom=.5pt, before skip=2pt, after skip=3pt
]
\centering
\begin{tikzpicture}[
  font=\scriptsize,
  frame/.style={rounded corners=3pt, draw=Gray!45!black, fill=Gray!4,
    line width=0.4pt, inner xsep=4.8pt, inner ysep=1.8pt},
  frame0/.style={frame, draw=BrickRed!45!black, fill=BrickRed!7},
  frame1/.style={frame, draw=NavyBlue!45!black, fill=NavyBlue!7},
  frame2/.style={frame, draw=OliveGreen!50!black, fill=OliveGreen!8},
  after/.style={rounded corners=2.5pt, draw=Gray!50!black, fill=white,
    line width=0.35pt, inner xsep=3.2pt, inner ysep=1.75pt, align=left},
  depth/.style={font=\fontsize{8.3}{9.0}\selectfont\sffamily\bfseries, text=black, anchor=west},
  stepbadge/.style={overlay, font=\tiny\sffamily\bfseries, text=black,
    fill=white, inner xsep=1pt, inner ysep=.2pt, anchor=south east},
  callarrow/.style={overlay, -{Stealth[length=2mm]}, thick, NavyBlue!65!black},
  retarrow/.style={overlay, -{Stealth[length=2mm]}, thick, dashed, BrickRed!70!black},
  lab/.style={font=\tiny\sffamily, fill=Gray!2, inner sep=1pt, text=black}
]
\newcommand{\rlmfield}{\fontsize{7.4}{8.0}\selectfont\sffamily}
\newcommand{\rlmtok}{\fontsize{7.4}{8.0}\selectfont\ttfamily}
\newcommand{\rlmexpr}{\fontsize{8.7}{9.0}\selectfont\ttfamily}
\node[depth] (rtitle) at (0,0) {Recursion depth 0 \hspace{1em}{\normalfont\rlmexpr add(square(double(2)),1)}};
\node[after, text width=.84\linewidth, below=.6mm of rtitle.south west, anchor=north west] (rcall) {%
  {\color{black}\rlmfield input prompt / visible context: }{\color{NavyBlue!70!black}\rlmtok add(square(double(2)),1)}\\[0pt]
  {\color{black}\rlmfield LLM generated: }{\color{BrickRed!70!black}\rlmtok x := <call> square(double(2)) </call>}
};
\node[after, text width=.84\linewidth, below=1mm of rcall.south west, anchor=north west] (rafter) {%
  {\color{black}\rlmfield input prompt / visible context: }{\color{NavyBlue!70!black}\rlmtok add(square(double(2)),1) \quad x := 6}\\[0pt]
  {\color{black}\rlmfield LLM generated: }{\color{BrickRed!70!black}\rlmtok y := 1 \quad return <return> 7 </return>}
};

\node[depth, below=3.2mm of rafter.south west, anchor=north west, xshift=.42cm] (stitle) {Recursion depth 1 \hspace{1em}{\normalfont\rlmexpr square(double(2))}};
\node[after, text width=.70\linewidth, below=.6mm of stitle.south west, anchor=north west] (scall) {%
  {\color{black}\rlmfield visible context: }{\color{NavyBlue!70!black}\rlmtok square(double(2))}\\[0pt]
  {\color{black}\rlmfield LLM generated: }{\color{BrickRed!70!black}\rlmtok x := <call> double(2) </call>}
};
\node[after, text width=.70\linewidth, below=1mm of scall.south west, anchor=north west] (safter) {%
  {\color{black}\rlmfield visible context: }{\color{NavyBlue!70!black}\rlmtok square(double(2)) \quad x := 4}\\[0pt]
  {\color{black}\rlmfield LLM generated: }{\color{BrickRed!70!black}\rlmtok return <return> 6 </return>}
};

\node[depth, below=3.2mm of safter.south west, anchor=north west, xshift=.42cm] (dtitle) {Recursion depth 2 \hspace{1em}{\normalfont\rlmexpr double(2)}};
\node[after, text width=.54\linewidth, below=.6mm of dtitle.south west, anchor=north west] (dret) {%
  {\color{black}\rlmfield visible context: }{\color{NavyBlue!70!black}\rlmtok double(2)}\\[0pt]
  {\color{black}\rlmfield LLM generated: }{\color{BrickRed!70!black}\rlmtok x := 2 \quad return <return> 4 </return>}
};

\node[stepbadge] at ([xshift=-.7mm,yshift=.4mm]rcall.south east) {before call};
\node[stepbadge] at ([xshift=-.7mm,yshift=.4mm]rafter.south east) {after return 6};
\node[stepbadge] at ([xshift=-.7mm,yshift=.4mm]scall.south east) {before call};
\node[stepbadge] at ([xshift=-.7mm,yshift=.4mm]safter.south east) {after return 4};
\node[stepbadge] at ([xshift=-.7mm,yshift=.4mm]dret.south east) {return 4};

\coordinate (rwestpad) at ([xshift=-2mm]rcall.west);
\coordinate (reastpad) at ([xshift=3.2mm]rcall.east);
\coordinate (swestpad) at ([xshift=-2mm]scall.west);
\coordinate (seastpad) at ([xshift=3mm]scall.east);
\coordinate (dwestpad) at ([xshift=-2mm]dret.west);
\coordinate (deastpad) at ([xshift=2.8mm]dret.east);

\begin{scope}[on background layer]
  \node[frame0, fit=(rtitle)(rcall)(rafter)(rwestpad)(reastpad)] (root) {};
  \node[frame1, fit=(stitle)(scall)(safter)(swestpad)(seastpad)] (square) {};
  \node[frame2, fit=(dtitle)(dret)(dwestpad)(deastpad)] (double) {};
\end{scope}

\coordinate (rcallgen) at ($(rcall.south west)!0.25!(rcall.north west)$);
\coordinate (scallvis) at ($(scall.south west)!0.64!(scall.north west)$);
\coordinate (scallgen) at ($(scall.south west)!0.25!(scall.north west)$);
\coordinate (dretvis) at ($(dret.south west)!0.64!(dret.north west)$);
\coordinate (dretgen) at ($(dret.south east)!0.25!(dret.north east)$);
\coordinate (saftervis) at ($(safter.south east)!0.64!(safter.north east)$);
\coordinate (saftergen) at ($(safter.south east)!0.25!(safter.north east)$);
\coordinate (raftervis) at ($(rafter.south east)!0.64!(rafter.north east)$);

\draw[callarrow]
  (rcallgen) to[out=186,in=174,looseness=1.12]
  node[lab, left, pos=.55] {call} (scallvis);
\draw[callarrow]
  (scallgen) to[out=186,in=174,looseness=1.12]
  node[lab, left, pos=.55] {call} (dretvis);
\draw[retarrow]
  (dretgen) to[out=10,in=-20,looseness=1.35]
  node[lab, left, pos=.55] {return 4} (saftervis);
\draw[retarrow]
  (saftergen) to[out=10,in=-20,looseness=1.35]
  node[lab, left, pos=.55] {return 6} (raftervis);
\end{tikzpicture}
\end{tcolorbox}

\begin{tcolorbox}[
  enhanced, colback=Gray!2, colframe=Gray!50!black, boxrule=0.4pt, arc=2pt,
  title={\bfseries\sffamily (B) Flattened Trace},
  fonttitle=\footnotesize,
  width=\linewidth, left=4pt, right=4pt, top=.5pt, bottom=.5pt, before skip=2pt, after skip=3pt
]
\noindent\begin{minipage}{.985\linewidth}
{\fontsize{7.1}{7.45}\selectfont\ttfamily\raggedright
add(square(double(2)),1)\quad
x := {\color{BrickRed!70!black}<call>} square(double(2)) {\color{BrickRed!70!black}</call>}\quad
x := {\color{BrickRed!70!black}<call>} double(2) {\color{BrickRed!70!black}</call>}\\[-.35mm]
x := 2 \quad return {\color{BrickRed!70!black}<return>} 4 {\color{BrickRed!70!black}</return>}\quad
x := 4 \quad return {\color{BrickRed!70!black}<return>} 6 {\color{BrickRed!70!black}</return>}\quad
x := 6 \quad y := 1 \quad return {\color{BrickRed!70!black}<return>} 7 {\color{BrickRed!70!black}</return>}
\par}
\end{minipage}
\end{tcolorbox}

\begin{tcolorbox}[
  enhanced, colback=Gray!2, colframe=Gray!50!black, boxrule=0.4pt, arc=2pt,
  title={\bfseries\sffamily (C) Active-Frame Map},
  fonttitle=\footnotesize,
  width=\linewidth, left=4pt, right=4pt, top=.5pt, bottom=.5pt, before skip=2pt, after skip=2pt
]
{\fontsize{8.0}{8.15}\selectfont
\begin{algorithmic}[1]
\REQUIRE Complete sequence $\mathbf{x} = (x_1, \ldots, x_n) \in \Sigma^*$
\STATE $\mathbf{S} \leftarrow [\epsilon]$
\FOR{$i = 1, \ldots, n$}
  \STATE $\mathbf{S}[-1] \leftarrow \mathbf{S}[-1] \concat x_i$
  \IF{$\mathbf{S}[-1]$ ends with $\CALL\,\mathbf{q}\,\UNCALL$ for some $\mathbf{q} \in \Sigma_{\mathrm{ord}}^*$}
    \STATE Remove $\CALL\,\mathbf{q}\,\UNCALL$ from $\mathbf{S}[-1]$
    \STATE $\mathbf{S} \leftarrow \mathbf{S} \concat [\mathbf{q}]$
  \ELSIF{$\mathbf{S}[-1]$ ends with $\ANSWER\,\mathbf{a}\,\UNANSWER$ for some $\mathbf{a} \in \Sigma_{\mathrm{ord}}^*$, and $|\mathbf{S}| \ge 2$}
    \STATE Pop $\mathbf{S}[-1]$
    \STATE $\mathbf{S}[-1] \leftarrow \mathbf{S}[-1] \concat \mathbf{a}$
  \ENDIF
\ENDFOR
\STATE \textbf{return} $\mathbf{S}[-1]$
\end{algorithmic}}
\end{tcolorbox}
\end{minipage}
\caption{\textbf{Recursive expression evaluation.}
The example evaluates $\texttt{add}(\texttt{square}(\texttt{double}(2)),1)$ with arithmetic modulo $10$. The same computation can be viewed either as nested recursive calls or as one flattened trace.}
\label{fig:recursive-example}
\label{fig:obs-function}
\vspace{-0.4em}
\end{figure}

\noindent\textbf{Observation-function view.}
The stack view leaves two objects to keep distinct: the complete sequence accumulated token by token, and the context shown before the next prediction. In \Cref{fig:recursive-example}, panel (B) is the complete sequence, while panel (A) shows the recursive call structure and the frame visible to the predictor. We encode the visible context as a function of the complete sequence. Let $\mask(\mathbf{x})$ be the context shown when the accumulated sequence is $\mathbf{x}$. A visible-context predictor $g_\theta : \Omega \to \Sigma$ generates $\mathbf{x}_{t+1} = \mathbf{x}_t \concat g_\theta(\mask(\mathbf{x}_t))$. The next token is still appended to the complete sequence; $\mask$ only controls what the predictor reads before choosing it.

\begin{definition}[Observation Function]
\label{def:context-mask}
An \emph{observation function} is an idempotent map $\mask: \Omega \rightarrow \Omega$ that returns the context visible to the predictor: $\mask \circ \mask = \mask$. Idempotence says that once a sequence has been reduced to its visible context, observing it again does not reveal additional text.
\end{definition}

The identity map gives ordinary CoT. Other choices of $\mask$ represent other context-selection rules, such as sliding windows, retrieval, or context compaction.

For recursive models, the relevant observation function is the active-frame map $\maskrec$. Given a complete sequence $\mathbf{x}$, $\maskrec$ replays the call and return tokens and returns the top frame; \Cref{fig:recursive-example}(C) gives this replay procedure. Thus $\maskrec(\mathbf{x}_t)$ is exactly the active frame read by the recursive model at step $t$. Since the output is already one frame, $\maskrec\circ\maskrec=\maskrec$.

\subsection{Learning with CoT vs Recursive Models} \label{sec:cot-vs-recursive-learning}

The recursive model gives a natural training recipe: use the active frame as input for each next-token prediction. But the same generated sequence can also be used as ordinary CoT data: ignore the stack and train on the complete sequence directly. The question is whether the active-frame input makes the rule easier to learn, or whether CoT can learn the same next-token behavior from complete sequences.

Write $\frcm^*$ for the target rule on active frames, and $\fcot^*=\frcm^*\circ\maskrec$ for the induced target on complete sequences. Let $\mathcal{X}$ be the complete sequences observed in recursive model runs. For each $\mathbf{x}\in\mathcal{X}$, CoT trains on $(\mathbf{x},\fcot^*(\mathbf{x}))$; the recursive model trains on $(\maskrec(\mathbf{x}),\fcot^*(\mathbf{x}))$, equivalently $(\maskrec(\mathbf{x}),\frcm^*(\maskrec(\mathbf{x})))$. Thus both learners use the same recursive runs and next-token labels; only the input shown to the learner changes. This comparison assumes that the CoT data are flattened recursive-model runs, so the labels on $\mathcal{X}$ satisfy $\fcot^*(\mathbf{x})=\frcm^*(\maskrec(\mathbf{x}))$.

\noindent\textbf{MDL comparison.}
For the MDL comparison, fix an arbitrary observation function $\mask$ and assume the complete-sequence target has the form $\fcot^*=\frcm^*\circ\mask$. We use one base complexity measure $\ell$ for next-token rules; the difference between the learners should come from which part of the sequence the predictor reads, not from changing what counts as a simple rule. CoT applies $\ell$ directly to complete-sequence rules $\fcot$. The recursive model first searches for a local rule $\frcm$ and then uses $\frcm\circ\mask$ on complete sequences. Equivalently, it is standard MDL on complete-sequence rules under the following lifted complexity measure.

\begin{definition}[Lifted Complexity Measure]
\label{def:masked-mdl-learner}
\label{def:lifted-complexity-measure}
\label{def:lifted-description-length}
For a complexity measure $\ell$ and observation function $\mask$, define the lifted complexity measure $\ell^\mask$ by
\begin{align}
\label{eq:lifted-description-length}
    |\fcot|_{\ell^{\mask}}
    :=
    \min_{\frcm:\,\fcot=\frcm\circ\mask} |\frcm|_\ell,
\end{align}
with value $+\infty$ if no such $\frcm$ exists.
\end{definition}

This is again a valid complexity measure: choosing one shortest local representative for each finite-length $\fcot$ gives distinct representatives, so the Kraft sum for $\ell^\mask$ is bounded by the Kraft sum for $\ell$. Under $\ell^\mask$, finite-length complete-sequence rules are exactly those whose predictions have the form $\frcm(\mask(\mathbf{x}))$ for some local rule $\frcm$. Rules that cannot be written this way have length $+\infty$. For the remaining rules, the cost is the shortest local rule before composing with $\mask$. Thus lifted MDL minimizes $|\frcm|_\ell$; this is different from minimizing the original length $|\frcm\circ\mask|_\ell$ of the composed complete-sequence rule.

\begin{proposition}[Local Fitting as Lifted MDL]
\label{prop:lifted-mdl-local}
Let $\mask$ be an observation function, let $\fcot^*=\frcm^*\circ\mask$, and write $\mask(\mathcal{X}):=\{\mask(\mathbf{x}):\mathbf{x}\in\mathcal{X}\}$. If deterministic tie-breaking is chosen consistently on the two minimization problems, then
\begin{align}
\label{eq:local-mdl-lifted-mdl}
    \mathsf{MDL}_{\ell^\mask}(\mathcal{X},\fcot^*)
    =
    \mathsf{MDL}_{\ell}(\mask(\mathcal{X}),\frcm^*)\circ\mask .
\end{align}
\end{proposition}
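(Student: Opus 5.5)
The plan is to show that the two minimization problems in \eqref{eq:local-mdl-lifted-mdl} are the same problem after reparametrizing complete-sequence rules by local rules. I will prove this by matching feasible sets and objective values. Throughout, let $\mathcal{G}:=\{\frcm:\ \frcm\equiv_{\mask(\mathcal{X})}\frcm^*\}$ denote the feasible set of the local problem. Let $\mathcal{F}:=\{\fcot:\ \fcot\equiv_{\mathcal{X}}\fcot^*,\ |\fcot|_{\ell^\mask}<+\infty\}$ denote the finite-length feasible set of the lifted problem. Predictors of infinite lifted length can never be selected, because $\fcot^*$ itself is feasible with $|\fcot^*|_{\ell^\mask}\le|\frcm^*|_\ell$. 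Here I assume $|\frcm^*|_\ell<+\infty$, as in the realizable setting.

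\emph{Step 1 (feasibility transfers).} If $\frcm\in\mathcal{G}$, then for every $\mathbf{x}\in\mathcal{X}$ we have $\frcm(\mask(\mathbf{x}))=\frcm^*(\mask(\mathbf{x}))=\fcot^*(\mathbf{x})$. Hence $\frcm\circ\mask\equiv_{\mathcal{X}}\fcot^*$, and $\frcm\circ\mask\in\mathcal{F}$ with $|\frcm\circ\mask|_{\ell^\mask}\le|\frcm|_\ell$.

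Conversely, suppose $\fcot\in\mathcal{F}$ and $\frcm$ is any local rule with $\fcot=\frcm\circ\mask$. Every point of $\mask(\mathcal{X})$ has the form $\mask(\mathbf{x})$ with $\mathbf{x}\in\mathcal{X}$. There $\frcm(\mask(\mathbf{x}))=\fcot(\mathbf{x})=\fcot^*(\mathbf{x})=\frcm^*(\mask(\mathbf{x}))$, so $\frcm\in\mathcal{G}$. Idempotence of $\mask$ is not even needed for this step, since we only evaluate on the image $\mask(\mathcal{X})$.

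\emph{Step 2 (optimal values agree).} Combining the two directions gives
\[
\min_{\fcot\in\mathcal{F}}|\fcot|_{\ell^\mask}=\min_{\fcot\in\mathcal{F}}\ \min_{\frcm:\,\fcot=\frcm\circ\mask}|\frcm|_\ell=\min_{\frcm\in\mathcal{G}}|\frcm|_\ell .
\]
The first equality is \Cref{def:lifted-complexity-measure}. The second holds because the pairs $(\fcot,\frcm)$ with $\fcot\in\mathcal{F}$ and $\fcot=\frcm\circ\mask$ are exactly the pairs $(\frcm\circ\mask,\frcm)$ with $\frcm\in\mathcal{G}$, by Step 1. Let $\hat\frcm=\mathsf{MDL}_\ell(\mask(\mathcal{X}),\frcm^*)$. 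Then $\hat\frcm\circ\mask$ is feasible, and its lifted length is at most $|\hat\frcm|_\ell$, which equals the common minimum. So $\hat\frcm\circ\mask$ is a lifted minimizer.

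\emph{Step 3 (tie-breaking), the main obstacle.} Several local rules may attain the minimum, and distinct minimizers $\frcm\ne\frcm'$ can give the same composite. This happens when they differ only outside the image of $\mask$. So the map $\frcm\mapsto\frcm\circ\mask$ is not injective on minimizers, and equality of the actual selected outputs depends on how ties are broken. I would make the hypothesis ``tie-breaking is chosen consistently'' precise as follows.

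Fix the local tie-breaking order. Among the lifted minimizers, which are exactly $\{\frcm\circ\mask:\frcm\in\mathcal{G},\ |\frcm|_\ell=\text{min}\}$, the lifted rule picks the composite of the first local minimizer in that order. Equivalently, each complete-sequence rule $\fcot$ inherits the rank of its earliest shortest representative. One checks that this is a legitimate deterministic rule on complete-sequence predictors, because every lifted minimizer has at least one minimizing representative. Under this convention the lifted output is $\hat\frcm\circ\mask$ by construction, which yields \eqref{eq:local-mdl-lifted-mdl}.

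An alternative that avoids ordering arguments is to refine $\ell$ once to make its minimizers unique, as allowed after \Cref{def:mdl-learner}, and note that the same argument then gives uniqueness on both sides.
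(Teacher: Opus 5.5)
Your proposal is correct and is essentially the paper's argument. The paper justifies the proposition only by the remark before it: finite-length rules under $\ell^\mask$ are exactly the composites $\frcm\circ\mask$, each costed by its shortest local representative. Your Steps 1--2 make that reparametrization precise, and Step 3 gives a concrete meaning to the paper's ``consistent tie-breaking'' hypothesis.

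One slip is cosmetic: in Step 1, $\frcm\circ\mask\in\mathcal{F}$ needs $|\frcm|_\ell<+\infty$. Since infinite-length local rules never attain the minimum, Steps 2--3 are unaffected.
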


Thus MDL under the lifted measure is the same as first choosing the shortest local rule on the visible contexts $\mask(\mathcal{X})$, then composing it with $\mask$. Any finite lifted-MDL output has the form $\hat\frcm\circ\mask$; the predictor learned before composition is the local rule $\hat\frcm$.

\noindent\textbf{Notation.}
Below, $\fcot$ and $\hat\fcot$ read complete sequences, while $\frcm$ and $\hat\frcm$ read visible contexts. CoT returns $\hat\fcot:=\mathsf{MDL}_{\ell}(\mathcal{X},\fcot^*)$, and the lifted learner returns $\hat\fcot_{\mask}:=\mathsf{MDL}_{\ell^\mask}(\mathcal{X},\fcot^*)$. By the proposition, $\hat\fcot_{\mask}$ can be unpacked as $\hat\frcm\circ\mask$ for a shortest local rule $\hat\frcm$; for recursive models, $\mask=\maskrec$.

%% file: sections/sec4_iid.tex
\section{IID Generalization: Marginal Benefit of Modeling Recursion}
\label{sec:id-generalization}

When training and test sequences come from the same distribution, does recursion make learning easier simply by restricting the predictor to the visible context rather than the complete sequence? We isolate this IID question by first showing that CoT can simulate the recursive view with constant depth and constant-factor parameter overhead, then translating the simulation into the MDL comparison of \Cref{sec:id-learning}.

\paragraph{Expressiveness.}
Let $\mathcal{A}$ be a Transformer architecture formally defined in \Cref{app:transformer}, with depth $L_\mathcal{A}$ and parameter count $W_\mathcal{A}$.\footnote{We assume causal hard attention~\citep{hahn2020theoretical,merrill2022saturated} and product-gated FFNs~\citep{shazeer2020glu}.} The corresponding next-token predictor class is $\mathcal{F}_\mathcal{A}$.

The recursive model applies a visible-context rule $\frcm$ to the active frame $\maskrec(\mathbf{x})$. A CoT predictor instead reads the complete sequence $\mathbf{x}$. The theorem below formalizes the corresponding simulation: a CoT Transformer can recover the active frame and then apply the same rule.

\begin{theorem}[CoT Simulation of the Recursive Model]
\label{thm:cot-simulation}
For every general Transformer architecture $\mathcal{A}$, there exists another architecture $\mathcal{A}'$ with $L_{\mathcal{A}'}=L_\mathcal{A}+O(1)$ and $W_{\mathcal{A}'}=O(W_\mathcal{A})$. For every choice of visible-context recursive rule $\frcm \in \mathcal{F}_\mathcal{A}$, there exists a CoT simulator $f \in \mathcal{F}_{\mathcal{A}'}$ such that $f(\mathbf{x}) = \frcm(\maskrec(\mathbf{x}))$ for every complete sequence $\mathbf{x}\in\Omega$ that can arise during recursive generation (see \Cref{def:legal-prefix} for the formal definition).
\end{theorem}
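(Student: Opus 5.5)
The construction prepends a constant number of ``frame-recovery'' layers to a copy of the architecture $\mathcal{A}$. The first block computes, for every position $i$ of the complete sequence $\mathbf{x}$, whether token $x_i$ belongs to the active frame $\maskrec(\mathbf{x})$. The remaining $L_\mathcal{A}$ layers run $\frcm$ with attention restricted to the positions flagged as visible. To get $W_{\mathcal{A}'}=O(W_\mathcal{A})$, the weights of $\frcm$ are reused verbatim. The only addition is a constant number of extra residual coordinates: a depth counter, a visibility bit, and a re-indexed position. Each original attention layer is then modified so that the visibility bit enters the attention score with a large negative bias on invisible keys. Under hard attention, this makes the layer attend exactly as it would on the sequence $\maskrec(\mathbf{x})$.

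The first step uses legality of $\mathbf{x}$ (\Cref{def:legal-prefix}) to characterize visibility without replaying the stack. Define the stack depth $d_i$ as the number of completed $\UNCALL$ tokens minus the number of completed $\UNANSWER$ tokens up to position $i$. A position $j\le n$ is in the active frame iff three conditions hold. First, $d_j=d_n$. Second, no position $k\in(j,n]$ has $d_k<d_n$, i.e.\ the frame has not been popped and re-pushed since $j$. Third, $j$ is not inside a call/return delimiter block. A returned answer $\mathbf{a}$ also becomes visible in the parent; this is handled by letting answer tokens carry depth $d-1$ after their return completes.

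The depth $d_i$ is a prefix count. With causal attention it can be computed as a uniform average times position, or as a difference of two averages, and then stored. The second condition becomes ``the last position with depth $<d_n$ occurs before $j$''. This reduces to one hard-attention lookup: the query at $n$ retrieves the maximal index $k$ with $d_k<d_n$, via an attention score that is monotone in the index and gated by the depth comparison. That index is then broadcast so each position can compare against it. Because the predictor only needs the output at the last position, the visibility test need only be correct relative to position $n$. Since each layer attends from all positions, I would compute the boundary at every position $i$ as the analogous threshold for $d_i$. A key $j$ is then visible to query $i$ iff $d_j=d_i$ and $j>\text{boundary}(i)$. Both tests are implementable as additive score biases using product-gated FFNs to form the equality and comparison indicators.

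Positional information must also be renumbered: $\frcm$ expects the $r$-th visible token at position $r$. I would compute $r_j$ as the count of visible tokens up to $j$, another prefix average, and substitute it for the positional encoding in the simulated layers. This is where I expect the main obstacle. The simulated block's attention scores must reproduce exactly the hard-attention argmax of $\frcm$ on the compressed sequence, including tie-breaking. The counts must be exact despite averaging, which means relying on the architecture's precision assumptions in \Cref{app:transformer}. The masking bias must dominate every original score, which needs a bound on score magnitudes. I would try first to make masking multiplicative through the product-gated FFN on values, combined with a bias larger than the score range guaranteed by finite precision. Once this layer-by-layer invariant is established, induction over the $L_\mathcal{A}$ layers gives that the residual stream at visible positions equals $\frcm$'s residual stream on $\maskrec(\mathbf{x})$. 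In particular the last position outputs $\frcm(\maskrec(\mathbf{x}))$.
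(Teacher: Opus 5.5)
Your construction has a genuine gap at its core. You keep a single residual stream per position and assume each visible token has one well-defined target state, namely $\frcm$'s state at that token's slot in ``the'' active frame, so that masking plus positional re-indexing suffices. That holds for ordinary tokens written directly into a frame, because a frame's prefix before such a token never changes afterward. It fails for the payloads of call and return blocks.

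Consider a block $\CALL\,q_1\cdots q_m\,\UNCALL$ that has been written up to $q_k$. The replay keeps the unfinished block in the top frame, so the active frame is $\mathbf{U}\,\CALL\,q_1\cdots q_k$. The state needed at $q_k$ is $\frcm$'s state at position $|\mathbf{U}|+1+k$ given that whole prefix. Once $\UNCALL$ arrives, the active frame is the child frame $q_1\cdots q_m$. The state needed at $q_k$ is then the one computed from $q_1\cdots q_k$ alone, at position $k$. Likewise, a returned value $\mathbf{a}$ is first part of $\mathbf{U}_{\mathrm{child}}\,\ANSWER\,\mathbf{a}$ and later part of $\mathbf{U}_{\mathrm{parent}}\,\mathbf{a}$.

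In a causal Transformer the state at position $j$ is a function of $\mathbf{x}_{1:j}$ only. Here $\mathbf{x}_{1:j}$ is a prefix both of a legal input ending inside the open block and of one ending after it closes. These two inputs generally need different states at $j$ at every layer. So a stream holding a single copy of $\frcm$'s state cannot be correct for both, whatever masks, biases, or position indices you use. Letting answer tokens ``carry depth $d-1$ after their return completes'' only changes which keys are visible. It does not change the keys and values already computed at those positions in the child's context.

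The missing idea is to maintain two simulated states per position. One is an owner state for the frame the token eventually belongs to; the other is a pending state for the unfinished block. Independent copies of every block of $\mathcal{A}$ run on each. This is still $O(W_\mathcal{A})$ parameters, but it is not the original weights plus $O(1)$ extra coordinates. A pending query must then combine host-frame keys from the owner lane with block keys from the pending lane, and which lane a payload position should supply depends on the query. The paper handles this with two restricted rightmost-hard attentions, one over each part. It merges them by comparing the two winning scores, with ties going to the block.

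There are also smaller issues. Your visibility test drops the frame's own initializing payload $\mathbf{q}$, since those tokens have depth $d_n-1$. Unless restricted to closed blocks, your third condition also removes an open block that the replay still shows. And when $x_n\in\{\UNCALL,\UNANSWER\}$, position $n$ is not part of $\maskrec(\mathbf{x})$. You therefore need an explicit readout that copies to position $n$ the final owner state of the last active-frame token, or uses the pending state at $n$ when inside an open block.

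Your score-bound penalty for restricted attention and the reindexed positional features do match what the paper uses.
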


The proof is deferred to \Cref{app:proof-cot-simulation}. A key step is to recover the active frame $\maskrec(\mathbf{x})$ from the complete sequence $\mathbf{x}$, which means identifying which call/return blocks are still open. The construction implements this tracking with constant depth and constant-factor learned-parameter overhead. \red{The simulator architecture $\mathcal A'$ is obtained from $\mathcal A$ by a fixed computable construction.} On sequences that cannot arise from the recursive generation process, the simulator may be defined arbitrarily.

\paragraph{CoT dominates recursive models.}
We next specify the description language used for CoT predictors. A finite-precision Transformer realization consists of an architecture $\mathcal A$ and parameters $\theta$. We charge for describing $\mathcal A$, and then charge $b$ bits for each independently stored learned parameter. For a next-token rule $f$, define
\begin{align}
\label{eq:transformer-description-length}
|f|_\ell
:=
\min_{\mathcal A,\theta:\,f_{\mathcal A,\theta}=f}
\left(
|\operatorname{code}(\mathcal A)|+bW_\mathcal A
\right),
\qquad
\min\emptyset:=+\infty.
\end{align}
Here $W_\mathcal A$ is the number of learned scalar parameters in $\mathcal A$. Fix any prefix universal decoder $U$ for finite Transformer architecture specifications, for instance a prefix universal Turing machine whose outputs are parsed as such specifications, and define the architecture code length by $|\operatorname{code}(\mathcal A)|:=\min\{|p|:\,U(p)=\mathcal A\}$.
(Formalized in \Cref{app:proof-iid}.)

For the recursive model, the corresponding description language is the lifted language $\ell^{\maskrec}$ from \Cref{def:lifted-description-length}: first observe the active frame through $\maskrec$, then encode the local Transformer rule. Combining the simulation theorem with the coding convention above gives a constant-factor comparison with the ordinary CoT length.

\begin{corollary}[Comparable Description Length for CoT]
\label{cor:cot-transformer-dominance}
For every complete-sequence predictor $f$ with finite $|f|_{\ell^{\maskrec}}$, there is a CoT predictor $\tilde f$ that agrees with $f$ on the legal recursive-prefix domain considered in \Cref{thm:cot-simulation}, and constants $C,C_0$ independent of the task and training sample such that
\begin{align}
|\tilde f|_\ell\le C\,|f|_{\ell^{\maskrec}}+C_0 .
\end{align}
\end{corollary}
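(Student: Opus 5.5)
We unpack the lifted length, apply \Cref{thm:cot-simulation} to the resulting local rule, and then re-encode the simulator in the description language of \eqref{eq:transformer-description-length}.

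\textbf{Step 1: unpack the lifted length.} Let $f$ have finite $|f|_{\ell^{\maskrec}}$. By \Cref{def:lifted-description-length}, there is a local rule $\frcm$ with $f=\frcm\circ\maskrec$ and $|\frcm|_\ell=|f|_{\ell^{\maskrec}}$. Since $|\frcm|_\ell<\infty$, the definition \eqref{eq:transformer-description-length} gives a minimizing pair $(\mathcal A,\theta)$ with $f_{\mathcal A,\theta}=\frcm$. It satisfies
\begin{align}
|\frcm|_\ell=|\operatorname{code}(\mathcal A)|+bW_{\mathcal A}.
\end{align}

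\textbf{Step 2: simulate.} Apply \Cref{thm:cot-simulation} to $\mathcal A$ and $\frcm\in\mathcal F_{\mathcal A}$. This yields an architecture $\mathcal A'$ with $L_{\mathcal A'}=L_{\mathcal A}+O(1)$ and $W_{\mathcal A'}\le c_1W_{\mathcal A}+c_2$, where $c_1,c_2$ are absolute constants. It also yields parameters $\theta'$ with $\tilde f:=f_{\mathcal A',\theta'}$ satisfying $\tilde f(\mathbf{x})=\frcm(\maskrec(\mathbf{x}))=f(\mathbf{x})$ on all legal recursive prefixes. This gives the required agreement.

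\textbf{Step 3: bound the code lengths.} Plugging $(\mathcal A',\theta')$ into \eqref{eq:transformer-description-length} gives
\begin{align}
|\tilde f|_\ell\le|\operatorname{code}(\mathcal A')|+bW_{\mathcal A'}.
\end{align}
The parameter term satisfies $bW_{\mathcal A'}\le c_1 bW_{\mathcal A}+bc_2$. For the architecture term, the paper notes that $\mathcal A'$ is obtained from $\mathcal A$ by a fixed computable construction $T$. Let $p$ be a shortest program with $U(p)=\mathcal A$. Prepend to $p$ a fixed self-delimiting program that runs $U$ on the remainder and then applies $T$. This is a program for $\mathcal A'$ of length $|p|+c_T$, so $|\operatorname{code}(\mathcal A')|\le|\operatorname{code}(\mathcal A)|+c_T$. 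This is the standard invariance argument for prefix universal machines.

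Summing the two estimates gives
\begin{align}
|\tilde f|_\ell\le|\operatorname{code}(\mathcal A)|+c_1bW_{\mathcal A}+c_T+bc_2\le C|f|_{\ell^{\maskrec}}+C_0,
\end{align}
with $C=\max(1,c_1)$ and $C_0=c_T+bc_2$. These constants depend only on the construction and on $b$, not on the task or the sample.

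\textbf{Main obstacle.} The delicate point is Step 3's architecture term: $T$ must be uniform, meaning a single fixed computable map that does not depend on $\mathcal A$ beyond its input description. Otherwise the code overhead for $\mathcal A'$ could grow with $\mathcal A$. I would check this against the explicit construction in \Cref{app:proof-cot-simulation}. That construction adds $O(1)$ frame-tracking layers and widens existing layers by a constant factor, all specified as a function of the parameters of $\mathcal A$, so $T$ is uniform. I would also verify that the $O(W_{\mathcal A})$ overhead counts only learned parameters. Any fixed, hard-coded weights of the tracking layers are then absorbed into $\operatorname{code}(\mathcal A')$ via $c_T$ rather than into $bW_{\mathcal A'}$.
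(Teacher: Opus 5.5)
Your proposal is correct and follows the paper's proof essentially step for step. You choose a shortest local representative with $f=\frcm\circ\maskrec$ and a shortest realization $(\mathcal A,\theta)$ of it, apply \Cref{thm:cot-simulation}, and bound $|\operatorname{code}(\mathcal A')|$ by the prefix-universal wrapper argument and $bW_{\mathcal A'}$ by the constant-factor copy count, exactly as the paper packages it in \Cref{lem:sim-overhead}. The additive $c_2$ you allow in the parameter bound, with $C=\max(1,c_1)$, is a harmless slight generalization of the paper's $W_{\mathcal A'}\le C_W W_{\mathcal A}$.
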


\noindent The comparison is just the same two-part accounting. If a local rule $g$ is realized by $(\mathcal A,\theta)$, the simulator realizes $g\circ\maskrec$ using $(\mathcal A',\theta')$. The fixed architecture translation adds only $O(1)$ code bits, and the learned parameters grow by only a constant factor, so
\begin{align}
|\operatorname{code}(\mathcal A')|+bW_{\mathcal A'}
\le
C\bigl(|\operatorname{code}(\mathcal A)|+bW_\mathcal A\bigr)+C_0.
\end{align}
Thus the CoT description language contains the recursive behavior on legal prefixes up to the simulation-overhead term above.

\paragraph{IID Generalization.}
We now state the corresponding IID guarantee in the learning setup of \Cref{sec:cot-vs-recursive-learning}. Let $\mathcal{D}$ be any distribution over complete sequences. Fix a visible-context recursive rule $\frcm\in\mathcal{F}_\mathcal{A}$, and let $\fcot:=\frcm\circ\maskrec$ be the induced complete-sequence target. For any predictor $h:\Omega\to\Sigma$, measure error against this target by $\mathcal{L}_{\mathcal{D}}(h):=\Pr_{\mathbf{x}\sim\mathcal{D}}[h(\mathbf{x})\neq \fcot(\mathbf{x})]$.

The CoT and recursive learners fit the same next-token labels. They differ only in what their candidate predictors read: CoT reads $\mathbf{x}$, while the recursive model reads $\maskrec(\mathbf{x})$.

\begin{theorem}[IID Generalization Under CoT Simulation]
\label{thm:iid-sample-complexity}
Assume $|\fcot|_{\ell^{\maskrec}}<+\infty$ and that a draw $\mathbf{x}\sim\mathcal{D}$ can arise during recursive generation almost surely. Let $\mathcal{X}\sim\mathcal{D}^m$ be the training set of sequences. \red{Define}
\begin{align}
\hat\fcot:=\mathsf{MDL}_{\ell}(\mathcal{X},\fcot),
\qquad
\hat f_{\maskrec}:=\mathsf{MDL}_{\ell^{\maskrec}}(\mathcal{X},\fcot).
\end{align}
Then, for any $\delta\in(0,1)$, with probability at least $1-\delta$ over the draw of $\mathcal{X}$,
\begin{align}
\mathcal{L}_{\mathcal{D}}(\hat\fcot)
\le
\frac{\bigl(C\,|\fcot|_{\ell^{\maskrec}}+C_0\bigr)\ln 2+\ln(2/\delta)}{m},
\qquad
\mathcal{L}_{\mathcal{D}}(\hat f_{\maskrec})
\le
\frac{|\fcot|_{\ell^{\maskrec}}\ln 2+\ln(2/\delta)}{m}.
\end{align}
\end{theorem}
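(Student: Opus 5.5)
The plan is to apply the realizable MDL bound (\Cref{thm:mdl-gen-bound}) twice, once under $\ell$ and once under $\ell^{\maskrec}$, each with confidence parameter $\delta/2$. A union bound then gives probability at least $1-\delta$ that both bounds hold simultaneously, which accounts for the $\ln(2/\delta)$ term. In each case we use the first inequality of \eqref{eq:mdl-bound-interp}, $\mathcal{L}_{\mathcal D}(\hat h)\le(|\hat h|\ln2+\ln(2/\delta))/m$. It then remains to bound the description length of the MDL output by a consistent candidate.

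\textbf{Recursive learner.} The target $\fcot$ is itself consistent with the training data and has finite lifted length, so \Cref{thm:mdl-gen-bound} applies directly under the complexity measure $\ell^{\maskrec}$. This measure satisfies Kraft, as noted after \Cref{def:lifted-complexity-measure}. The second inequality of \eqref{eq:mdl-bound-interp} then gives the stated bound with $|\fcot|_{\ell^{\maskrec}}$.

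\textbf{CoT learner.} Here the obstacle is that the target $\fcot$ may have infinite, or simply large, $\ell$-length, because the simulation only agrees with it on legal prefixes. So we cannot use the second inequality with $f^*=\fcot$. Instead, apply \Cref{cor:cot-transformer-dominance} to $f=\fcot$ to get a CoT predictor $\tilde f$ with $\tilde f=\fcot$ on all legal recursive prefixes and $|\tilde f|_\ell\le C|\fcot|_{\ell^{\maskrec}}+C_0$. Since every training sequence is legal almost surely, $\tilde f\equiv_{\mathcal X}\fcot$, so $\tilde f$ is a feasible candidate for $\mathsf{MDL}_\ell(\mathcal X,\fcot)$. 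Hence $|\hat\fcot|_\ell\le|\tilde f|_\ell$.

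The remaining step is to rerun the Occam argument behind the first inequality. It needs only the following facts: the loss is measured against $\fcot$; $\hat\fcot$ fits $\fcot$ on $\mathcal X$; and $\sum_h 2^{-|h|_\ell}\le1$. For each predictor $h$ with $\mathcal L_{\mathcal D}(h)>\epsilon_h:=(|h|_\ell\ln2+\ln(2/\delta))/m$, the probability that it fits all $m$ samples is at most $(1-\epsilon_h)^m\le e^{-m\epsilon_h}=2^{-|h|_\ell}\delta/2$. Summing over $h$ via Kraft bounds the failure probability by $\delta/2$.

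Note that this argument does not require $|\fcot|_\ell<\infty$. The realizability hypothesis in \Cref{thm:mdl-gen-bound} is used only to guarantee that the argmin is nonempty, and $\tilde f$ now supplies that. Plugging $|\hat\fcot|_\ell\le C|\fcot|_{\ell^{\maskrec}}+C_0$ into the bound yields the CoT inequality. Since $\tilde f$ agrees with $\fcot$ $\mathcal D$-almost surely, $\mathcal D$-risk measured against $\fcot$ is the same as risk measured against $\tilde f$, so one could alternatively invoke \Cref{thm:mdl-gen-bound} verbatim with target $\tilde f$. The main point needing care is exactly this replacement of the target by the legal-prefix simulator, together with the almost-sure legality assumption that makes it harmless.
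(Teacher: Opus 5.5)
Your proposal is correct and follows the paper's proof: two applications of \Cref{thm:mdl-gen-bound} at confidence $\delta/2$ with a union bound, the lifted bound used directly, and \Cref{cor:cot-transformer-dominance} supplying a short legal-prefix simulator $\tilde f$ for the CoT side. The only difference is cosmetic. The paper takes your ``alternative'' route: it invokes the MDL bound with target $\tilde f$ and notes that, on the almost-sure event that all training samples are legal, $\mathsf{MDL}_\ell(\mathcal X,\fcot)$ has the same feasible set (hence the same output) and the same $\mathcal D$-risk as $\mathsf{MDL}_\ell(\mathcal X,\tilde f)$. Your main line instead reruns the Occam argument against $\fcot$ and uses $\tilde f$ only as a feasible candidate to bound $|\hat\fcot|_\ell$; both are valid.
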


\noindent The proof is given in \Cref{app:proof-iid}. It applies the realizable MDL bound twice: \red{once under the lifted description language $\ell^{\maskrec}$ and once under the CoT description language $\ell$}. \red{By \Cref{cor:cot-transformer-dominance}, the CoT description-length term is at most a constant-factor simulation overhead larger than the lifted recursive term.} Thus, in this IID comparison, recursion can offer at most a marginal sample-complexity advantage. \red{Equivalently, under the finite-class view from \Cref{sec:id-learning}, $b$-bit discretization gives about $2^{bW_\mathcal A}$ parameter settings for an architecture with $W_\mathcal A$ learned parameters; since the simulator uses $O(W_\mathcal A)$ learned parameters, the corresponding $\log|\mathcal F|$ term changes only by the same constant-factor overhead.}

%% file: sections/sec5_ood.tex
\section{OOD Generalization: Invariance and Risks of Shortcuts}
\label{sec:ood-section}
\label{sec:ood-generalization}
\label{sec:ood}

\subsection{Why OOD Is Different from IID}
\label{sec:ood-mdl-lesson}

IID learning asks whether the selected rule predicts well under the same distribution that produced the training data. Out-of-domain (OOD) generalization asks what happens when the learner is trained on one domain but evaluated on complete sequences beyond that domain. Let $\mathcal{X} \subseteq \Omega$ denote the training set of complete sequences, with target values given by a rule $f^* : \Omega \to \Sigma$. We focus on consistent learners: the learned predictor agrees with $f^*$ on $\mathcal{X}$, and we ask how it behaves beyond $\mathcal{X}$.

\paragraph{Risks of simplicity bias.}
Here simplicity bias plays a different role. In IID learning, a short rule that fits the training data receives a good average-case guarantee under the same distribution. Out of domain, the shortest fitting rule may instead use a pattern specific to the training domain rather than the mechanism we meant to learn. We call such a rule a shortcut: a predictor $\hat r$ with $\hat r \equiv_{\mathcal{X}} f^*$ and $|\hat r|_\ell < |f^*|_\ell$. From the observations alone, $\hat r$ is indistinguishable from the intended rule, so MDL may choose it. The difference appears only when we evaluate on inputs where the training-domain pattern no longer tracks $f^*$.

\paragraph{Two examples.}
Two simple examples isolate the same mechanism; the first two panels of \Cref{fig:shortcut-examples} illustrate them.

\begin{figure}[!t]
\centering
\def\casefont{\fontsize{8}{8.6}\selectfont\sffamily}
\def\casekeyfont{\fontsize{8}{8.6}\selectfont\sffamily\bfseries}
\begin{minipage}{\linewidth}
\centering
\begin{tcolorbox}[
  enhanced, colback=Gray!2, colframe=Gray!50!black, boxrule=0.4pt, arc=2pt,
  title={\bfseries\sffamily (A) Length Generalization Shortcut},
  fonttitle=\footnotesize,
  width=\linewidth, left=4pt, right=4pt, top=.5pt, bottom=.5pt, before skip=2pt, after skip=2pt
]
\centering
\resizebox{.99\linewidth}{!}{%
\begin{tikzpicture}[
  x=1mm, y=1mm,
  frame/.style={rounded corners=3pt, line width=0.45pt, inner xsep=4pt, inner ysep=3pt, align=left},
  shortcut/.style={frame, draw=BrickRed!45!black, fill=BrickRed!7},
  target/.style={frame, draw=OliveGreen!50!black, fill=OliveGreen!8},
  code/.style={rounded corners=2.5pt, draw=Gray!55!black, fill=white,
    line width=0.32pt, inner xsep=4pt, inner ysep=1.9pt, align=left},
  head/.style={font=\casekeyfont, text=black, anchor=north west, inner sep=0pt},
  context/.style={font=\casefont, text=black, anchor=west}
]
\def\codefont{\fontsize{7.65}{7.45}\selectfont\ttfamily}
\node[anchor=north west, text=black] (lookuplabel) at (0,28.6) {%
  {\casekeyfont\color{BrickRed!90!black}Shortcut:} {\casefont lookup table}%
};
\node[code, text width=68mm, anchor=north west] (lookup) at (0,24.0) {%
  {\codefont def shortcut(x):}\\[-.35mm]
  {\codefont \quad return x == "01" or x == "0011"}
};
\node[text=black, text width=76mm, anchor=south west, align=left] at (0,4.0) {%
  {\casekeyfont training:} {\casefont $|x|\le 4$; positives \texttt{01}, \texttt{0011}}\\[1.05mm]
  {\casekeyfont OOD:} {\casefont $|x|>4$; positives \texttt{000111},$\ldots$}
};

\node[anchor=north west, text=black] (rulelabel) at (78,28.6) {%
  {\casekeyfont\color{OliveGreen!85!black}Ground Truth:} {\casefont strings of the form $0^n1^n$}%
};
\node[code, text width=68mm, anchor=north west] (rule) at (78,24.0) {%
  {\codefont def ground\_truth(x):}\\[-.35mm]
  {\codefont \quad i, j = 0, 0}\\[-.35mm]
  {\codefont \quad while i < len(x) and x[i] == "0":}\\[-.35mm]
  {\codefont \quad\quad i += 1}\\[-.35mm]
  {\codefont \quad while j < len(x) and x[len(x)-1-j] == "1":}\\[-.35mm]
  {\codefont \quad\quad j += 1}\\[-.35mm]
  {\codefont \quad return i > 0 and i == j and i + j == len(x)}
};
\end{tikzpicture}}
\end{tcolorbox}

\begin{tcolorbox}[
  enhanced, colback=Gray!2, colframe=Gray!50!black, boxrule=0.4pt, arc=2pt,
  title={\bfseries\sffamily (B) Graph Connectivity Shortcut},
  fonttitle=\footnotesize,
  width=\linewidth, left=4pt, right=4pt, top=.5pt, bottom=.5pt, before skip=2pt, after skip=2pt
]
\centering
\resizebox{.99\linewidth}{!}{%
\begin{tikzpicture}[
  x=1mm, y=1mm,
  head/.style={font=\casekeyfont, text=black, anchor=west},
  context/.style={font=\casefont, text=black, anchor=west},
  graphcard/.style={rounded corners=3pt, draw=Gray!45!black, fill=white,
    line width=0.34pt, inner xsep=3pt, inner ysep=3pt, align=left},
  gnode/.style={circle, draw=Gray!65!black, fill=white, line width=0.64pt,
    minimum size=4.05mm, inner sep=0pt, font=\fontsize{7.35}{7.75}\selectfont\sffamily\bfseries},
  snode/.style={gnode, draw=NavyBlue!65!black, fill=NavyBlue!12, text=NavyBlue!70!black},
  tnode/.style={gnode, draw=BrickRed!65!black, fill=BrickRed!12, text=BrickRed!70!black},
  gedge/.style={draw=Gray!75!black, line width=0.88pt},
  tagtext/.style={inner sep=0pt, font=\fontsize{7.75}{8.25}\selectfont\sffamily,
    anchor=north east, text=black},
  tagcolon/.style={inner sep=0pt, font=\fontsize{7.75}{8.25}\selectfont\sffamily,
    anchor=north west, text=black},
  tagnum/.style={inner sep=0pt, font=\fontsize{7.75}{8.25}\selectfont\sffamily\bfseries,
    anchor=north east}
]
\def\tagok#1{{\color{OliveGreen!78!black}#1}}
\def\tagbad#1{{\color{BrickRed!85!black}#1}}
\node[anchor=base, text=black] at (35,27.6) {{\casekeyfont\color{BrickRed!90!black}Shortcut:} {\casefont $s,t$ connected}};
\node[anchor=base, text=black] at (104,27.6) {{\casekeyfont\color{OliveGreen!85!black}Ground Truth:} {\casefont $s,t$ antipodal on one cycle}};

\node[graphcard, minimum width=46mm, minimum height=22.0mm, anchor=north west] (gpos) at (0,24.6) {};
\node[head] at (1.5,23.05) {training};
\node[head] at (1.5,20.75) {positive};
\draw[gedge] (23,18.55) -- (28.3,15.45) -- (28.3,9.25) -- (23,6.15) -- (17.7,9.25) -- (17.7,15.45) -- cycle;
\node[snode] at (23,18.55) {s};
\node[gnode] at (28.3,15.45) {};
\node[gnode] at (28.3,9.25) {};
\node[tnode] at (23,6.15) {t};
\node[gnode] at (17.7,9.25) {};
\node[gnode] at (17.7,15.45) {};
\node[tagtext] at (41.4,23.05) {shortcut};
\node[tagcolon] at (42.0,23.05) {:};
\node[tagnum] at (44.7,23.05) {\tagok{1}};
\node[tagtext] at (41.4,20.75) {truth};
\node[tagcolon] at (42.0,20.75) {:};
\node[tagnum] at (44.7,20.75) {\tagok{1}};

\node[graphcard, minimum width=46mm, minimum height=22.0mm, anchor=north west] (gneg) at (52,24.6) {};
\node[head] at (52.5,23.05) {training};
\node[head] at (52.5,20.75) {negative};
\draw[gedge] (64,15.8) -- (68.4,12.3) -- (64,8.8) -- (59.6,12.3) -- cycle;
\draw[gedge] (86,15.8) -- (90.4,12.3) -- (86,8.8) -- (81.6,12.3) -- cycle;
\node[snode] at (59.6,12.3) {s};
\node[gnode] at (64,15.8) {};
\node[gnode] at (68.4,12.3) {};
\node[gnode] at (64,8.8) {};
\node[gnode] at (81.6,12.3) {};
\node[gnode] at (86,15.8) {};
\node[gnode] at (90.4,12.3) {};
\node[tnode] at (86,8.8) {t};
\node[tagtext] at (93.4,23.05) {shortcut};
\node[tagcolon] at (94.0,23.05) {:};
\node[tagnum] at (96.7,23.05) {\tagok{0}};
\node[tagtext] at (93.4,20.75) {truth};
\node[tagcolon] at (94.0,20.75) {:};
\node[tagnum] at (96.7,20.75) {\tagok{0}};

\node[graphcard, minimum width=46mm, minimum height=22.0mm, anchor=north west] (good) at (104,24.6) {};
\node[head] at (105.5,23.05) {OOD};
\node[head] at (105.5,20.75) {failure};
\draw[gedge] (127,18.55) -- (132.3,15.45) -- (132.3,9.25) -- (127,6.15) -- (121.7,9.25) -- (121.7,15.45) -- cycle;
\node[snode] at (127,18.55) {s};
\node[tnode] at (132.3,15.45) {t};
\node[gnode] at (132.3,9.25) {};
\node[gnode] at (127,6.15) {};
\node[gnode] at (121.7,9.25) {};
\node[gnode] at (121.7,15.45) {};
\node[tagtext] at (145.4,23.05) {shortcut};
\node[tagcolon] at (146.0,23.05) {:};
\node[tagnum] at (148.7,23.05) {\tagbad{1}};
\node[tagtext] at (145.4,20.75) {truth};
\node[tagcolon] at (146.0,20.75) {:};
\node[tagnum] at (148.7,20.75) {\tagok{0}};
\end{tikzpicture}}
\end{tcolorbox}

\begin{tcolorbox}[
  enhanced, colback=Gray!2, colframe=Gray!50!black, boxrule=0.4pt, arc=2pt,
  title={\bfseries\sffamily (C) Recursive Trace Shortcut},
  fonttitle=\footnotesize,
  width=\linewidth, left=4pt, right=4pt, top=.5pt, bottom=.5pt, before skip=2pt, after skip=2pt
]
\centering
\resizebox{\linewidth}{!}{%
\begin{tikzpicture}[
  x=1mm, y=1mm,
  panel/.style={rounded corners=3pt, draw=Gray!45!black, fill=white,
    line width=0.4pt, inner sep=0pt},
  ptitle/.style={font=\fontsize{8.05}{8.35}\selectfont\sffamily,
    anchor=base west, text=black, inner sep=0pt},
  psubtitle/.style={font=\fontsize{8.05}{8.35}\selectfont\sffamily,
    anchor=base west, text=black, inner sep=0pt},
  line/.style={font=\fontfamily{lmtt}\fontsize{6.85}{6.35}\selectfont,
    anchor=west, text=black, inner sep=0.35pt},
  ann/.style={rounded corners=1pt, draw=Orange!70!black, fill=YellowOrange!28,
    font=\fontsize{5.15}{5.35}\selectfont\sffamily\bfseries,
    anchor=east, inner xsep=0.75pt, inner ysep=0.2pt, text=Orange!85!black},
  hitred/.style={line, rounded corners=1.2pt, draw=BrickRed!38!black,
    fill=BrickRed!8, text=BrickRed!72!black, inner xsep=0.85pt, inner ysep=0.45pt},
  hitgreen/.style={line, rounded corners=1.2pt, draw=OliveGreen!45!black,
    fill=OliveGreen!8, text=OliveGreen!70!black, inner xsep=0.85pt, inner ysep=0.45pt},
  depth/.style={font=\fontfamily{lmtt}\fontsize{6.25}{6.45}\selectfont,
    anchor=east, text=Gray!70!black, inner sep=0pt},
  redarr/.style={-{Stealth[length=1.35mm]}, line width=0.55pt, BrickRed!70!black},
  greenarr/.style={-{Stealth[length=1.35mm]}, line width=0.55pt, OliveGreen!70!black}
]
\foreach \y/\d in {
  59.1/0,55.85/0,52.6/1,49.35/2,46.1/2,42.85/1,39.6/1,
  36.35/0,33.1/0,29.85/1,26.6/1,23.35/0,20.1/0,16.85/0}
  \node[depth] at (7.35,\y-1.65) {Depth \d};
\begin{scope}[xshift=8.3mm]
\node[ptitle] at (0.4,65.05) {{\bfseries\color{BrickRed!90!black}Shortcut:} copy grandchild answer B};
\node[panel, minimum width=51mm, minimum height=49mm, anchor=north west] at (0,60.75) {};
\begin{scope}[xshift=0.3mm,yshift=-1.65mm]
\node[ann] at (1.65,59.1) {A};
\node[line] at (2.0,59.1) {add(square(add(2,1)),add(2,2))};
\node[line] at (2.0,55.85) {x := <call> square(add(2,1)) </call>};
\node[line] at (5.3,52.6) {x := <call> add(2,1) </call>};
\node[line] at (8.6,49.35) {x := 2  y := 1};
\node[ann] at (8.65,46.1) {B};
\node[hitred] (sB) at (9.05,46.1) {return <return> 3 </return>};
\node[line] at (5.3,42.85) {x := 3};
\node[line] at (5.75,39.6) {return <return> 9 </return>};
\node[ann] at (1.65,36.35) {C};
\node[line] at (2.0,36.35) {x := 9};
\node[line] at (2.0,33.1) {y := <call> add(2,2) </call>};
\node[line] at (5.3,29.85) {x := 2  y := 2};
\node[line] at (5.75,26.6) {return <return> 4 </return>};
\node[ann] at (1.65,23.35) {D};
\node[line] at (2.0,23.35) {y := 4};
\node[hitred] (sOut) at (2.45,20.1) {return <return> 3 </return>};
\draw[redarr] (sB.east) to[out=0,in=35] (sOut.east);
\end{scope}

\begin{scope}[xshift=3.2mm]
\node[ptitle] at (50.4,65.05) {{\bfseries\color{OliveGreen!85!black}Ground Truth:} solve from root A};
\node[psubtitle] at (50.4,62.25) {and subproblem answers C, D};
\node[panel, minimum width=51mm, minimum height=49mm, anchor=north west] at (50,60.75) {};
\begin{scope}[xshift=0.3mm,yshift=-1.65mm]
\node[ann] at (51.65,59.1) {A};
\node[hitgreen] (gexpr) at (52.0,59.1) {add(square(add(2,1)),add(2,2))};
\node[line] at (52.0,55.85) {x := <call> square(add(2,1)) </call>};
\node[line] at (55.3,52.6) {x := <call> add(2,1) </call>};
\node[line] at (58.6,49.35) {x := 2  y := 1};
\node[ann] at (58.65,46.1) {B};
\node[line] at (59.05,46.1) {return <return> 3 </return>};
\node[line] at (55.3,42.85) {x := 3};
\node[line] at (55.75,39.6) {return <return> 9 </return>};
\node[ann] at (51.65,36.35) {C};
\node[hitgreen] (gX) at (52.0,36.35) {x := 9};
\node[line] at (52.0,33.1) {y := <call> add(2,2) </call>};
\node[line] at (55.3,29.85) {x := 2  y := 2};
\node[line] at (55.75,26.6) {return <return> 4 </return>};
\node[ann] at (51.65,23.35) {D};
\node[hitgreen] (gY) at (52.0,23.35) {y := 4};
\node[hitgreen] (gD) at (52.45,20.1) {return <return> 3 </return>};
\draw[greenarr] (gexpr.east) to[out=0,in=112] (gD.east);
\draw[greenarr] (gX.east) to[out=0,in=80] (gD.east);
\draw[greenarr] (gY.east) to[out=0,in=25] (gD.east);
\end{scope}
\end{scope}

\begin{scope}[xshift=6.7mm]
\node[ptitle] at (100.4,65.05) {{\bfseries OOD Failure:} shortcut returns {\color{BrickRed!75!black}3}};
\node[psubtitle] at (100.4,62.25) {correct root return is {\color{OliveGreen!70!black}4}};
\node[panel, minimum width=51mm, minimum height=49mm, anchor=north west] at (100,60.75) {};
\begin{scope}[xshift=0.3mm,yshift=-1.65mm]
\node[ann] at (101.65,59.1) {A};
\node[hitgreen] (oexpr) at (102.0,59.1) {add(square(add(2,1)),add(3,2))};
\node[line] at (102.0,55.85) {x := <call> square(add(2,1)) </call>};
\node[line] at (105.3,52.6) {x := <call> add(2,1) </call>};
\node[line] at (108.6,49.35) {x := 2  y := 1};
\node[ann] at (108.65,46.1) {B};
\node[hitred] (oB) at (109.05,46.1) {return <return> 3 </return>};
\node[line] at (105.3,42.85) {x := 3};
\node[line] at (105.75,39.6) {return <return> 9 </return>};
\node[ann] at (101.65,36.35) {C};
\node[hitgreen] (oX) at (102.0,36.35) {x := 9};
\node[line] at (102.0,33.1) {y := <call> add(3,2) </call>};
\node[line] at (105.3,29.85) {x := 3  y := 2};
\node[line] at (105.75,26.6) {return <return> 5 </return>};
\node[ann] at (101.65,23.35) {D};
\node[hitgreen] (oY) at (102.0,23.35) {y := 5};
\node[hitgreen] (oDtrue) at (102.45,20.1) {return <return> 4 </return>};
\node[hitred] (oDshort) at (102.45,16.85) {return <return> 3 </return>};
\draw[redarr] (oB.east) to[out=0,in=30] (oDshort.east);
\draw[greenarr] (oexpr.east) to[out=0,in=112] (oDtrue.east);
\draw[greenarr] (oX.east) to[out=0,in=80] (oDtrue.east);
\draw[greenarr] (oY.east) to[out=0,in=25] (oDtrue.east);
\end{scope}
\end{scope}
\end{scope}
\end{tikzpicture}}
\end{tcolorbox}
\end{minipage}
\caption{\textbf{Shortcut mechanisms.}
A simpler rule fits the training domain but fails once the training-domain correlation is broken. In recursive computation, reading the complete sequence enables shortcuts that use context irrelevant to the local computation.}
\label{fig:shortcut-examples}
\label{fig:invariance-ood-example}
\label{fig:invariance-shortcut}
\vspace{-0.4em}
\end{figure}

In length generalization, let $f^*(x)=1$ iff $x=0^n1^n$ for some $n\ge 1$. If the training set is $\mathcal{X}_4=\{0,1\}^{\le 4}$, the lookup rule $r_{\rm lookup}(x)=\mathds{1}\{x\in\{01,0011\}\}$ agrees with every training input but rejects every longer balanced string. The intended rule must check the full pattern $0^n1^n$: a nonempty block of $0$s, followed by a block of $1$s of the same length. On this small training set, listing the two positive strings can be shorter than implementing the general rule, so MDL can prefer the lookup even though it fails on the intended test inputs.

For graph classification, suppose positives in the training domain are cycles where marked nodes $s,t$ are antipodal, and negatives are two disjoint cycles with $s,t$ in different components. The shortcut rule ``are $s,t$ connected?'' agrees with all training inputs. The intended rule is more specific: it must check that $s,t$ lie on the same cycle and that their distance is exactly half the cycle length. Connectivity is a coarser and usually simpler predicate, but it fails on connected graphs outside the training domain where $s,t$ are not antipodal. Again, the shortcut is not nonsense; it is a simpler rule that happens to agree with the training inputs.

\subsection{Invariance Under an Observation Function}
\label{sec:recursive-invariance}

The examples fail because the learner can use information that is predictive on the training domain but irrelevant to the intended rule. The same examples also suggest the fix: restrict what the predictor can read. If the predictor only receives the information that the target actually uses, then correlations in the surrounding complete sequence cannot affect its answer. An observation function formalizes this restriction by mapping a complete sequence to the visible context from which the next prediction should be made.

Fix an idempotent observation function $\mask:\Omega\to\Omega$ (so $\mask(\mask(\mathbf{x}))=\mask(\mathbf{x})$) and a local target rule $\frcm$ on visible contexts $\mask(\mathbf{x})$. Viewed as a rule on complete sequences, the same behavior is $\fcot(\mathbf{x}) = \frcm(\mask(\mathbf{x}))$. The simple observation is that any predictor that only reads $\mask(\mathbf{x})$ must give the same answer on sequences with the same visible context. We formalize this as invariance.

\begin{definition}[Invariance]
\label{def:invariance-main}
Let $\mask : \Omega \to \Omega$ be an arbitrary observation function and let $\mathcal{X}\subseteq\Omega$ be the training set. We say a rule $p: \Omega \rightarrow \Sigma$ is \emph{$\mask$-invariant on the training set $\mathcal{X}$} if, for all $\mathbf{x} \in \mathcal{X}$ and $\mathbf{x}_{\mathrm{te}} \in \Omega$, if $\mask(\mathbf{x}) = \mask(\mathbf{x}_{\mathrm{te}})$, then $p(\mathbf{x}) = p(\mathbf{x}_{\mathrm{te}})$.
\end{definition}

\noindent In this definition, $\mathbf{x}$ is a training input, while $\mathbf{x}_{\mathrm{te}}$ is a candidate test input and may lie outside the training domain. Thus invariance is already an out-of-domain statement: it asks whether the prediction stays fixed when the visible context is held fixed but the surrounding complete sequence changes.

Any local rule $q$ used through $\mask$ is $\mask$-invariant by construction, because its complete-sequence prediction has the form $q(\mask(\mathbf{x}))$. For recursive models, $\mask=\maskrec$, and the visible context is the active frame. Different main problems may still require solving the same modular subproblem inside that frame; invariance says that the answer to the subproblem does not depend on which larger problem happened to contain it. We next name the inputs that reuse a training visible context.

\begin{definition}[Equivalent set]
\label{def:equivalent-set-main}
Fix an observation function $\mask:\Omega\to\Omega$. Two complete sequences $\mathbf{x},\mathbf{x}_{\mathrm{te}}\in\Omega$ are \emph{$\mask$-equivalent} if $\mask(\mathbf{x})=\mask(\mathbf{x}_{\mathrm{te}})$. For a training set $\mathcal{X}\subseteq\Omega$, its \emph{$\mask$-equivalent set} is
\begin{align}
\label{eq:equivalent-set}
\operatorname{Eq}_{\mask}(\mathcal{X})
:= \{\mathbf{x}_{\mathrm{te}} \in \Omega : \exists \mathbf{x}\in\mathcal{X}\ \text{such that}\ \mask(\mathbf{x}_{\mathrm{te}})=\mask(\mathbf{x})\}
= \mask^{-1}(\mask(\mathcal{X})).
\end{align}
It contains exactly the complete sequences whose visible context has appeared in $\mathcal{X}$, and in particular $\mathcal{X}\subseteq \operatorname{Eq}_{\mask}(\mathcal{X})\subseteq\Omega$.
\end{definition}

Invariance gives the following transfer guarantee.

\begin{proposition}[OOD Generalization to the Equivalent Set]
\label{prop:invariance-ood-main}
Suppose a predictor $\hat\fcot$ is \emph{consistent with the target $\fcot$ on the training set $\mathcal{X}$} (i.e., $\hat\fcot(\mathbf{x}) = \fcot(\mathbf{x}) = \frcm(\mask(\mathbf{x}))$ for all $\mathbf{x} \in \mathcal{X}$) and \emph{$\mask$-invariant on $\mathcal{X}$}. Then $\hat\fcot$ remains correct throughout the equivalent set: $\hat\fcot(\mathbf{x}_{\mathrm{te}}) = \fcot(\mathbf{x}_{\mathrm{te}}) = \frcm(\mask(\mathbf{x}_{\mathrm{te}}))$ for all $\mathbf{x}_{\mathrm{te}} \in \operatorname{Eq}_{\mask}(\mathcal{X})$.
\end{proposition}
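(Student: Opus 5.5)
The argument is a direct unfolding of the definitions; no earlier theorem is needed beyond \Cref{def:invariance-main} and \Cref{def:equivalent-set-main}. Fix an arbitrary $\mathbf{x}_{\mathrm{te}}\in\operatorname{Eq}_{\mask}(\mathcal{X})$. By \eqref{eq:equivalent-set} there is a training input $\mathbf{x}\in\mathcal{X}$ with $\mask(\mathbf{x}_{\mathrm{te}})=\mask(\mathbf{x})$, i.e.\ the two complete sequences are $\mask$-equivalent. We then connect $\hat\fcot(\mathbf{x}_{\mathrm{te}})$ to $\fcot(\mathbf{x}_{\mathrm{te}})$ through $\mathbf{x}$ with a chain of three equalities.

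First, because $\hat\fcot$ is $\mask$-invariant on $\mathcal{X}$ and $\mathbf{x}\in\mathcal{X}$ shares its visible context with $\mathbf{x}_{\mathrm{te}}$, \Cref{def:invariance-main} gives $\hat\fcot(\mathbf{x}_{\mathrm{te}})=\hat\fcot(\mathbf{x})$. Second, consistency on the training set gives $\hat\fcot(\mathbf{x})=\fcot(\mathbf{x})=\frcm(\mask(\mathbf{x}))$. Third, substituting $\mask(\mathbf{x})=\mask(\mathbf{x}_{\mathrm{te}})$ yields $\frcm(\mask(\mathbf{x}))=\frcm(\mask(\mathbf{x}_{\mathrm{te}}))=\fcot(\mathbf{x}_{\mathrm{te}})$, using that the target is defined on all of $\Omega$ by $\fcot=\frcm\circ\mask$. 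Chaining these gives
\begin{align}
\hat\fcot(\mathbf{x}_{\mathrm{te}})=\hat\fcot(\mathbf{x})=\frcm(\mask(\mathbf{x}))=\frcm(\mask(\mathbf{x}_{\mathrm{te}}))=\fcot(\mathbf{x}_{\mathrm{te}}),
\end{align}
which is the claim.

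There is no real obstacle. The only point to watch is the direction of the invariance definition: it quantifies over a training point $\mathbf{x}$ and an arbitrary $\mathbf{x}_{\mathrm{te}}\in\Omega$, which is exactly the configuration produced by the membership $\mathbf{x}_{\mathrm{te}}\in\operatorname{Eq}_{\mask}(\mathcal{X})$. Idempotence of $\mask$ is not needed; it would only matter for identifying $\operatorname{Eq}_{\mask}(\mathcal{X})$ with $\mask^{-1}(\mask(\mathcal{X}))$, and that identity already holds directly from the definition.
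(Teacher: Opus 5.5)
Your proof is correct and matches the paper's intended argument. The paper gives no separate written proof; it treats the result as immediate from \Cref{def:invariance-main}, consistency on $\mathcal{X}$, and the definition of $\operatorname{Eq}_{\mask}(\mathcal{X})$, which is exactly your chain of equalities through a witness $\mathbf{x}\in\mathcal{X}$ with $\mask(\mathbf{x})=\mask(\mathbf{x}_{\mathrm{te}})$. You are also right that idempotence of $\mask$ plays no role.
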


\noindent Thus, for a $\mask$-invariant predictor, covering visible contexts in $\mathcal{X}$ gives correctness on new complete sequences that reuse those contexts. For recursive models, once a subproblem's active frame has appeared in training, changing the parent problem, recursion depth, or other surrounding tokens does not change the predictor's behavior on that subproblem. The guarantee does not cover genuinely new subproblems outside $\mask(\mathcal{X})$.

\subsection{Shortcuts in CoT}
\label{sec:ood-examples}
\label{sec:shortcut-examples}

This also explains why CoT can fail. CoT reads the complete sequence $\mathbf{x}$, so in recursive traces it can attend to tokens outside the active frame $\maskrec(\mathbf{x})$. These tokens can support shortcuts: stack depth, sequence length, repeated tokens, or a value that appears elsewhere in the complete sequence. For example, a value returned by one call may appear outside the active frame used for another prediction. CoT can copy that value on the training inputs instead of computing the answer from the active frame; when the surrounding complete sequence changes, the copy rule can fail. \Cref{fig:shortcut-examples}(C) illustrates this shortcut in recursive computation.

\paragraph{Risks of Shortcuts for OOD Generalization.}
On the same training pairs, compare CoT MDL under $\ell$ with the lifted MDL problem under $\ell^{\mask}$. CoT scores a complete-sequence predictor directly. Under $\ell^{\mask}$, the MDL output is still a complete-sequence predictor, but its length is the length of the shortest visible-context rule used after applying $\mask$.

For a general observation function $\mask$, suppose the target labels are determined by the visible context, $\fcot=\frcm\circ\mask$. Let
\begin{equation}
\label{eq:shortcut-gap-objects}
\hat\fcot\equiv_{\mathcal{X}}\fcot,
\qquad
\hat\fcot_{\mask}:=\mathsf{MDL}_{\ell^{\mask}}(\mathcal{X},\fcot).
\end{equation}
Here $\hat\fcot$ is any finite-length CoT predictor consistent on $\mathcal{X}$, including the CoT MDL output $\mathsf{MDL}_{\ell}(\mathcal{X},\fcot)$ when it is well defined. The subscript $\mask$ records the lifted description language: $\hat\fcot_{\mask}$ is still a complete-sequence predictor. By \Cref{prop:lifted-mdl-local}, for a shortest visible-context rule $\hat\frcm$,
\begin{equation}
\label{eq:lifted-mdl-unpacked-shortcut}
\hat\fcot_{\mask}=\hat\frcm\circ\mask,
\qquad
|\hat\fcot_{\mask}|_{\ell^{\mask}}=|\hat\frcm|_{\ell}.
\end{equation}
Thus the lifted cost of $\hat\fcot_{\mask}$ is the cost of the local rule $\hat\frcm$, not the ordinary cost of the composed predictor $\hat\frcm\circ\mask$. We focus on the strict shortcut gap $|\hat\fcot|_\ell < |\hat\fcot_{\mask}|_{\ell^{\mask}}=|\hat\frcm|_{\ell}$: CoT has found a shorter complete-sequence rule than the shortest visible-context rule whose answer is determined by $\mask(\mathbf{x})$. Theorem~\ref{thm:shortcut-main} shows that this gap forces non-invariance.

\begin{theorem}[Shortcut Forces Non-Invariance]
\label{thm:shortcut-main}
Let $\ell$, $\mathcal{X}\subseteq\Omega$, and an idempotent observation function $\mask:\Omega\to\Omega$ be arbitrary. For any visible-context target rule $\frcm:\Omega\to\Sigma$, set $\fcot:=\frcm\circ\mask$. Let $\hat\fcot_{\mask}:=\mathsf{MDL}_{\ell^{\mask}}(\mathcal{X},\fcot)$ be well defined, and let $\hat\fcot$ be any finite-length predictor satisfying $\hat\fcot\equiv_{\mathcal{X}}\fcot$. If
\begin{equation}
\label{eq:shortcut-gap-main}
|\hat\fcot|_\ell<|\hat\fcot_{\mask}|_{\ell^{\mask}},
\end{equation}
then the CoT predictor $\hat\fcot$ is not $\mask$-invariant on $\mathcal{X}$. That is, there exist $\mathbf{x}\in\mathcal{X}$ and $\mathbf{x}_{\mathrm{te}}\in\mask(\mathcal{X})\subseteq\Omega$ such that
\begin{equation}
\label{eq:shortcut-noninvariance-witness}
\mask(\mathbf{x})=\mask(\mathbf{x}_{\mathrm{te}}), \qquad \hat\fcot(\mathbf{x}_{\mathrm{te}})\neq \hat\fcot(\mathbf{x})=\frcm(\mask(\mathbf{x})).
\end{equation}
\end{theorem}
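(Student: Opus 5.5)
We argue by contraposition: assuming $\hat\fcot$ is $\mask$-invariant on $\mathcal{X}$, we exhibit a visible-context rule that fits the training set and is no longer than $\hat\fcot$ itself. This contradicts the gap \eqref{eq:shortcut-gap-main}, because $\hat\fcot_{\mask}$ is a minimizer under $\ell^{\mask}$.

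\textbf{Candidate local rule.} The obvious candidate is $\hat\fcot$ itself, now read as a rule on visible contexts. Consider the composed predictor $h:=\hat\fcot\circ\mask$. Fix $\mathbf{x}\in\mathcal{X}$ and set $\mathbf{x}_{\mathrm{te}}:=\mask(\mathbf{x})\in\mask(\mathcal{X})$. Idempotence gives $\mask(\mathbf{x}_{\mathrm{te}})=\mask(\mask(\mathbf{x}))=\mask(\mathbf{x})$, so the invariance hypothesis applies to the pair $(\mathbf{x},\mathbf{x}_{\mathrm{te}})$ and yields $\hat\fcot(\mask(\mathbf{x}))=\hat\fcot(\mathbf{x})$. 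Since $\hat\fcot\equiv_{\mathcal{X}}\fcot$, we have $\hat\fcot(\mathbf{x})=\frcm(\mask(\mathbf{x}))$. Hence $h\equiv_{\mathcal{X}}\fcot$, and $h$ is a feasible candidate for the lifted MDL problem.

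\textbf{Length comparison.} By Definition~\ref{def:lifted-complexity-measure}, $h=\hat\fcot\circ\mask$ has a local representative $\frcm=\hat\fcot$, so $|h|_{\ell^{\mask}}\le|\hat\fcot|_\ell<\infty$. Because $\hat\fcot_{\mask}$ minimizes $\ell^{\mask}$-length among predictors consistent on $\mathcal{X}$, we get $|\hat\fcot_{\mask}|_{\ell^{\mask}}\le|h|_{\ell^{\mask}}\le|\hat\fcot|_\ell$. This contradicts \eqref{eq:shortcut-gap-main}. Proposition~\ref{prop:lifted-mdl-local} offers an equivalent local phrasing: $\hat\fcot$ restricted to $\mask(\mathcal{X})$ agrees with $\frcm$ there, so $|\hat\frcm|_\ell\le|\hat\fcot|_\ell$.

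\textbf{Extracting the witness.} The main subtlety is getting the specific witness form stated in the theorem, with $\mathbf{x}_{\mathrm{te}}\in\mask(\mathcal{X})$, rather than merely some violation of invariance over all of $\Omega$. The argument above uses only the pairs $(\mathbf{x},\mask(\mathbf{x}))$. So the contrapositive we actually prove is stronger: if $\hat\fcot(\mask(\mathbf{x}))=\hat\fcot(\mathbf{x})$ for every $\mathbf{x}\in\mathcal{X}$, then $h$ is feasible and the gap fails. Therefore under the gap some $\mathbf{x}\in\mathcal{X}$ has $\hat\fcot(\mask(\mathbf{x}))\neq\hat\fcot(\mathbf{x})$. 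Taking $\mathbf{x}_{\mathrm{te}}=\mask(\mathbf{x})$, idempotence gives $\mask(\mathbf{x}_{\mathrm{te}})=\mask(\mathbf{x})$, and consistency gives $\hat\fcot(\mathbf{x})=\frcm(\mask(\mathbf{x}))$, which is exactly \eqref{eq:shortcut-noninvariance-witness}. The only remaining bookkeeping is tie-breaking, which is irrelevant here because the inequality in \eqref{eq:shortcut-gap-main} is strict.
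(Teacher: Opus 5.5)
Your proof is correct and follows the paper's argument essentially step for step. Like the paper, you compose $\hat\fcot$ with $\mask$, use idempotence on the pairs $(\mathbf{x},\mask(\mathbf{x}))$ to make $\hat\fcot\circ\mask$ feasible for the lifted MDL problem with $|\hat\fcot\circ\mask|_{\ell^{\mask}}\le|\hat\fcot|_\ell$, contradict the gap, and take $\mathbf{x}_{\mathrm{te}}:=\mask(\mathbf{x})$ as the witness. You also note that the contradiction needs only agreement on these pairs rather than full $\mask$-invariance, which makes explicit the step where the paper passes from ``not invariant'' to a witness lying in $\mask(\mathcal{X})$.
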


\begin{proof}
Suppose, toward a contradiction, that $\hat\fcot$ is $\mask$-invariant on $\mathcal{X}$. Then, for every $\mathbf{x}\in\mathcal{X}$, $(\hat\fcot\circ\mask)(\mathbf{x})=\hat\fcot(\mask(\mathbf{x}))=\hat\fcot(\mathbf{x})=\fcot(\mathbf{x})$, so $\hat\fcot\circ\mask$ is feasible for $\mathsf{MDL}_{\ell^{\mask}}(\mathcal{X},\fcot)$. By the definition of the lifted length, the same rule $\hat\fcot$ can be used after applying $\mask$ to represent $\hat\fcot\circ\mask$, hence $|\hat\fcot\circ\mask|_{\ell^{\mask}}\le |\hat\fcot|_\ell$. Since $\hat\fcot_{\mask}$ is the $\ell^{\mask}$-MDL solution,
\begin{equation}
|\hat\fcot_{\mask}|_{\ell^{\mask}}
\le
|\hat\fcot\circ\mask|_{\ell^{\mask}}
\le
|\hat\fcot|_\ell,
\end{equation}
contradicting the shortcut gap. Therefore there exists $\mathbf{x}\in\mathcal{X}$ such that $\hat\fcot(\mask(\mathbf{x}))\neq\hat\fcot(\mathbf{x})$. Set $\mathbf{x}_{\mathrm{te}}:=\mask(\mathbf{x})$. Idempotence gives $\mask(\mathbf{x}_{\mathrm{te}})=\mask(\mathbf{x})$, and consistency gives $\hat\fcot(\mathbf{x})=\fcot(\mathbf{x})=\frcm(\mask(\mathbf{x}))$. Thus $\mathbf{x}_{\mathrm{te}}\in\mask(\mathcal{X})$ is the claimed non-invariance witness.
\end{proof}

Thus a strict shortcut gap certifies non-invariance: the CoT predictor cannot be determined by the visible context alone. The failure is not on an entirely unfamiliar input. For some training sequence $\mathbf{x}$, the CoT predictor is correct on $\mathbf{x}$ but changes its answer on the test input $\mathbf{x}_{\mathrm{te}}:=\mask(\mathbf{x})$. Thus the failure lies in $\mask(\mathcal{X})\subseteq\operatorname{Eq}_{\mask}(\mathcal{X})$: the visible context is familiar, but the surrounding complete sequence has changed.\footnote{The strict gap is sufficient, not necessary. \red{If $\mask(\mathcal{X})\subseteq\mathcal{X}$, then by idempotence any CoT predictor consistent on $\mathcal{X}$ also makes $\hat\fcot\circ\mask$ feasible for the lifted MDL problem with $|\hat\fcot\circ\mask|_{\ell^{\mask}}\le|\hat\fcot|_\ell$, so a strict shortcut gap cannot occur.} Even without such a gap, CoT can still choose a non-invariant rule, for instance through a tie or another consistent choice (\Cref{prop:notb-and-a}).}

Specializing to recursive models, the visible context is the active frame $\maskrec(\mathbf{x})$, a subproblem that already appeared inside a complete sequence in the training set. The learned CoT predictor can answer correctly when the subproblem sits inside that sequence, but fail when the surrounding context changes, including when the same subproblem is presented on its own. \textbf{CoT has learned the subproblem as a pattern tied to the surrounding complete sequence, not as a context-independent local computation that can be reused when the surrounding sequence changes.}

% \begin{tcolorbox}[graybox]
% \textbf{Takeaway.}
% Simulation is not identification: CoT can represent recursive behavior, but recursive learning rules out shortcuts that depend on tokens outside the active frame.
% \end{tcolorbox}

% The experiments test exactly this mechanism. When we test at larger lengths or depths, many active frames remain familiar while the complete sequence changes (\Cref{sec:exp-length-depth}); the case studies create cues visible to CoT that agree with the training answers but are hidden from the active frame and reversed beyond the training domain (\Cref{sec:exp-case-studies}).

%% file: sections/sec5_experiments.tex
\section{Experiments}
\label{sec:experiments}

\red{We use symbolic expression evaluation as a controlled testbed for the theory developed above. The task is simple enough that the intended computation is unambiguous: each expression has an exact modulo-$10$ answer. It is also rich enough to expose the distinctions the theory cares about. Executions decompose into recursive subproblems, and the generator lets us vary the amount of IID training data, the executed trace length, the recursion depth, and whether a shortcut is visible in the full trace. This kind of synthetic program-execution testbed is close to that of \citet{xu2026training}, who train Transformers to execute programs and report evidence of OOD generalization. We use it for a different purpose: to isolate how context visibility affects which rule the learner selects.}

\red{Each execution gives two training views. A chain-of-thought (CoT) learner trains on the complete flattened call/return trace. A recursive-model (RM) learner trains on the same execution one active frame at a time, so each example contains only the subproblem being solved at that moment.}

Expressions are sampled from a small typed library with three kinds of functions. Primitive functions evaluate in one step; composite functions are named definitions that open further calls; and tail-recursive functions are named definitions that can call themselves with updated arguments. For example,
\[
\begin{aligned}
\text{\texttt{sum\_of\_squares}}(a,b)
&\Rightarrow
\text{\texttt{add}}(\text{\texttt{square}}(a), \text{\texttt{square}}(b)),\\
\text{\texttt{accum\_sum}}(n,\mathrm{acc})
&\Rightarrow
\text{\texttt{if\_then\_else}}\bigl(\text{\texttt{less}}(n,1),\, \mathrm{acc},\, \text{\texttt{accum\_sum}}(\text{\texttt{sub}}(n,1),\, \text{\texttt{add}}(\mathrm{acc},n))\bigr).
\end{aligned}
\]
The sampler chooses a root function and recursively fills its arguments with literals or sub-expressions, enforcing type consistency and a bounded depth in the written expression. The bound is on the written expression, not on the executed trace: composite expansion and tail recursion can make a short root problem unfold into a \red{long trace} or a deep recursion stack. The full library and a worked trace example are in \Cref{app:function-library}.

\red{Both learners use the same $6$-layer Transformer backbone and are compared on the same underlying expression pools within each experiment. The primary task metric is final-answer accuracy; the IID experiment also reports per-token test accuracy to diagnose how well each model fits the local next-token objective. Common generator and training details are in \Cref{app:experiment-setup}; experiment-specific setups are in \Cref{app:iid-experiment-setup,app:length-depth-setup,app:shortcut-behavior-setup}.}

\red{The experiments then ask three questions. \textbf{1. IID compositional generalization:} how much IID data is needed to learn new recursive compositions from the same generator? \textbf{2. Length and depth generalization:} what happens when test expressions unfold into longer traces or deeper recursion stacks than the training expressions? \textbf{3. Shortcut behavior:} when CoT fails, do its predictions follow trace-level shortcuts that RM cannot observe?}

\subsection{IID Compositional Generalization}
\label{sec:exp-iid-compositional}

\paragraph{Setup.}
\red{We first remove distribution shift and ask only whether the models learn the recursive computation from IID samples. For each training-set size $N$, RM and CoT train on the same nested set of expressions from the generator and are evaluated on held-out expressions from the same distribution. Each point in \Cref{fig:iid-sample-complexity} asks how well the two learners solve new IID expressions after training on $N$ distinct expressions. Checkpoints are selected by IID validation performance; the full protocol is in \Cref{app:iid-experiment-setup}.}

\paragraph{Results.}
\red{\Cref{fig:iid-sample-complexity} is consistent with the IID prediction of \Cref{sec:id-generalization}: in distribution, there is no large separation between RM and CoT. RM learns the final answer with fewer samples in the middle of the sweep, but CoT reaches the same near-perfect regime once the training set is large enough; on per-token accuracy, CoT is slightly higher at large $N$. The large gaps below appear only after the test distribution changes.}

\begin{figure}[!ht]
\centering
\includegraphics[width=0.86\textwidth]{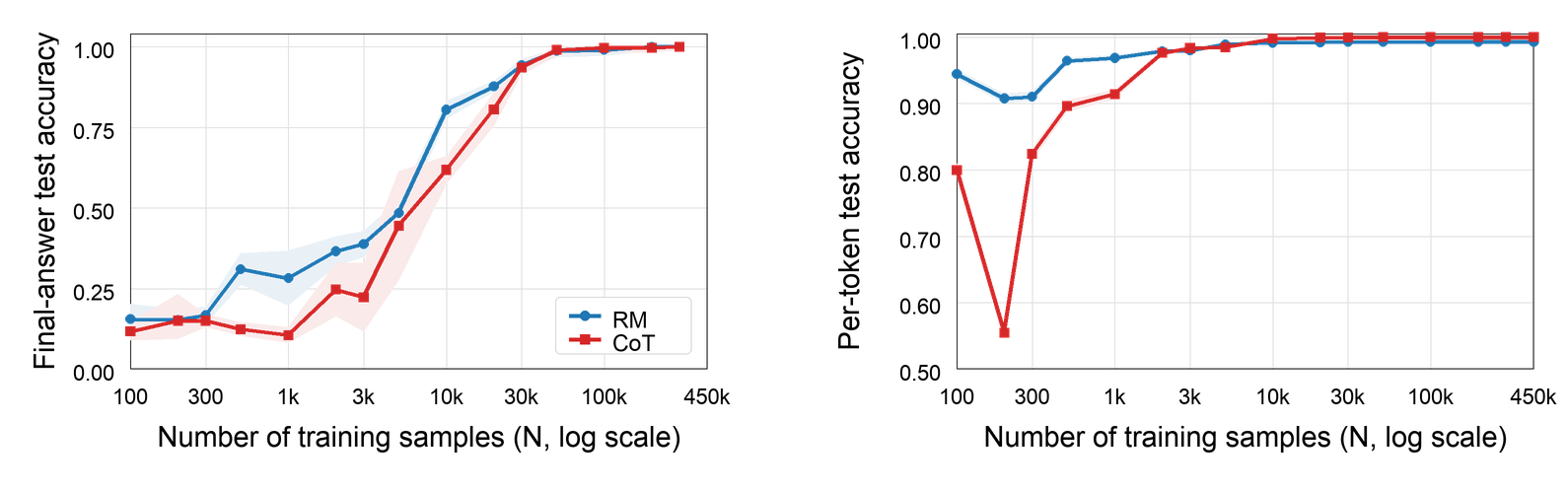}
\caption{\red{IID sample complexity: final-answer and per-token test accuracy.}}
\label{fig:iid-sample-complexity}
\end{figure}

\subsection{Length and Depth Generalization}
\label{sec:exp-length-depth}

\paragraph{Setup.}
We use the same expression generator but change which executions are allowed in training. The length split orders examples by executed trace length: the number of tokens in the executed call/return trace, not the length of the written expression. The depth split orders examples by maximum recursion depth. For a threshold $L$ or $K$, training examples lie below the threshold and OOD examples lie above it; held-out below-threshold examples give the IID check. Details are in \Cref{app:length-depth-setup}.

\paragraph{Results.}
\Cref{tab:per-bin-generalization} separates the below-threshold and above-threshold bins. Below the threshold, both learners are nearly perfect, so the OOD split is not simply a harder version of the IID test. Above the threshold, CoT degrades sharply with longer traces and steadily with deeper stacks. RM remains high across the depth shift and most of the length shift. \red{This matches the invariance picture from \Cref{sec:ood}: many OOD executions still reuse locally familiar subproblems for RM, while the flattened CoT trace has moved outside the training range.} RM drops only in the farthest length bin, where some active frames themselves become longer than those seen in training.

\begin{table}[t]
\centering
\caption{Length/depth generalization. Bins are multiples of the split thresholds $L=595$ and $K=10$.}
\label{tab:per-bin-generalization}
\scriptsize
\setlength{\tabcolsep}{3.2pt}
\renewcommand{\arraystretch}{1.0}

\resizebox{\textwidth}{!}{%
\begin{tabular}{@{}l*{3}{c}!{\hspace{4pt}\vrule width 0.6pt\hspace{4pt}}*{7}{c}@{}}
\toprule
\rowcolor{black!4}
\textbf{Length} & \multicolumn{3}{c}{\textbf{IID}} & \multicolumn{7}{c}{\textbf{OOD}} \\
\cmidrule(lr){2-4}\cmidrule(lr){5-11}
\rowcolor{black!2}
\textbf{Bin ($\times L$)} & $\le0.25$ & $0.25$--$0.5$ & $0.5$--$1.0$ & $1.0$--$1.25$ & $1.25$--$1.5$ & $1.5$--$2.0$ & $2.0$--$2.5$ & $2.5$--$3.0$ & $3.0$--$5.0$ & $>5.0$ \\
\midrule
\rowcolor{RoyalBlue!6}
\textbf{\red{RM acc. (\%)}} & $100.0$ & $100.0$ & $99.6$ & $99.3$ & $99.3$ & $99.6$ & $98.6$ & $95.7$ & $92.4$ & $76.4$ \\
\rowcolor{Red!5}
\textbf{\red{CoT acc. (\%)}} & $100.0$ & $100.0$ & $99.6$ & $79.0$ & $24.5$ & $14.2$ & $10.6$ & $6.2$ & $10.1$ & $9.7$ \\
\rowcolor{black!3}
\textbf{\red{Gap (pt)}} & $+0.0$ & $+0.0$ & $+0.0$ & $+20.2$ & $+74.8$ & $+85.4$ & $+88.0$ & $+89.4$ & $+82.4$ & $+66.7$ \\
\midrule
\midrule
\rowcolor{black!4}
\textbf{Depth} & \multicolumn{3}{c}{\textbf{IID}} & \multicolumn{7}{c}{\textbf{OOD}} \\
\cmidrule(lr){2-4}\cmidrule(lr){5-11}
\rowcolor{black!2}
\textbf{Bin ($\times K$)} & $\le0.3$ & $0.4$--$0.6$ & $0.7$--$1.0$ & $1.1$ & $1.2$--$1.3$ & $1.4$--$1.5$ & $1.6$--$1.7$ & $1.8$--$1.9$ & $2.0$--$2.1$ & $\ge2.2$ \\
\midrule
\rowcolor{RoyalBlue!6}
\textbf{\red{RM acc. (\%)}} & $100.0$ & $99.7$ & $99.6$ & $100.0$ & $99.6$ & $98.6$ & $98.4$ & $99.4$ & $98.6$ & $98.7$ \\
\rowcolor{Red!5}
\textbf{\red{CoT acc. (\%)}} & $100.0$ & $100.0$ & $99.3$ & $68.6$ & $71.6$ & $67.9$ & $63.1$ & $52.2$ & $48.2$ & $38.4$ \\
\rowcolor{black!3}
\textbf{\red{Gap (pt)}} & $+0.0$ & $-0.3$ & $+0.4$ & $+31.4$ & $+28.0$ & $+30.7$ & $+35.3$ & $+47.2$ & $+50.4$ & $+60.4$ \\
\bottomrule
\end{tabular}
}
\end{table}

\subsection{Shortcut Behavior}
\label{sec:exp-case-studies}

\paragraph{Setup and metric.}
The length/depth experiments show large CoT errors, but they do not by themselves identify what CoT learned. We therefore construct train/OOD pairs with a planted shortcut signal $S$. During training, $S$ agrees with the answer, so a predictor can fit the data by using $S$ instead of the intended computation. At OOD evaluation, the task format is unchanged but the coupling between $S$ and the answer is broken. The signal is visible in the flattened CoT trace but hidden from RM's active frame. We report accuracy and shortcut rate, the fraction of OOD examples on which CoT predicts $S$. Details are in \Cref{app:shortcut-behavior-setup}.

\paragraph{Two case studies.}
In the copy-answer case, $S$ is a deep returned value $z$ that CoT can copy from the trace. In the trace-position case, $S$ is whether the final answer token appears at an even or odd trace position. These are different kinds of shortcuts: one is an internal value token, the other is a global property of the trace. Both are present in the flattened trace and absent from RM's active-frame view. Under OOD evaluation, CoT usually predicts $S$; RM remains accurate.

\paragraph{Aggregate results.}
\Cref{tab:robustness} reports all six constructions. \red{The remaining rows vary where the shortcut lives: in a hidden intermediate value, in aggregates of internal returns, or in a value-position statistic in the trace.} CoT is almost perfect IID in every row, so the OOD collapse is not due to an inability to learn these distributions. Once the shortcut is broken, however, CoT's OOD predictions frequently match $S$. RM remains much more stable. These errors are not random extrapolation failures; they track trace-level rules that were valid only in training. \red{This supports the mechanism from \Cref{sec:ood}: CoT can represent the recursive rule, but the flattened trace also exposes simpler non-invariant rules that the active-frame view removes.}

\begin{table}[!htbp]
\centering
\caption{Shortcut behavior across six constructions.}
\label{tab:robustness}
\scriptsize
\setlength{\tabcolsep}{3.2pt}
\renewcommand{\arraystretch}{1.0}
\resizebox{\textwidth}{!}{%
\begin{tabular}{@{}l>{\columncolor{RoyalBlue!6}}c>{\columncolor{Red!5}}c!{\hspace{4pt}\vrule width 0.6pt\hspace{4pt}}>{\columncolor{RoyalBlue!6}}c>{\columncolor{Red!5}}ccc@{}}
\toprule
\rowcolor{black!4}
\textbf{Construction} & \multicolumn{2}{c}{\textbf{IID}} & \multicolumn{4}{c}{\textbf{OOD}} \\
\cmidrule(lr){2-3}\cmidrule(lr){4-7}
\rowcolor{black!2}
& \textbf{RM acc.} & \textbf{CoT acc.} & \textbf{RM acc.} & \textbf{CoT acc.} & \textbf{Gap (pt)} & \textbf{Shortcut rate} \\
\midrule
Copy answer from deep return (Case 1) & $100.0\%$ & $100.0\%$ & $87.3\%$ & $\phantom{0}7.0\%$ & $+80.3$ & $92.7\%$ \\
Hidden intermediate literal & $\phantom{0}90.3\%$ & $\phantom{0}99.7\%$ & $86.0\%$ & $\phantom{0}0.3\%$ & $+85.7$ & $99.3\%$ \\
Long-range aggregate & $\phantom{0}99.7\%$ & $100.0\%$ & $95.7\%$ & $10.3\%$ & $+85.3$ & $88.0\%$ \\
Near-answer aggregate & $100.0\%$ & $100.0\%$ & $99.7\%$ & $\phantom{0}1.0\%$ & $+98.7$ & $79.7\%$ \\
Position-weighted aggregate & $100.0\%$ & $100.0\%$ & $97.0\%$ & $14.7\%$ & $+82.3$ & $69.7\%$ \\
Trace-position parity (Case 2) & $100.0\%$ & $100.0\%$ & $99.7\%$ & $\phantom{0}9.7\%$ & $+90.0$ & $90.3\%$ \\
\bottomrule
\end{tabular}
}
\end{table}

%% file: sections/appendix_related.tex
\section{Related Work}
\label{sec:related-work}

Our work is closest to recursive and context-isolated approaches to language-model reasoning.
\citet{zhang2025recursive} introduce \emph{recursive language models}, where an LLM treats a long prompt as an external environment and recursively inspects or decomposes selected snippets; related systems use recursive spawning, divide-and-conquer reasoning, context folding, adaptive decomposition, or planner-executor structure to extend long-context reasoning and planning~\citep{yang2025recursive,lee2023recursion,prasad2024adapt,schroeder2025thread,pan2025learning,sun2025scalinglonghorizonllmagent,zhang2025recap}.
These works motivate the same design principle that the model should sometimes see a fresh subproblem sequence rather than the entire complete sequence.
They mainly ask how recursion or context management improves inference, computation, or task accuracy.
We ask a different statistical question: after setting aside the context-length advantage, does context isolation change what is learned?
This places the paper between several lines of work.
Chain-of-thought, scratchpads, least-to-most prompting, decomposed prompting, and tree-of-thought expose or organize intermediate computation in a visible sequence~\citep{nye2021show,wei2022chain,zhou2022least,khot2022decomposed,yao2024tree}.
Retrieval, memory, summarization, and agent/tool systems choose which information is shown to the model at each step~\citep{yang2025pencil,packer2024memgptllmsoperatingsystems,wang2024augmenting,wu2025resumunlockinglonghorizonsearch,yu2025memagent,yao2023react,schick2023toolformer,hong2024metagpt,wu2023autogen}.
Classical Occam/MDL arguments and the bitter-lesson view explain why a broader CoT learner need not pay much for IID prediction: a general function approximator can still contain the right recursive rule compactly~\citep{rissanen1978modeling,grunwald2007minimum,sutton2019bitter}.
Shortcut-learning, simplicity-bias, and spurious-correlation work point in the opposite direction under distribution shift: information that is harmless in distribution can support easier or shorter non-invariant explanations that fail off domain~\citep{geirhos2020shortcut,shah2020pitfalls,pezeshki2021gradient,sagawa2020investigation,nagarajan2021understanding,shi2023distracted,liu2024lost,peters2016causal,arjovsky2019invariant}.
Our contribution is to put these views together.
CoT can simulate recursive computation and therefore loses little in IID sample complexity, but the complete sequence also exposes shorter non-invariant explanations; recursive context isolation helps OOD generalization by removing those explanations from the learner's view.

%% file: sections/appendix_compression.tex
\section{Background: Prefix Codes and the Realizable MDL Bound}
\label{sec:appendix-compression}

This appendix provides the technical background for the MDL framework used in \Cref{sec:id-learning}. \Cref{sec:prefix-codes} explains how prefix-code lengths can be read as probability weights, which is the formal basis for interpreting a complexity measure as a prior. \Cref{sec:proof-mdl-gen} gives a self-contained proof of \Cref{thm:mdl-gen-bound}.

\subsection{Prefix Codes and the Code--Distribution Correspondence}
\label{sec:prefix-codes}

Let $\mathcal{U}$ be a countable set. A \emph{code} for $\mathcal{U}$ is a mapping $C: \mathcal{U} \to \{0,1\}^*$ that assigns each $u \in \mathcal{U}$ a binary codeword $C(u)$; its \emph{codelength function} is $L_C(u) = |C(u)|$. A code is \emph{prefix-free} if no codeword is a prefix of any other, so that a concatenation $C(u_1)C(u_2)\cdots C(u_n)$ is uniquely decodable from left to right~\citep{cover2006elements}. The Kraft inequality characterizes which codelength assignments are realizable by prefix-free codes.

\begin{theorem}[Kraft Inequality]
\label{thm:kraft}
For nonnegative integer lengths $L(u_1), L(u_2), \ldots$, a prefix-free code over $\mathcal{U}$ with these codelengths exists if and only if
\begin{align}
\sum_{u \in \mathcal{U}} 2^{-L(u)} \leq 1.
\end{align}
\end{theorem}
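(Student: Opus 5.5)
We prove the two directions separately, both through the standard correspondence between binary strings and dyadic subintervals of $[0,1)$. A word $w=w_1\cdots w_L\in\{0,1\}^L$ is identified with the interval
\begin{align}
I(w):=\Bigl[\textstyle\sum_{k=1}^L w_k2^{-k},\ \sum_{k=1}^L w_k2^{-k}+2^{-L}\Bigr),
\end{align}
which has length $2^{-L}$. The basic fact we need is this: for two words $v,w$, the intervals $I(v)$ and $I(w)$ are either nested or disjoint, and they are nested exactly when one word is a prefix of the other. This holds because $I(w)$ is precisely the set of reals whose binary expansion, taking the non-terminating-in-$1$s representative, begins with $w$.

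\emph{Necessity.} Let $C$ be prefix-free with lengths $L(u)$. By the basic fact, the intervals $I(C(u))$, $u\in\mathcal{U}$, are pairwise disjoint subsets of $[0,1)$. Hence for every finite $F\subseteq\mathcal{U}$ we have $\sum_{u\in F}2^{-L(u)}\le 1$. Since the terms are nonnegative, the full countable sum is the supremum of these finite partial sums and is therefore at most $1$. Probabilistically, the same argument reads: if $B$ is a uniformly random infinite bit string, the events ``$B$ begins with $C(u)$'' are disjoint and have probabilities $2^{-L(u)}$.

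\emph{Sufficiency.} Assume $\sum_u 2^{-L(u)}\le1$. First we fix a good enumeration. For each integer $L$, at most $2^L$ elements can have length $L$, since otherwise their terms alone would sum to more than $1$. We can therefore enumerate $\mathcal{U}$ as $u_1,u_2,\ldots$ with $L_1\le L_2\le\cdots$, listing the finitely many elements of each length before moving to the next length, so every element receives a finite index. Set
\begin{align}
S_i:=\sum_{j<i}2^{-L_j}.
\end{align}
Each term with $j<i$ has $L_j\le L_i$, so $S_i$ is an integer multiple of $2^{-L_i}$. Moreover $S_i+2^{-L_i}\le\sum_j 2^{-L_j}\le1$. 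It follows that $S_i=m_i2^{-L_i}$ with $0\le m_i\le 2^{L_i}-1$. We define $C(u_i)$ to be the $L_i$-bit binary representation of $m_i$, so that $I(C(u_i))=[S_i,S_i+2^{-L_i})$. These intervals are consecutive and hence pairwise disjoint, and by the basic fact no codeword is a prefix of another. This gives the desired prefix-free code.

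The step needing most care is the sufficiency construction in the countably infinite case. There we must check that the length-sorted enumeration exists (each length class is finite) and that the divisibility and bound $S_i\le 1-2^{-L_i}$ make the $L_i$-bit truncation exact. Both points follow from the monotone ordering and the Kraft sum, as shown above.
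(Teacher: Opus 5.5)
Your proof is correct, and it handles the countably infinite case properly: finiteness of each length class, divisibility of $S_i$ by $2^{-L_i}$, and the bound $S_i \le 1-2^{-L_i}$ are exactly the points that need checking. The paper gives no proof of its own here---it treats the result as standard and cites Cover and Thomas---and your dyadic-interval argument, assigning consecutive intervals in order of nondecreasing length, is the textbook proof of the (extended) Kraft inequality found there.
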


\paragraph{Code lengths as probability weights.}
The Kraft inequality lets prefix-code lengths be read as (sub-)probability weights. Given a prefix code $C$ with lengths $L_C(u)$,
\begin{align}
\label{eq:code-to-prob}
P(u) = 2^{-L_C(u)}
\end{align}
is a (sub-)probability distribution on $\mathcal{U}$: short codewords correspond to high probability. Conversely, any distribution $P$ on $\mathcal{U}$ gives valid prefix-code lengths
\begin{align}
\label{eq:prob-to-code}
L_C(u) = \lceil -\log_2 P(u) \rceil,
\end{align}
since $\sum_u 2^{-\lceil -\log_2 P(u)\rceil} \leq \sum_u P(u) = 1$. The ceiling contributes at most one bit of overhead.

\begin{proposition}[Code Lengths and Distributions]
\label{prop:code-distribution}
Prefix-code lengths over $\mathcal{U}$ and (sub-)probability weights on $\mathcal{U}$ are interchangeable up to integer rounding~\citep[Theorems~5.2.1, 5.3.1]{cover2006elements}: lengths $L(u)$ satisfying Kraft define weights $2^{-L(u)}$, and any distribution $P$ gives valid lengths $\lceil -\log_2 P(u) \rceil$. Assigning probability $P(u)$ to an outcome is therefore equivalent to encoding it in $-\log_2 P(u)$ bits, up to rounding.
\end{proposition}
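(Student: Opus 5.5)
The plan is to prove the two directions separately, since Proposition~\ref{prop:code-distribution} is essentially a repackaging of \Cref{thm:kraft} in probabilistic language; most of the work is bookkeeping about sub-probability versus probability and about integrality of lengths.

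\textbf{From lengths to weights.} Suppose $L:\mathcal{U}\to\mathbb{Z}_{\ge0}$ are lengths of some prefix code, or more generally any lengths satisfying Kraft. Set $P(u):=2^{-L(u)}$. Each weight is nonnegative, and the Kraft inequality gives $\sum_u P(u)\le1$, so $P$ is a sub-probability distribution. Moreover $-\log_2 P(u)=L(u)$ exactly, so in this direction no rounding is needed. If we start from an actual prefix code rather than abstract lengths, we first apply the ``only if'' half of \Cref{thm:kraft} (the standard binary-tree / disjoint dyadic intervals argument) to obtain Kraft.

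\textbf{From weights to lengths.} Given a (sub-)probability $P$, define $L(u):=\lceil-\log_2P(u)\rceil$ for $P(u)>0$. Outcomes with $P(u)=0$ are dropped or assigned $+\infty$, consistent with the $+\infty$ convention of \Cref{def:complexity-measure}. These are nonnegative integers because $P(u)\le1$. Since $L(u)\ge-\log_2P(u)$, we have $2^{-L(u)}\le P(u)$, and summing gives $\sum_u2^{-L(u)}\le\sum_uP(u)\le1$. The ``if'' half of \Cref{thm:kraft} then produces a prefix-free code with these lengths. For countably infinite $\mathcal{U}$, I would invoke the standard construction that sorts lengths nondecreasingly and assigns consecutive dyadic intervals. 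Finally, $-\log_2P(u)\le L(u)<-\log_2P(u)+1$, which is the claimed ``up to one bit of rounding'' equivalence.

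\textbf{Main obstacle.} The only step requiring care is the constructive ``if'' direction of Kraft for countably infinite alphabets, where one cannot simply enumerate finitely many codewords. I would handle it by ordering the outcomes so that lengths are nondecreasing. When such an ordering does not exist directly, I would instead process lengths level by level: at each depth, finitely many outcomes have that length because the Kraft sum is finite. Outcome $u_k$ is then assigned the first $L(u_k)$ bits of the binary expansion of $\sum_{j<k}2^{-L(u_j)}$, and the interval-disjointness argument shows that no codeword is a prefix of another. Since \Cref{thm:kraft} is stated earlier and cited to \citet{cover2006elements}, I would simply invoke it rather than reprove it.
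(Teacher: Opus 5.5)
Your proposal is correct and follows the paper's own argument: Kraft makes $2^{-L(u)}$ a sub-probability, and the lengths $\lceil -\log_2 P(u)\rceil$ satisfy Kraft because $2^{-\lceil -\log_2 P(u)\rceil}\le P(u)$, with at most one bit of rounding overhead. The paper cites \Cref{thm:kraft} for the existence of the code rather than spelling out the countable construction, so your discussion of the infinite case is a harmless elaboration; a nondecreasing enumeration always exists there, because each length level holds at most $2^k$ outcomes.
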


In particular, any complexity measure $\ell$ on predictors induces a (sub-)prior $2^{-\ell(h)}$, under which shorter descriptions receive higher prior weight. This is the reading of $\ell$ as a prior used in the main text.

\subsection{Proof of the Realizable MDL Bound}
\label{sec:proof-mdl-gen}

We restate \Cref{thm:mdl-gen-bound} and give a self-contained proof following the standard Occam argument~\citep[Section~7.3]{shalev2014understanding}.

\noindent\textbf{\Cref{thm:mdl-gen-bound}} (Realizable MDL Generalization Bound, restated)\textbf{.}
\textit{\red{Fix a complexity measure $\ell$ satisfying the Kraft inequality $\sum_h 2^{-|h|_\ell} \leq 1$, a target $f^*$ with $|f^*|_\ell<+\infty$, and $\delta\in(0,1)$.} Let $\mathcal{X} = \{\mathbf{x}^{(1)}, \ldots, \mathbf{x}^{(n)}\}$ be $n$ i.i.d.\ samples from a distribution $\mathcal{D}$ on $\Omega$, and write $\mathcal{L}_{\mathcal{D}}(h) = \Pr_{\mathbf{x}\sim\mathcal{D}}[h(\mathbf{x}) \neq f^*(\mathbf{x})]$ and $\widehat{\mathcal{L}}(h) = \frac{1}{n}\sum_{i=1}^n \mathds{1}(h(\mathbf{x}^{(i)}) \neq f^*(\mathbf{x}^{(i)}))$. Then with probability at least $1 - \delta$ over the draw of $\mathcal{X}$, \red{every finite-length predictor $h$} with $\widehat{\mathcal{L}}(h)=0$ satisfies}
\begin{align}
\mathcal{L}_{\mathcal{D}}(h) \leq \frac{|h|_\ell\ln 2 + \ln(1/\delta)}{n},
\end{align}
\textit{In particular, the MDL learner \red{$\hat f = \mathsf{MDL}_\ell(\mathcal{X}, f^*)$} satisfies $\widehat{\mathcal{L}}(\hat f) = 0$ and $|\hat f|_\ell\le |f^*|_\ell$, so}
\begin{align}
\mathcal{L}_{\mathcal{D}}(\hat f) \leq \frac{|\hat f|_\ell\ln 2 + \ln(1/\delta)}{n}
\leq \frac{|f^*|_\ell\ln 2 + \ln(1/\delta)}{n}.
\end{align}

\noindent The proof uses the realizable Occam argument: a predictor with large population error is unlikely to fit all training labels, and the Kraft inequality lets us union bound over predictors.

\paragraph{Step 1: A single consistent predictor.}
Fix any predictor $h$, and write $p=\mathcal{L}_{\mathcal{D}}(h)$. The probability that $h$ agrees with $f^*$ on all $n$ samples is
\begin{align}
\Pr[\widehat{\mathcal{L}}(h)=0]=(1-p)^n \leq \exp(-np).
\label{eq:consistent-single}
\end{align}
Therefore, for any threshold $\epsilon>0$,
\begin{align}
\Pr\!\left[\widehat{\mathcal{L}}(h)=0 \text{ and } \mathcal{L}_{\mathcal{D}}(h)>\epsilon\right]
\leq \exp(-n\epsilon).
\end{align}

\paragraph{Step 2: Weighted union bound via the Kraft inequality.}
Set
\begin{align}
\epsilon_h=\frac{|h|_\ell\ln 2+\ln(1/\delta)}{n}.
\end{align}
By \Cref{eq:consistent-single},
\begin{align}
\Pr\!\left[\widehat{\mathcal{L}}(h)=0 \text{ and } \mathcal{L}_{\mathcal{D}}(h)>\epsilon_h\right]
\leq \exp(-n\epsilon_h)
=\delta\,2^{-|h|_\ell}.
\end{align}
Taking a union bound over \red{all finite-length predictors $h$} and using Kraft,
\begin{align}
\sum_h \delta\,2^{-|h|_\ell} \leq \delta.
\end{align}
Thus, with probability at least $1-\delta$, no predictor is both consistent with the sample and has population loss above its threshold $\epsilon_h$. Equivalently, every $h$ with $\widehat{\mathcal{L}}(h)=0$ satisfies
\begin{align}
\mathcal{L}_{\mathcal{D}}(h) \leq \frac{|h|_\ell\ln 2+\ln(1/\delta)}{n}.
\end{align}

\paragraph{MDL corollary.} The MDL learner \red{$\hat f = \mathsf{MDL}_\ell(\mathcal{X}, f^*)$} agrees with $f^*$ on every input in $\mathcal{X}$, so $\widehat{\mathcal{L}}(\hat f) = 0$. Moreover, by definition $|\hat f|_\ell \leq |f^*|_\ell$. Substituting:
\begin{align}
\mathcal{L}_{\mathcal{D}}(\hat f)
\leq \frac{|\hat f|_\ell\ln 2 + \ln(1/\delta)}{n}
\leq \frac{|f^*|_\ell\ln 2 + \ln(1/\delta)}{n}.
\end{align}
This completes the proof. \qed

%% file: sections/appendix_transformer.tex
\section{Transformer Architecture}
\label{app:transformer}
\label{sec:appendix-transformer}

This appendix provides a self-contained formal definition of the decoder-only Transformer architecture used throughout the paper. Our formulation follows the standard architecture of \citet{vaswani2017attention}, with hard attention as the default attention mechanism, following recent theoretical work~\citep{hahn2020theoretical, merrill2022saturated}. We assume exact arithmetic over the reals unless stated otherwise.

Let $\Sigma$ be a finite vocabulary with $|\Sigma| = K$, let $\Omega:=\Sigma^*$, and write $\concat$ for concatenation. Token sequences are written as $\mathbf{x} = (x_1, x_2, \ldots, x_T) \in \Sigma^T$, where $T$ is the sequence length. Embedding sequences are written as $H \in \mathbb{R}^{T \times d}$, where $d$ is the embedding dimension. We use $H_i \in \mathbb{R}^d$ to denote the embedding at position $i$.

\paragraph{Transformer layers.}
A Transformer of depth $L$ transforms an input embedding sequence $H^{(0)} = H \in \mathbb{R}^{T \times d}$ through $L$ layers. We allow layer-dependent hidden dimensions $d^{(0)}, d^{(1)}, \ldots, d^{(L)}$ with $d^{(0)} = d$, so $H^{(\ell)} \in \mathbb{R}^{T \times d^{(\ell)}}$. Each layer applies two sublayers: an attention mechanism followed by a feed-forward network:
\begin{align}
\tilde{H}^{(\ell)} = \mathsf{Attn}^{(\ell)}(H^{(\ell-1)}), \qquad H^{(\ell)} = \mathsf{FFN}^{(\ell)}(\tilde{H}^{(\ell)}).
\end{align}
Our results hold regardless of whether residual connections are included.

\paragraph{Hard attention.}
The attention mechanism enables data-dependent selection of information across positions. For each attention head, queries, keys, and values are computed via affine transformations:
\begin{align}
Q_i = W_Q H_i + b_Q, \qquad K_i = W_K H_i + b_K, \qquad V_i = W_V H_i + b_V,
\end{align}
where $W_Q, W_K, W_V$ are projection matrices and $b_Q, b_K, b_V$ are bias vectors (suppressing the layer index $\ell$). Each head may also include an output affine map $W_O,b_O$. We use \emph{causal masking}: position $i$ can only attend to positions $j \le i$. We analyze \emph{hard attention}~\citep{hahn2020theoretical, merrill2022saturated}, the low-temperature limit of softmax attention where weights concentrate on score-maximizing positions. Let
\begin{equation}
M_i = \bigl\{j \le i : \langle Q_i, K_j \rangle = \max_{k \le i} \langle Q_i, K_k \rangle\bigr\}
\end{equation}
denote the set of causally-visible positions achieving the maximum score, and define attention weights $\alpha_{i,j} = \mathbf{1}[j \in M_i] / |M_i|$. The pre-output attention value is $U_i = \sum_{j} \alpha_{i,j}\, V_j$, and the head output is $W_O U_i + b_O$ when an output map is present. The generalization bound of \Cref{thm:iid-sample-complexity} does not depend on the tie-breaking rule; the simulation construction of \Cref{thm:cot-simulation} (see \Cref{app:proof-cot-simulation}) specifically instantiates the \emph{rightmost-hard} variant~\citep{hahn2020theoretical}, in which ties are resolved in favor of the rightmost position.

\paragraph{Multi-head attention.}
A multi-head attention layer at layer $\ell$ with $h^{(\ell)}$ heads computes $h^{(\ell)}$ attention outputs in parallel. Each head $r \in \{1, \ldots, h^{(\ell)}\}$ has its own learned block of affine maps $W_Q^{(\ell,r)}, W_K^{(\ell,r)}, W_V^{(\ell,r)}, W_O^{(\ell,r)}$ and the corresponding biases, and applies hard attention as defined above to produce $\mathsf{Attn}^{(\ell,r)}(H)$. The outputs are summed:
\begin{equation}
\mathsf{Attn}^{(\ell)}(H) = \sum_{r=1}^{h^{(\ell)}} \mathsf{Attn}^{(\ell,r)}(H).
\end{equation}
In practice, Transformers partition the input dimension across heads, concatenate their outputs, and apply a linear projection; the summation formulation is an equivalent notational simplification for the arguments below.

\paragraph{Feed-forward network.}
The $\mathsf{FFN}$ sublayer at layer $\ell$ is a position-wise MLP with ReLU activations, mapping $\mathbb{R}^{d^{(\ell-1)}} \to \mathbb{R}^{d^{(\ell)}}$. It has internal depth $D^{(\ell)}$ and hidden dimensions specified by a sequence $\bar{d}^{(\ell)} \in \mathbb{N}^{D^{(\ell)}+1}$ where the first element equals $d^{(\ell-1)}$ and the last equals $d^{(\ell)}$. The results extend to any piecewise-linear activation~\citep{bartlett1998almost}.

\paragraph{Product gates (for the simulation theorem).}
The simulation construction of \Cref{thm:cot-simulation} (\Cref{app:proof-cot-simulation}) additionally assumes the FFN class admits coordinatewise product gates, equivalently ReGLU-style gates: a constant-depth position-wise network can compute a bilinear form $\langle u, v \rangle$ between two hidden vectors $u, v \in \mathbb{R}^{d^{(\ell)}}$. This assumption does not affect the IID comparison of \Cref{thm:iid-sample-complexity}, which depends on the description length of the predictor returned by MDL.

\paragraph{Input layer.}
The Transformer operates on token sequences $\mathbf{x} \in \Sigma^T$. The \emph{token embedding} $\mathsf{TE} : \Sigma \to \mathbb{R}^d$, parametrized by $W_{\mathsf{TE}} \in \mathbb{R}^{d \times K}$, maps each token to a $d$-dimensional vector. The \emph{positional embedding} $\mathsf{PE} : \{1, \ldots, T\} \to \mathbb{R}^d$ encodes position information. The input embedding sequence is:
\begin{align}
H = \bigl(\mathsf{TE}(x_1) + \mathsf{PE}(1),\; \ldots,\; \mathsf{TE}(x_T) + \mathsf{PE}(T)\bigr) \in \mathbb{R}^{T \times d}.
\end{align}
Our architecture allows any fixed (non-learnable) positional embedding $\mathsf{PE}$.

\paragraph{Output layer.}
The output is extracted from the final position $H^{(L)}_T \in \mathbb{R}^{d^{(L)}}$ of the last layer. A decoding matrix $W_{\mathsf{DE}} \in \mathbb{R}^{K \times d^{(L)}}$ maps the final representation to logits over the vocabulary:
\begin{align}
f_\theta(\mathbf{x}) = \arg\max_{a \in \Sigma}\, \bigl[W_{\mathsf{DE}}\, H^{(L)}_T\bigr]_a \in \Sigma.
\end{align}
In practice, $W_{\mathsf{DE}}$ may share weights with the token embedding $W_{\mathsf{TE}}$ (\emph{weight tying}); the analysis permits both tied and untied configurations.

\paragraph{Hypothesis classes.}
A Transformer architecture $\mathcal{A}$ is fully specified by:
\begin{align}
\mathcal{A} = \bigl\{K,\; L,\; \{d^{(\ell)}\}_{\ell=0}^L,\; \{h^{(\ell)}\}_{\ell=1}^L,\; \{\bar{d}^{(\ell)}\}_{\ell=1}^L,\; \mathsf{PE}\bigr\},
\end{align}
encoding vocabulary size $K$, depth $L$, layer-wise hidden dimensions, attention heads per layer, $\mathsf{FFN}$ internal dimensions, and positional encoding. We write $L(\mathcal{A}) := L$ for the depth and $W(\mathcal{A})$ for the number of independently stored learned parameters. Standard reuse of the same layer across sequence positions and autoregressive time steps is counted once. In lifted architectures, lane-specific or subroutine-specific copies of a learned block are counted separately, while fixed wiring and fixed non-learned operations are not counted. Equivalently, in the main text we write $L_\mathcal{A}:=L(\mathcal{A})$ and $W_\mathcal{A}:=W(\mathcal{A})$. For the lifted constructions below, the architecture specification may also include several named hidden-state lanes and fixed non-learned operations, as described in \Cref{app:lane-wise-architecture}. For a fixed architecture $\mathcal{A}$, the hypothesis class of next-token generators is
\begin{equation}
\mathcal{F}_{\mathcal{A}} := \{f_\theta : \theta \in \mathbb{R}^{W(\mathcal{A})}\}.
\end{equation}
We say that a function $f : \Sigma^* \to \Sigma$ is \emph{realized} by architecture $\mathcal{A}$ if $f \in \mathcal{F}_{\mathcal{A}}$, i.e., there exist parameters $\theta \in \mathbb{R}^{W(\mathcal{A})}$ such that $f_\theta(\mathbf{x}) = f(\mathbf{x})$ for all $\mathbf{x} \in \Sigma^*$.

\subsection{Lanes and Unfolded Parameter Count}
\label{app:lane-wise-architecture}

\paragraph{Learned blocks and parameter count.}
A Transformer architecture specifies a finite list of learned scalar parameters
\begin{equation}
\theta=(\theta_1,\ldots,\theta_{W(\mathcal{A})})\in\mathbb{R}^{W(\mathcal{A})}.
\end{equation}
All learned matrices and biases appearing in the computation graph are assembled from this list. We count $W(\mathcal{A})$ as the number of independent learned scalars in this list. Fixed positional features, fixed masks, fixed exact-match selectors, fixed copy operations, fixed merge gates, and other non-learned arithmetic operations are part of the architecture specification and are not counted in $W(\mathcal{A})$.

\paragraph{Lanes.}
A lane is just a named copy of the hidden-state stream, used to store a different simulated view of the same input sequence. A layer may maintain several such lanes
\begin{equation}
\begin{aligned}
H^{(\ell,r)}=(H^{(\ell,r)}_1,\ldots,H^{(\ell,r)}_T),
\qquad r\in\mathcal{R}_\ell,
\end{aligned}
\end{equation}
where each lane has a prescribed width.

\paragraph{Unfolded learned-parameter count.}
In lifted architectures we use the dense unfolded convention. If a construction applies an original learned block to several lanes or subroutines, these applications are represented by independent learned copies and all copies are counted in $W(\mathcal{A})$. In a simulation proof, we may choose these copied parameters to have the same numerical values as the original block. This equality is part of the witness parameter setting, not an architectural weight-sharing constraint.

Fixed positional features, fixed masks, fixed exact-match selectors, fixed copy operations, fixed merge gates, and other non-learned arithmetic operations are part of the architecture specification and are not counted in $W(\mathcal{A})$.

\paragraph{Autoregressive generation.}
The Transformer serves as a next-token generator $f_\theta : \Omega \to \Sigma$ (\Cref{sec:modularity-lm}). Given an initial input sequence $\mathbf{x}_0 \in \Omega$, autoregressive generation repeatedly applies $f_\theta$ and appends the output:
\begin{align}
\mathbf{x}_{t+1} = \mathbf{x}_t \concat f_\theta(\mathbf{x}_t), \qquad t\ge 0,
\end{align}
as in \Cref{sec:ar-setup}. At each step, the same parameters $\theta$ are used. In the CoT setting, $f_\theta$ reads the complete sequence $\mathbf{x}_t$. In the recursive model setting (\Cref{sec:recursive-model}), the predictor reads the active frame $\maskrec(\mathbf{x}_t)$.

\paragraph{Teacher-forced training.}
In practice, autoregressive models are trained using teacher forcing~\citep{williams1989learning}: at each step, the model predicts the next token conditioned on the ground-truth prefix rather than its own previous predictions. Given a dataset $S$ of prefix--next-token pairs $(\mathbf{x}, y)$ drawn from correct generated sequences, the training objective is the empirical cross-entropy loss:
\begin{align}
\widehat{\mathcal{L}}(\theta;\, S) = \frac{1}{|S|} \sum_{(\mathbf{x},\, y) \in S} -\log\, p_\theta(y \mid \mathbf{x}),
\end{align}
where $p_\theta(a \mid \mathbf{x}) = \mathrm{softmax}(W_{\mathsf{DE}}\, H^{(L)}_T)_a$ is the predicted probability of token $a$ given prefix $\mathbf{x}$.

%% file: sections/appendix_simulation.tex
\section[Proof of CoT Simulation]{Proof of \Cref{thm:cot-simulation} (CoT Simulation)}
\label{app:proof-cot-simulation}
\label{sec:appendix-simulation}

\paragraph{Proof idea.}
When we replay the recursive stack update, an ordinary token inside an unfinished call/return block has two possible roles. Before the block closes, it is part of the current unfinished suffix. After the block closes, the same token belongs to the frame that eventually owns it. We therefore maintain two simulated hidden states at every position: an \emph{owner state} $H^O$ for the eventual frame, and a \emph{pending state} $H^P$ for the unfinished suffix. The proof has three steps. First, we characterize $\maskrec(\mathbf{x}_{1:i})$ as either an owner sequence $O_j$ or a pending sequence $P_i$ (\Cref{app:imagined-inputs}). Second, we run each layer of $\mathcal{A}$ on both states in parallel (\Cref{app:lifted-sim}). Third, we read out the state corresponding to the current active frame and count resources (\Cref{app:readout}).

\paragraph{The lifted invariant.}
Everything below is aimed at maintaining, through every layer $\ell \in \{0, \ldots, L\}$, the invariant
\begin{equation}
\label{eq:lifted-invariant}
\begin{aligned}
H_j^{O,(\ell)}
&= H^{\mathcal{A},\ell}_{|O_j|}(O_j)
&&\text{for every } x_j \in \Sigma_{\mathrm{ord}},\\
H_j^{P,(\ell)}
&= H^{\mathcal{A},\ell}_{|P_j|}(P_j)
&&\text{for every } j \in \mathcal{P}.
\end{aligned}
\end{equation}
where $O_j, P_j$ are the two imagined input sequences constructed in \Cref{app:imagined-inputs} and $\mathcal{P}$ is the set of pending positions.

\paragraph{Assumptions.}
We work on legal prefixes (\Cref{def:legal-prefix}); non-legal inputs are mapped arbitrarily by $\mathcal{A}'$. The lifted family is rightmost-hard with product-gated FFNs and reindexable positional features (\Cref{app:transformer}). Here rightmost-hard only fixes the tie-breaking rule, and reindexable positional features mean that the construction can use the positional feature for a simulated position such as $|O_j|$ or $|P_j|$. Without loss of generality every input $\mathbf{x}$ begins with a distinguished sentinel $\BOS \in \Sigma_{\mathrm{ord}}$, and we adopt the convention $\frcm(\epsilon) := \frcm(\BOS)$ for edge cases in which the observation is empty. We use the unfolded parameter-count convention of \Cref{app:lane-wise-architecture}: lane-specific and subroutine-specific applications of an original learned block are implemented as independent learned copies in $\mathcal{A}'$, and the simulator parameter setting assigns those copies the same numerical values as the corresponding block of $\mathcal{A}$. For constant-depth claims we use known constant-depth constructions from \citet{yang2025pencil} (prefix sums, conditional maxima, rightmost exact-match copying, local piecewise-linear computations, $O(\log n)$-precision integer multiplication); together with product gates these suffice to compute bilinear forms over $O(d^{(\ell)})$-dimensional hidden vectors.

\paragraph{Restricted-attention scope.}
The finite-penalty implementation of restricted attention below is stated for the bounded-precision parameter classes used in \Cref{sec:id-generalization}; with exact non-learned hard masks for the computed allowed-position sets, the same simulation applies without this bounded-score step.

\paragraph{Proof convention.}
The construction uses the unfolded convention of \Cref{app:lane-wise-architecture}. Each lane-specific or subroutine-specific application of an original learned attention head, FFN, or decoder block is an independent learned copy in $\mathcal{A}'$. To prove simulation, we exhibit a parameter setting in which each copy has the same numerical value as the corresponding block of $\mathcal{A}$. For readability, the formulas below write these copied matrices using the original symbols; this is a notational shorthand and not an architectural weight-sharing assumption. The bookkeeping block, allowed-position sets, exact-match selectors, copy operations, and merge gates remain fixed non-learned parts of the architecture.

\subsection[What the Active-Frame Map Looks Like]{What $\maskrec(\mathbf{x})$ Looks Like}
\label{app:imagined-inputs}

The stack replay can view the same ordinary token in two ways. Before the surrounding block closes, the token contributes to the current unfinished suffix; after the close, it contributes to its owner frame. We encode these two views by two imagined input sequences $O_j$ and $P_j$. These are not extra generated sequences; they are the sequences that the original architecture $\mathcal{A}$ would see if it were run on the relevant frame.

Recall from the recursive-model setup in \Cref{sec:recursive-model} that $\Sigma_{\mathrm{ctrl}} = \{\CALL, \UNCALL, \ANSWER, \UNANSWER\}$ and $\Sigma_{\mathrm{ord}} = \Sigma \setminus \Sigma_{\mathrm{ctrl}}$. The replay fires on completed suffix blocks: $\UNCALL$ triggers a push when the top frame ends with $\CALL\,\mathbf{q}\,\UNCALL$ ($\mathbf{q} \in \Sigma_{\mathrm{ord}}^*$), and $\UNANSWER$ triggers a pop when it ends with $\ANSWER\,\mathbf{a}\,\UNANSWER$ ($\mathbf{a} \in \Sigma_{\mathrm{ord}}^*$).

We only need to consider flattened sequences that the recursive stack procedure could actually produce. The following condition makes that precise.

\begin{definition}[Legal prefix]
\label{def:legal-prefix}
A prefix $\mathbf{x}_{1:i} \in \Sigma^*$ is \emph{legal} if, during the replay of the procedure in \Cref{fig:obs-function}(C): (i) every $\UNCALL$ closes a matching $\CALL\,\mathbf{q}\,\UNCALL$ block with $\mathbf{q} \in \Sigma_{\mathrm{ord}}^*$ and fires the push rule; (ii) every $\UNANSWER$ closes a matching $\ANSWER\,\mathbf{a}\,\UNANSWER$ block with $\mathbf{a} \in \Sigma_{\mathrm{ord}}^*$ and fires the pop rule; (iii) the replay never pops below the root. Equivalently, $\mathbf{x}_{1:i}$ is a prefix of a well-formed recursive generated sequence with control-free payloads.
\end{definition}

\paragraph{Bookkeeping labels.}
The following labels only keep track of which frame each token belongs to and whether the current top frame has an unfinished call or return block.
For each position $i$, let
\begin{equation}
\begin{aligned}
\lambda_i &:= \max\bigl(\{t < i : x_t \in \Sigma_{\mathrm{ctrl}}\} \cup \{0\}\bigr), \\
\beta_i &:= \max\bigl(\{t \le i : x_t \in \Sigma_{\mathrm{ctrl}}\} \cup \{0\}\bigr),
\end{aligned}
\end{equation}
and the completed depth $c_i := \sum_{t \le i} (\mathbf{1}[x_t = \UNCALL] - \mathbf{1}[x_t = \UNANSWER])$. The current committed frame id and its parent are
\begin{equation}
\begin{aligned}
F_i &:= \max(\{\lambda_t : t \le i,\; x_t = \UNCALL,\; c_t = c_i\} \cup \{0\}), \\
\mathrm{par}_i &:= \max(\{\lambda_t : t \le i,\; x_t = \UNCALL,\; c_t = c_i - 1\} \cup \{0\}).
\end{aligned}
\end{equation}
The pending-block type is
\begin{equation}
m_i := \begin{cases}
\mathsf{call}, & \beta_i > 0 \text{ and } x_{\beta_i} = \CALL, \\
\mathsf{return}, & \beta_i > 0 \text{ and } x_{\beta_i} = \ANSWER, \\
\bot, & \text{otherwise}.
\end{cases}
\end{equation}

\begin{lemma}[Replay normal form]
\label{lem:replay-normal-form}
For every legal prefix $\mathbf{x}_{1:i}$:
\begin{enumerate}[leftmargin=2em, itemsep=2pt, topsep=2pt]
\item Every non-top frame of the stack is a string in $\Sigma_{\mathrm{ord}}^*$.
\item The top frame has exactly one of the three forms
$\mathbf{U}$, $\mathbf{U} \concat \CALL \concat \mathbf{q}$, $\mathbf{U} \concat \ANSWER \concat \mathbf{a}$ with $\mathbf{U}, \mathbf{q}, \mathbf{a} \in \Sigma_{\mathrm{ord}}^*$.
\item In particular, the top frame contains at most one unfinished block; if one exists it is the suffix starting at $x_{\beta_i}$.
\item $m_i \neq \bot$ iff $\maskrec(\mathbf{x}_{1:i})$ ends with the unfinished suffix $x_{\beta_i} \cdots x_i$.
\end{enumerate}
\end{lemma}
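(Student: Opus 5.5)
I will prove the four claims jointly by induction on the prefix length $i$, replaying the active-frame procedure of \Cref{fig:obs-function}(C) one token at a time. The inductive hypothesis is the conjunction of (1) and (2), strengthened with a bookkeeping clause: if the top frame has the form $\mathbf{U}\concat\CALL\concat\mathbf{q}$ or $\mathbf{U}\concat\ANSWER\concat\mathbf{a}$, then that control token is $x_{\beta_i}$ and every token after it up to position $i$ is ordinary and lies in the top frame. Otherwise the top frame is $\mathbf{U}\in\Sigma_{\mathrm{ord}}^*$, and $x_{\beta_i}$ (if $\beta_i>0$) is a closing token $\UNCALL$ or $\UNANSWER$. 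The base case $i=0$ (or the sentinel $\BOS$ alone) gives the stack $[\epsilon]$ or $[\BOS]$, which has form $\mathbf{U}$ with $\beta_i=0$.

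For the inductive step I will case split on $x_{i+1}$.

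\emph{Ordinary token.} It is appended to the top frame, and no block-closing rule fires, since neither rule can fire on an ordinary last token. The form is preserved, with $\mathbf{U}$, $\mathbf{q}$ or $\mathbf{a}$ extended, and $\beta_{i+1}=\beta_i$.

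\emph{$\CALL$ or $\ANSWER$.} Legality will force the current top frame to be of form $\mathbf{U}$. If the top frame already had an open block, the new control token would sit inside it, and the eventual closing token could then not close a matching block with a control-free payload. That contradicts conditions (i)--(ii) of \Cref{def:legal-prefix} applied to a legal extension, which exists because a legal prefix is a prefix of a well-formed sequence. So the new top frame is $\mathbf{U}\concat\CALL$ (resp. $\ANSWER$), with empty $\mathbf{q}$ or $\mathbf{a}$, and $\beta_{i+1}=i+1$.

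\emph{$\UNCALL$.} By condition (i) the top frame must end with $\CALL\,\mathbf{q}\,\UNCALL$, so it was of form $\mathbf{U}\concat\CALL\concat\mathbf{q}$. The rule strips the block, leaving $\mathbf{U}\in\Sigma_{\mathrm{ord}}^*$ as a non-top frame, and pushes $\mathbf{q}\in\Sigma_{\mathrm{ord}}^*$ as the new top frame of form $\mathbf{U}$.

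\emph{$\UNANSWER$.} Symmetrically, condition (ii) gives top frame $\mathbf{U}\concat\ANSWER\concat\mathbf{a}$, and condition (iii) guarantees a parent exists. Popping and appending $\mathbf{a}$ to the parent (an ordinary string by the hypothesis) yields an ordinary top frame, while all remaining non-top frames are unchanged.

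In both closing cases $x_{\beta_{i+1}}$ is a closing token. This establishes (1) and (2).

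Claim (3) is then immediate. An unfinished block must start at the unique control token of the top frame, and the bookkeeping clause says this token is $x_{\beta_i}$, with everything after it ordinary and in the top frame. For (4), $m_i\neq\bot$ exactly when $x_{\beta_i}\in\{\CALL,\ANSWER\}$. By the bookkeeping clause this holds exactly when the top frame has an open block, and then $\maskrec(\mathbf{x}_{1:i})$ ends with $x_{\beta_i}\cdots x_i$. Conversely, if the top frame is of form $\mathbf{U}$, it contains no control token, so it cannot end with a suffix beginning at a control token.

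The main obstacle is the $\CALL$/$\ANSWER$ case, specifically ruling out nested opening tokens inside an unfinished block. The replay procedure itself would happily append them. The exclusion comes only from reading \Cref{def:legal-prefix} via the ``equivalently, prefix of a well-formed sequence'' clause, so I will state explicitly that legality is taken in that sense, meaning that payloads of every eventually-closed block are control-free. A secondary subtlety is the mismatched case in which the top frame is $\mathbf{U}\concat\ANSWER\concat\mathbf{a}$ and the next token is $\UNCALL$. Condition (i) excludes this directly, since no matching $\CALL$ block ends there.
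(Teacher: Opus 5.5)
Your proof is correct and follows the paper's argument: induction on $i$ with a case split on the appended token, using legality to forbid a nested opener inside an unfinished block, and letting $\UNCALL$/$\UNANSWER$ contract the open form back to the plain form before pushing or popping. Your extra bookkeeping clause tying the open block's control token to $x_{\beta_i}$ spells out what the paper compresses into ``Clauses 3--4 follow from the three-form classification.'' Your explicit reliance on the ``prefix of a well-formed sequence'' reading of legality is genuinely needed, since conditions (i)--(iii) alone would not exclude a nested opener; the paper's phrase ``legality forbids nested unfinished blocks'' leaves this implicit.
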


\begin{proof}
By induction on $i$. Base case $i = 0$: stack $[\epsilon]$ (form $\mathbf{U} = \epsilon$). Step: appending $x_i \in \Sigma_{\mathrm{ord}}$ extends $\mathbf{U}$, $\mathbf{q}$, or $\mathbf{a}$; appending $\CALL$ or $\ANSWER$ requires the first form (legality forbids nested unfinished blocks) and produces the second or third; $\UNCALL$ and $\UNANSWER$ contract the second or third back to the first and push/pop. Clauses 3--4 follow from the three-form classification.
\end{proof}

\paragraph{Owner view.}
For $x_j \in \Sigma_{\mathrm{ord}}$, define the \emph{owner frame id}
\begin{equation}
\omega_j := \begin{cases}
\lambda_j, & \lambda_j > 0 \text{ and } x_{\lambda_j} = \CALL, \\
\mathrm{par}_{\lambda_j}, & \lambda_j > 0 \text{ and } x_{\lambda_j} = \ANSWER, \\
F_j, & \text{otherwise},
\end{cases}
\end{equation}
and the \emph{owner imagined input}
\begin{equation}
O_j := \bigl(x_s : 1 \le s \le j,\; x_s \in \Sigma_{\mathrm{ord}},\; \omega_s = \omega_j\bigr),
\end{equation}
so that $x_j$ occupies position $|O_j|$ within $O_j$.

\paragraph{Pending view.}
Define the \emph{pending set}
\begin{equation}
\mathcal{P} := \{j : x_j \in \{\CALL, \ANSWER\}\} \cup \{j : x_j \in \Sigma_{\mathrm{ord}},\; \lambda_j > 0,\; x_{\lambda_j} \in \{\CALL, \ANSWER\}\}.
\end{equation}
For $j \in \mathcal{P}$, let $\mathrm{host}_j := F_{\beta_j}$ be the host-frame id, and the \emph{pending imagined input}
\begin{equation}
P_j := \bigl(x_s : s < \beta_j,\; x_s \in \Sigma_{\mathrm{ord}},\; \omega_s = \mathrm{host}_j\bigr) \concat x_{\beta_j} \cdots x_j.
\end{equation}

\begin{lemma}[Explicit replay formula]
\label{lem:explicit-replay-close}
For every legal prefix $\mathbf{x}_{1:i}$, let $C_i := \{j \le i : x_j \in \Sigma_{\mathrm{ord}},\; \omega_j = F_i\}$. Then
\begin{equation}
\maskrec(\mathbf{x}_{1:i}) = \begin{cases}
O_{j_i^*}, & m_i = \bot,\ C_i \neq \varnothing,\ j_i^* := \max C_i, \\
P_i, & m_i \neq \bot, \\
\epsilon, & m_i = \bot,\ C_i = \varnothing.
\end{cases}
\end{equation}
\end{lemma}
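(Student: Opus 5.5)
I will prove Lemma~\ref{lem:explicit-replay-close} by induction on $i$, strengthening the hypothesis to describe the whole stack, not only its top. The strengthened claim has two parts. First, the frames on the stack are in bijection with the "open" frame ids: the root id $0$, and those $\lambda_t$ with $x_t=\UNCALL$ whose call has not yet been matched by a closing $\UNANSWER$. The top frame has id $F_i$, and the frame below it has id $\mathrm{par}_i$. Second, for every such frame id $\phi$, the committed content of that frame (its $\mathbf{U}$ part, in the notation of Lemma~\ref{lem:replay-normal-form}) equals the subsequence $(x_s : s\le i,\ x_s\in\Sigma_{\mathrm{ord}},\ \omega_s=\phi,\ s \text{ not in an unfinished block})$. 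The lemma then follows directly.

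\begin{itemize}
\item If $m_i=\bot$, Lemma~\ref{lem:replay-normal-form} says the top frame is just $\mathbf{U}$. By the invariant, $\mathbf{U}$ is the ordered list of ordinary tokens with owner $F_i$, namely $C_i$. Since $O_j$ for $j\in C_i$ collects exactly the owner-$F_i$ tokens up to $j$, taking $j=\max C_i$ recovers $\mathbf{U}$. If $C_i=\varnothing$, the top frame is $\epsilon$.
\item If $m_i\neq\bot$, the top frame is $\mathbf{U}\concat x_{\beta_i}\cdots x_i$. Here $\mathbf{U}$ consists of the owner-$F_{\beta_i}$ ordinary tokens before $\beta_i$. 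Since no push or pop occurs between $\beta_i$ and $i$, we have $F_{\beta_i}=F_i=\mathrm{host}_i$, which is exactly $P_i$.
\end{itemize}

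\textbf{Inductive step.} I check the four token types separately.

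\begin{itemize}
\item \textbf{Ordinary token $x_i$ outside a block.} Here $\lambda_i$ is either $0$ or a closing control token. The latter case splits in two. After $\UNCALL$ at $\lambda_i$, the new top frame is the child, and its id should be $\lambda$ of that $\UNCALL$. After $\UNANSWER$, the parent becomes active. I must verify that $\omega_i=F_i$ in this "otherwise" case. That requires checking that $F$, defined via the most recent $\UNCALL$ at the same completed depth $c$, really names the currently active frame.
\item \textbf{Ordinary token inside a block.} Inside a call block, $\omega_i=\lambda_i$, the $\CALL$ position. This token will become content of the child frame pushed at the matching $\UNCALL$, whose frame id is $\lambda$ of that $\UNCALL$. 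For this to be consistent I need the child id to coincide with the $\CALL$ position. This needs care: $\lambda_t$ for the $\UNCALL$ at $t$ is the last control token before $t$, which by legality is the matching $\CALL$ (payloads are control-free). So the child frame id equals the position of its $\CALL$, and the two agree. Inside a return block, $\omega_i=\mathrm{par}_{\lambda_i}$, the parent of the frame where the $\ANSWER$ was issued. After the $\UNANSWER$ the payload is appended to that parent, which matches.
\item \textbf{$\CALL$ or $\ANSWER$.} These only open a pending block and add no ordinary tokens, so the invariant is unchanged.
\item \textbf{$\UNCALL$ and $\UNANSWER$.} These push and pop. I must show that $F$ and $\mathrm{par}$ update correctly: on $\UNCALL$, $c$ increases by $1$ and $F_i$ becomes the new $\lambda$; on $\UNANSWER$, $c$ decreases and $F_i$ reverts to the most recent $\UNCALL$ at the lower depth.
\end{itemize}

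\textbf{Main obstacle.} The hardest step is exactly the last point: proving that "the most recent $\UNCALL$ at completed depth $c_i$" identifies the currently open frame, not a sibling that was already closed. The point is that a closed sibling at the same depth is always followed later by another $\UNCALL$ reaching depth $c_i$ (the current frame's opening), or else depth $c_i$ was reached by popping back to the parent. The pop case is subtle, because the maximum may then pick up a closed grandchild's sibling rather than the parent's opening.

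I would first try to prove an auxiliary claim: for any $t\le i$ with $x_t=\UNCALL$ and $c_t=c_i$ that is maximal, the frame opened at $t$ is still on the stack at position $c_i$. I would argue this by noting that any later pop to depth $c_i-1$ would require a later re-push to reach $c_i$ again, which would be a larger such $t$. The remaining case, $c_i=0$, gives $F_i=0$, the root, consistent with the convention. The same argument at depth $c_i-1$ handles $\mathrm{par}_i$.
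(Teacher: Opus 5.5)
Your proposal is correct and follows the same route as the paper. Both arguments split on $m_i$ and use the normal-form lemma to read off the top frame: either the committed body alone (giving $O_{j_i^*}$, or $\epsilon$ when $C_i=\varnothing$), or the committed host body followed by the unfinished suffix (giving $P_i$). The paper takes for granted that $F_i$ names the active frame and that $\omega$ names each ordinary token's eventual frame. Your strengthened stack invariant, together with the claim that the most recent $\UNCALL$ at completed depth $c_i$ is still open, actually proves this identification, and that argument is sound.
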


\begin{proof}
If $m_i = \bot$: \Cref{lem:replay-normal-form} gives top frame $\mathbf{U}$, i.e., the ordered ordinary tokens of frame $F_i$, namely $C_i$; if $C_i \neq \varnothing$, this is $O_{j_i^*}$. If $m_i \neq \bot$: the top frame is the committed host body followed by the unfinished suffix, i.e., $P_i$.
\end{proof}

\subsection[Simulating One Layer]{Simulating One Layer of $\mathcal{A}$}
\label{app:lifted-sim}

We now simulate each layer of $\mathcal{A}$ in parallel on both imagined inputs. The owner state updates at every ordinary token, including call- and return-payload tokens inside an unfinished block; the pending state updates only at positions in the unique unfinished suffix.

\begin{lemma}[Bookkeeping labels are computable]
\label{lem:bookkeeping-labels}
All labels used below---$\lambda, c, F, \mathrm{par}, m, \beta, \omega, \mathrm{host}$, the allowed-position sets $J_i^O, J_i^{\mathrm{host}}, J_i^{\mathrm{blk}}$ (defined below), and the host-empty indicator $\mathsf{HE}_i$---are computable by a constant-depth causal Transformer block using $O(1)$ numeric channels, each storing an $O(\log n)$-bit integer or a rational of polynomial bit complexity~\citep{yang2025pencil}.
\end{lemma}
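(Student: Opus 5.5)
We would prove \Cref{lem:bookkeeping-labels} by computing the labels in dependency order, each from quantities already available, using only the primitive constant-depth operations of \citet{yang2025pencil}. These primitives are: prefix sums via uniform causal attention multiplied back by position; conditional maxima, meaning rightmost-hard attention to the last position satisfying a local predicate; rightmost exact-match copying, meaning attention to the last $j\le i$ with a stored integer key equal to a query integer; and local piecewise-linear arithmetic in the FFN. Each label occupies one numeric channel holding an $O(\log n)$-bit integer. The layer count is bounded by the length of the dependency chain, which is a fixed constant independent of $\mathcal{A}$ and $n$.

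\textbf{Step 1 (control-token labels).} A one-hot token embedding gives indicators $\mathbf{1}[x_i\in\Sigma_{\mathrm{ctrl}}]$, $\mathbf{1}[x_i=\UNCALL]$, and so on.
\begin{itemize}
\item $\beta_i$ is a conditional maximum: attend rightmost-hard to positions with the control indicator on, using the fixed positional feature as value and $0$ via the $\BOS$ sentinel.
\item $\lambda_i$ is the same maximum restricted to $t<i$. It equals $\beta_{i-1}$, obtained by one attention step to the previous position, or directly as $\beta_i$ when $x_i$ is ordinary.
\item $c_i$ is a prefix sum of the $\pm1$ indicators.
\item $m_i$ is read by copying the token type at position $\beta_i$ via exact match on position.
\end{itemize}

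\textbf{Step 2 (frame ids).} For $F_i$ we need the last $t\le i$ with $x_t=\UNCALL$ and $c_t=c_i$. This is rightmost exact-match copying on the key $c_t$, gated by the $\UNCALL$ indicator, with value $\lambda_t$. Gating is implemented by adding a large penalty to the score of non-$\UNCALL$ positions, using the product-gate form $-(c_i-c_t)^2$ plus a bounded penalty. $\mathrm{par}_i$ is the same query with key $c_i-1$.

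From these, $\omega_j$ is a piecewise selection. In the $\ANSWER$ case it requires $\mathrm{par}_{\lambda_j}$, which is one more exact-match copy from position $\lambda_j$. $\mathrm{host}_j=F_{\beta_j}$ is likewise a copy from position $\beta_j$. Membership in $\mathcal{P}$ is a local boolean of $x_j$ and the copied type of $x_{\lambda_j}$.

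\textbf{Step 3 (allowed sets and host-empty indicator).} We would define each allowed-position set $J_i^O,J_i^{\mathrm{host}},J_i^{\mathrm{blk}}$ as $\{j: \text{key}_j=\text{query}_i,\ j\le i\}$ for integer labels already computed: $\omega_j=\omega_i$; $\omega_j=\mathrm{host}_i$ with $j<\beta_i$; and $\beta_j=\beta_i$. Since each set is an equality-of-integers test plus a causal comparison, the score $-(k_j-q_i)^2$ realizes it as a restriction mask. $\mathsf{HE}_i$ is a conditional-maximum query over $J_i^{\mathrm{host}}$ that returns the sentinel value when the set is empty.

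\textbf{Main obstacle.} We expect the main difficulty to be the precision and margin argument: showing that exact-match and penalty scores separate allowed from disallowed positions under the bounded precision stated in the ``Restricted-attention scope'' paragraph. Integer keys are bounded by $n$, so squared differences are at most $n^2$ and differ by at least $1$. We would therefore argue that a penalty of polynomial magnitude, represented with $O(\log n)$ bits, suffices, and that rightmost tie-breaking resolves the remaining ties to the intended position.
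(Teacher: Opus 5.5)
Your proposal is correct and follows essentially the same route as the paper. Both reduce each label, in dependency order, to the constant-depth primitives of \citet{yang2025pencil}: conditional maxima for $\lambda,\beta$, a prefix sum for $c$, most-recent matching selection for $F,\mathrm{par}$, case splits for $\omega,\mathrm{host}$, and an OR over earlier positions for $\mathsf{HE}_i$. You are in fact more careful than the paper in noting that $\mathrm{par}_{\lambda_j}$ and $F_{\beta_j}$ must be fetched from positions $\lambda_j,\beta_j$ by an exact-match copy rather than being purely local. There is one slip in Step 3. The condition $j<\beta_i$ in $J_i^{\mathrm{host}}$ is not a causal comparison, so the quadratic exact-match score alone does not capture it, and you also drop the gate $x_j\in\Sigma_{\mathrm{ord}}$. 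This does not break the proof: at the level the lemma actually claims, membership is a Boolean combination of comparisons among stored labels (for instance $j<\beta_i\iff\beta_j\neq\beta_i$ when $j\le i$), which is exactly what the paper asserts.
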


The point of this lemma is only that these quantities are deterministic bookkeeping computed from the control tokens; they are not learned parameters.

\begin{proof}
$\lambda_i, \beta_i$ select the most recent control-token positions satisfying fixed conditions; $c_i$ is a prefix sum; $F_i, \mathrm{par}_i$ select the most recent matching frame ids among $O(\log n)$-bit integers; $\omega_j$ is local given $F_j, \lambda_j, \mathrm{par}_{\lambda_j}$; $\mathrm{host}_j = F_{\beta_j}$ is local. The host-empty indicator is the complement of an OR over earlier positions:
\begin{equation}
\mathsf{HE}_i \;=\; 1 - \mathbf{1}\bigl[\exists s < \beta_i:\; x_s \in \Sigma_{\mathrm{ord}}\ \wedge\ \omega_s = \mathrm{host}_i\bigr],
\end{equation}
and the existential is the OR of $\mathbf{1}[s < \beta_i]\cdot\mathbf{1}[x_s \in \Sigma_{\mathrm{ord}}]\cdot\mathbf{1}[\omega_s = \mathrm{host}_i]$ over earlier positions $s$; the final complement is local. The three allowed-position sets are Boolean combinations of integer equalities and order comparisons among these labels. Each of these operations is known to be implementable in constant depth~\citep{yang2025pencil}.
\end{proof}

\paragraph{Two local subroutines.}
The lifted layer uses only two simple attention subroutines. First, we restrict a head to the positions that are allowed to appear in the simulated frame. Second, when the simulated frame is a host frame followed by an unfinished suffix, we merge the best position from the host part with the best position from the suffix part.

\paragraph{Restricting attention by penalties.}
To make a head ignore disallowed positions without changing how it ranks allowed positions, we add a large negative penalty only to the disallowed positions. Concretely, suppose an allowed-position predicate $\mathsf{Allow}(i,j)$ has constant-width query/key features $a_i,b_j$, computable from the bookkeeping labels, such that
\begin{align}
\langle a_i,b_j\rangle=0
&\qquad\text{whenever }\mathsf{Allow}(i,j)=1,\\
\langle a_i,b_j\rangle\le -1
&\qquad\text{whenever }\mathsf{Allow}(i,j)=0.
\end{align}

\begin{lemma}[Restricted rightmost head]
\label{lem:restricted-rightmost-head}
Fix query, key, and value streams $Q_i,K_j,V_j$, and an allowed-position predicate $\mathsf{Allow}(i,j)$ with penalty features $a_i,b_j$ as above. Suppose every original score satisfies
\begin{equation}
|\langle Q_i,K_j\rangle|\le \Gamma_{\mathcal{A}}(i).
\end{equation}
Let $M_{\mathcal{A}}(i):=2\Gamma_{\mathcal{A}}(i)+1$. Then rightmost-hard attention with score
\begin{equation}
\widetilde S_{ij}
:=
\langle Q_i,K_j\rangle + M_{\mathcal{A}}(i)\langle a_i,b_j\rangle
\end{equation}
selects exactly the rightmost maximizer of $\langle Q_i,K_j\rangle$ among allowed keys, whenever there is at least one allowed key.
\end{lemma}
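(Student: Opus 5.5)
The argument compares the penalized score $\widetilde S_{ij}$ of an allowed key against that of a disallowed key. It then uses the fact that, on allowed keys, the penalty term vanishes, so the ranking there is the original one. Fix a query position $i$ with at least one allowed key $j\le i$. For allowed $j$ we have $\langle a_i,b_j\rangle=0$, so $\widetilde S_{ij}=\langle Q_i,K_j\rangle\ge -\Gamma_{\mathcal{A}}(i)$. For disallowed $j$ we have $\langle a_i,b_j\rangle\le -1$, and $M_{\mathcal{A}}(i)\ge 0$, so
\begin{equation}
\widetilde S_{ij}\le \Gamma_{\mathcal{A}}(i)-M_{\mathcal{A}}(i)=-\Gamma_{\mathcal{A}}(i)-1.
\end{equation}
Hence every allowed score strictly exceeds every disallowed score. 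The set of maximizers of $\widetilde S_{i\cdot}$ over $j\le i$ is therefore contained in the allowed set.

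On the allowed set, $\widetilde S_{ij}$ coincides with $\langle Q_i,K_j\rangle$. So the maximizers of $\widetilde S_{i\cdot}$ are exactly the maximizers of the original score restricted to allowed keys: they attain the same maximum value and form the same index set. Rightmost-hard tie-breaking, as fixed in \Cref{app:transformer}, selects the largest index in this common set. That index is precisely the rightmost maximizer of $\langle Q_i,K_j\rangle$ among allowed keys, and the value read out is the corresponding $V_j$, as claimed.

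The only point requiring care is the score bound. One must check that the bound $\Gamma_{\mathcal{A}}(i)$ applies uniformly to every causally visible $j\le i$, so that the strict separation holds for all disallowed keys simultaneously. One must also check that $M_{\mathcal{A}}(i)$ depends only on $i$, so it can be folded into the query-side feature $M_{\mathcal{A}}(i)a_i$ and the modified score remains a valid bilinear head score. Under the bounded-precision convention both facts hold, since scores are bounded in terms of the query position. Given them, the proof is the two-line comparison above, and no further obstacle arises.
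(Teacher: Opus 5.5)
Your proposal is correct and follows the paper's own argument: allowed keys get zero shift, so their order and the rightmost tie rule are preserved. Since $M_{\mathcal{A}}(i)>2\Gamma_{\mathcal{A}}(i)$, every disallowed key's penalized score falls strictly below every allowed score. Your explicit bound $\widetilde S_{ij}\le-\Gamma_{\mathcal{A}}(i)-1$ and your remark about folding $M_{\mathcal{A}}(i)$ into the query-side feature only spell out details the paper leaves implicit.
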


\begin{proof}
Allowed keys receive zero shift, so their relative ordering and rightmost tie rule are unchanged. Every disallowed key receives a penalty at most $-M_{\mathcal{A}}(i)$. Since $M_{\mathcal{A}}(i)>2\Gamma_{\mathcal{A}}(i)$, every allowed key strictly dominates every disallowed key. Therefore the rightmost-hard winner is exactly the rightmost maximizer among allowed keys.
\end{proof}

\noindent The allowed-position sets $J_i^O, J_i^{\mathrm{host}}, J_i^{\mathrm{blk}}$ used below have such penalty features. They are Boolean combinations of integer equalities and order comparisons among $\lambda,\beta,\omega,\mathrm{host},i,j$, all stored in $O(\log n)$-bit channels. Equalities are enforced by quadratic penalties, inequalities by fixed integer-comparison penalties, and conjunctions by summing penalties.

\begin{lemma}[Merging the host part and the unfinished suffix]
\label{lem:host-block-merge}
Let $Y = Y^H \concat Y^K$ with $Y^H, Y^K$ both non-empty and every position of $Y^K$ strictly right of every position of $Y^H$. Suppose two applications of \Cref{lem:restricted-rightmost-head} return the rightmost-hard (value, key) winners $(v^H, k^H)$ on $Y^H$ and $(v^K, k^K)$ on $Y^K$. For a query $q$, set $s^H := \langle q, k^H\rangle$, $s^K := \langle q, k^K\rangle$, and
\begin{equation}
v := \begin{cases} v^H, & s^H > s^K, \\ v^K, & s^H \le s^K. \end{cases}
\end{equation}
Then $v$ equals the rightmost-hard output on $Y$ with query $q$.
\end{lemma}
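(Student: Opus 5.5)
The plan is a direct case analysis on where the rightmost-hard maximizer of $\langle q,\cdot\rangle$ over $Y$ lies. First we fix what the rightmost-hard output on $Y$ means. Let $s^* := \max_{k\in Y}\langle q,k\rangle$, and let $j^*$ be the rightmost position of $Y$ attaining $s^*$; the output is $v_{j^*}$. Since $Y=Y^H\concat Y^K$, the global maximum is $s^*=\max(s^H,s^K)$, where $s^H$ and $s^K$ are the maxima over the two parts. This holds because \Cref{lem:restricted-rightmost-head} returns, on each part, the rightmost maximizer of that part, so $s^H$ and $s^K$ are the partwise maximum scores. Both parts are non-empty, so both winners are well defined.

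The argument then splits into two cases.

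\textbf{Case $s^H>s^K$.} No position of $Y^K$ attains $s^*=s^H$. Hence every global maximizer lies in $Y^H$, and the rightmost global maximizer is the rightmost maximizer within $Y^H$, namely the $Y^H$ winner. So the output is $v^H$.

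\textbf{Case $s^H\le s^K$.} Then $s^*=s^K$, and the $Y^K$ winner attains the global maximum. Every position of $Y^K$ lies strictly to the right of every position of $Y^H$. So even when $s^H=s^K$ and some host position ties, the rightmost global maximizer lies in $Y^K$. It is then the rightmost maximizer within $Y^K$, which is the $Y^K$ winner, and the output is $v^K$. This tie case is the only delicate point: it is exactly why the rule sends equality to $v^K$, matching rightmost tie-breaking.

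One remaining subtlety is that $v^H$ denotes a single value rather than an average. This is correct because, under the rightmost-hard variant, the head outputs the value at one selected position. If an output map $W_O,b_O$ is present, it is affine and applied after selection, so it commutes with the case selection. I also note for later use that the comparison of $s^H$ with $s^K$ and the conditional choice are position-local operations, implementable with product gates and ReLU thresholding on bounded-precision scores. That implementability is not needed for the equality claimed in the lemma itself.
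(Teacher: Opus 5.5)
Your proposal is correct and follows the same argument as the paper. The global maximum over $Y$ is $\max(s^H,s^K)$; a strict host win means no suffix position attains it; and any cross-tie is resolved toward $Y^K$ because $Y^K$ lies strictly to the right, which is why equality is sent to $v^K$. Your case analysis spells out what the paper's two-line proof compresses.
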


\begin{proof}
Rightmost-hard on $Y$ selects the rightmost maximizer. Since $Y^K$ is strictly right of $Y^H$, any cross-tie resolves to the block; the strict inequality $s^H > s^K$ lets host win only when host strictly dominates.
\end{proof}

Fix any architecture $\mathcal{A}$ of depth $L = L(\mathcal{A})$ realizing the local recursive rule $\frcm$.

\paragraph{Lifted state.}
At each position $j$ and layer $\ell$, $\mathcal{A}'$ maintains
\begin{equation}
\widetilde{H}_j^{(\ell)} = \bigl[H_j^{O,(\ell)},\; H_j^{P,(\ell)},\; W_j^{(\ell)},\; M_j\bigr] \in \mathbb{R}^{d^{(\ell)}} \oplus \mathbb{R}^{d^{(\ell)}} \oplus \mathbb{R}^{O(d^{(\ell)})} \oplus \mathbb{R}^{O(1)},
\end{equation}
where $W^{(\ell)}$ stores temporary quantities used inside the layer and cleared afterward, and $M$ stores the bookkeeping labels of \Cref{lem:bookkeeping-labels}.

\paragraph{Initialization.}
$H_j^{O,(0)} := \mathsf{TE}(x_j) + \mathsf{PE}(|O_j|)$ for $x_j \in \Sigma_{\mathrm{ord}}$; $H_j^{P,(0)} := \mathsf{TE}(x_j) + \mathsf{PE}(|P_j|)$ for $j \in \mathcal{P}$; other state entries are zero. Since the positional features can be evaluated at the simulated positions $|O_j|$ and $|P_j|$, these initial states are constant-depth computable from $\mathbf{x}$ and the bookkeeping labels.

\paragraph{Bounding scores before adding penalties.}
Under the bounded-precision parameter model used in \Cref{sec:id-generalization}, each independently stored parameter is stored with $b$ bits. For a fixed architecture $\mathcal{A}$, there are constants $K_{\mathcal{A}},\kappa_{\mathcal{A}}$, depending only on $\mathcal{A}$ and the allowed $b$-bit parameter range, such that every query--key score of $\mathcal{A}$ on length-$i$ inputs has magnitude at most
\begin{equation}
\Gamma_{\mathcal{A}}(i):=K_{\mathcal{A}}(i+1)^{\kappa_{\mathcal{A}}}.
\end{equation}
We use the fixed penalty size $M_{\mathcal{A}}(i):=2\Gamma_{\mathcal{A}}(i)+1$ in \Cref{lem:restricted-rightmost-head}. This penalty is large enough to make every disallowed position lose, but it leaves the scores of allowed positions unchanged.

\paragraph{Allowed positions.}
The lifted head restricts attention to the positions that would be present in the corresponding imagined input. For the owner lane,
\begin{equation}
J_i^O
:=
\{j\le i:\ x_j\in\Sigma_{\mathrm{ord}},\ \omega_j=\omega_i\}.
\end{equation}
For the pending lane, the active frame is a concatenation of the host frame and the unfinished suffix. These two pieces use
\begin{equation}
\begin{aligned}
J_i^{\mathrm{host}}
&:=
\{j<\beta_i:\ x_j\in\Sigma_{\mathrm{ord}},\ \omega_j=\mathrm{host}_i\},
\\
J_i^{\mathrm{blk}}
&:=
\{j:\ \beta_i\le j\le i,\ \beta_j=\beta_i\}.
\end{aligned}
\end{equation}
The set $J_i^O$ is nonempty whenever the owner lane is active, and $J_i^{\mathrm{blk}}$ is nonempty whenever $i\in\mathcal{P}$.

\paragraph{Concrete form of one lifted head.}
Fix layer $\ell$ and original head $a$ of $\mathcal{A}$. We now spell out how this one original head is called on the two lanes. Its learned block consists of
\begin{equation}
W_Q^{(\ell,a)},\quad W_K^{(\ell,a)},\quad W_V^{(\ell,a)},\quad W_O^{(\ell,a)},
\end{equation}
together with the corresponding biases. At layer $\ell-1$, the lifted architecture maintains two lanes,
\begin{equation}
H^{O,(\ell-1)}
\qquad\text{and}\qquad
H^{P,(\ell-1)}.
\end{equation}
The owner application of head $(\ell,a)$ uses
\begin{align}
q_i^O &:= W_Q^{(\ell,a)}H_i^{O,(\ell-1)}+b_Q^{(\ell,a)},\\
k_j^O &:= W_K^{(\ell,a)}H_j^{O,(\ell-1)}+b_K^{(\ell,a)},\\
v_j^O &:= W_V^{(\ell,a)}H_j^{O,(\ell-1)}+b_V^{(\ell,a)}.
\end{align}
The host part of the pending application uses
\begin{align}
q_i^P &:= W_Q^{(\ell,a)}H_i^{P,(\ell-1)}+b_Q^{(\ell,a)},\\
k_j^H &:= W_K^{(\ell,a)}H_j^{O,(\ell-1)}+b_K^{(\ell,a)},\\
v_j^H &:= W_V^{(\ell,a)}H_j^{O,(\ell-1)}+b_V^{(\ell,a)}.
\end{align}
The suffix-block part of the pending application uses the same pending query and
\begin{equation}
\begin{aligned}
k_j^K &:= W_K^{(\ell,a)}H_j^{P,(\ell-1)}+b_K^{(\ell,a)},\\
v_j^K &:= W_V^{(\ell,a)}H_j^{P,(\ell-1)}+b_V^{(\ell,a)}.
\end{aligned}
\end{equation}
In the unfolded architecture, the owner, host, and suffix-block applications are independent copied blocks. We write them using the same symbols $W_Q^{(\ell,a)}, W_K^{(\ell,a)}, W_V^{(\ell,a)}, W_O^{(\ell,a)}$ only to keep notation light; in the simulator parameter setting, all these copies are assigned the corresponding original values from $\mathcal{A}$.

Here $\mathrm{RHard}_{j\in J}$ denotes rightmost-hard attention restricted to positions in $J$.
When the owner lane is active, the owner output is
\begin{equation}
\begin{aligned}
u_i^{O,a}
&:=
\mathrm{RHard}_{j\in J_i^O}
\langle q_i^O,k_j^O\rangle\, v_j^O,\\
z_i^{O,a}&:=W_O^{(\ell,a)}u_i^{O,a}+b_O^{(\ell,a)}.
\end{aligned}
\end{equation}
For the pending lane, we compute the best block candidate:
\begin{equation}
(u_i^K,k_i^K)
:=
\mathrm{RHard}_{j\in J_i^{\mathrm{blk}}}
\langle q_i^P,k_j^K\rangle\, (v_j^K,k_j^K).
\end{equation}
If $J_i^{\mathrm{host}}$ is nonempty, we also compute the best host candidate:
\begin{equation}
(u_i^H,k_i^H)
:=
\mathrm{RHard}_{j\in J_i^{\mathrm{host}}}
\langle q_i^P,k_j^H\rangle\, (v_j^H,k_j^H).
\end{equation}
A fixed pointwise merge gate then computes
\begin{equation}
\begin{aligned}
s_i^H&:=\langle q_i^P,k_i^H\rangle,\\
s_i^K&:=\langle q_i^P,k_i^K\rangle,
\end{aligned}
\end{equation}
and sets
\begin{equation}
u_i^P:=
\begin{cases}
 u_i^H, & s_i^H>s_i^K,\\[2pt]
 u_i^K, & s_i^H\le s_i^K.
\end{cases}
\end{equation}
If $J_i^{\mathrm{host}}$ is empty, the fixed gate skips the host candidate and sets $u_i^P:=u_i^K$.
The pending output of head $(\ell,a)$ is then
\begin{equation}
z_i^{P,a}:=W_O^{(\ell,a)}u_i^P+b_O^{(\ell,a)}.
\end{equation}

Summing over all heads and applying the copied layer-$\ell$ FFN blocks to the two lanes gives
\begin{equation}
\begin{aligned}
\widetilde H_i^{O}&:=H_i^{O,(\ell-1)}+\sum_a z_i^{O,a},\\
H_i^{O,(\ell)}&:=\mathsf{FFN}^{(\ell)}(\widetilde H_i^O),
\end{aligned}
\end{equation}
and
\begin{equation}
\begin{aligned}
\widetilde H_i^{P}&:=H_i^{P,(\ell-1)}+\sum_a z_i^{P,a},\\
H_i^{P,(\ell)}&:=\mathsf{FFN}^{(\ell)}(\widetilde H_i^P).
\end{aligned}
\end{equation}
The owner and pending FFN applications are independent copied blocks in $\mathcal{A}'$, again written as $\mathsf{FFN}^{(\ell)}$ for readability and assigned the same value as the original FFN block in the simulator parameter setting. At positions where a lane is inactive, fixed gates keep that lane at zero. At an ordinary token $x_i \in \Sigma_{\mathrm{ord}}$ with $m_i \neq \bot$ (a call- or return-payload), both lanes update.

\begin{lemma}[Layer-wise invariant]
\label{lem:layerwise-close}
The invariant in \Cref{eq:lifted-invariant} holds at every layer $\ell \in \{0, \ldots, L\}$.
\end{lemma}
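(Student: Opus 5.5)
We argue by induction on the layer index $\ell$, proving both lines of \Cref{eq:lifted-invariant} at once. The base case $\ell=0$ is the initialization: $H_j^{O,(0)}=\mathsf{TE}(x_j)+\mathsf{PE}(|O_j|)$. Because $x_j$ is the last token of $O_j$, this equals $H^{\mathcal{A},0}_{|O_j|}(O_j)$. Likewise $x_j$ is the last token of $P_j$ for $j\in\mathcal{P}$, so $H_j^{P,(0)}=H^{\mathcal{A},0}_{|P_j|}(P_j)$.

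For the inductive step, assume the invariant at layer $\ell-1$. The FFN is position-wise and its copies carry the original values, so it suffices to show two things for every head $a$. First, $z_i^{O,a}$ equals the output of head $(\ell,a)$ of $\mathcal{A}$ at the last position of $O_i$. Second, $z_i^{P,a}$ equals the same output at the last position of $P_i$. Residual addition and the FFN then reproduce $H^{\mathcal{A},\ell}$ exactly.

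\textbf{Owner lane.} The key claim is a bijection between $J_i^O$ and the positions of $O_i$. The map sends $j\in J_i^O$ to position $|O_j|$ of $O_i$. For this we need $O_j$ to be a prefix of $O_i$ whenever $\omega_j=\omega_i$ and $j\le i$, which holds directly from the definition of $O$. The map is order preserving, so rightmost ties correspond. By induction, $H_j^{O,(\ell-1)}$ is the layer-$(\ell-1)$ state of $\mathcal{A}$ at that position of $O_i$. We must also check that this state, computed on $O_j$, agrees with the state on the longer $O_i$. This is where causality matters: a prefix's states do not depend on later tokens. Hence the keys and values match those $\mathcal{A}$ uses on $O_i$. \Cref{lem:restricted-rightmost-head} shows that the penalized head selects the rightmost maximizer over $J_i^O$. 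Its hypothesis on the score bound $\Gamma_{\mathcal{A}}$ holds by the bounded-precision assumption. So $u_i^{O,a}$ equals $\mathcal{A}$'s attention output.

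\textbf{Pending lane.} Split $P_i=Y^H\concat Y^K$. Here $Y^H$ consists of the ordinary tokens before $\beta_i$ owned by $\mathrm{host}_i$, indexed by $J_i^{\mathrm{host}}$, and $Y^K=x_{\beta_i}\cdots x_i$, indexed by $J_i^{\mathrm{blk}}$. We then make three checks.

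\begin{enumerate}
\item For $j\in J_i^{\mathrm{host}}$, position $j$ in $P_i$ has prefix $O_j$, since the host tokens form a prefix of $P_i$ identical to the owner sequence of frame $\mathrm{host}_i$. So $H_j^{O,(\ell-1)}$ is the right state, and $k^H,v^H$ correctly use the owner lane.
\item For $j\in J_i^{\mathrm{blk}}$, we have $\beta_j=\beta_i$ and $\mathrm{host}_j=\mathrm{host}_i$, so $P_j$ is a prefix of $P_i$. The pending-lane keys and values are therefore correct by induction.
\item \Cref{lem:host-block-merge} combines the two restricted winners into the rightmost-hard winner over all of $P_i$. It needs $Y^K$ to lie strictly right of $Y^H$, which holds because host positions are $<\beta_i$. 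The empty-host case reduces to $u_i^P=u_i^K$.
\end{enumerate}

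Payload tokens inside the block update both lanes. This is consistent: the owner lane of a call-payload token simulates the child frame $O_j$ (owner id $\lambda_j$), and the pending lane simulates the parent frame plus the suffix.

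\textbf{Main obstacle.} The hardest step is the prefix-consistency claim for each lane: that the allowed set indexes exactly the prefix-closed sequence being simulated, and that each stored state was computed on a prefix of the current imagined input. This requires checking that owner ids are stable. Specifically, $\omega_s$ is fixed once $x_s$ is written, and the host frame's owner sequence before $\beta_i$ equals the committed body $\mathbf{U}$ of \Cref{lem:replay-normal-form}. I would verify this by following the label definitions ($F$, $\mathrm{par}$, $c$) case by case on the type of $x_{\lambda_s}$, using legality (\Cref{def:legal-prefix}).
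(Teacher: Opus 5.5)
Your proposal is correct and follows the paper's proof. Both argue by induction on $\ell$: the base case comes from the initialization at the simulated positions, the owner lane uses \Cref{lem:restricted-rightmost-head}, and the pending lane uses \Cref{lem:host-block-merge}, or the suffix-only case when the host part is empty, with the copied blocks carrying the original values. You additionally spell out the order-preserving correspondence and the causality/prefix bookkeeping that the paper leaves implicit. One small note: the prefix relations you flag as the main obstacle follow directly from the definitions of $O_j$, $P_j$ and the allowed sets, because every label at position $s$ depends only on $\mathbf{x}_{1:s}$. Identifying the host part with the committed body $\mathbf{U}$ is needed for \Cref{lem:explicit-replay-close}, not for this invariant.
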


\begin{proof}
By induction on $\ell$. Base ($\ell = 0$): from initialization and the positional features at the simulated positions. Step: for $x_j \in \Sigma_{\mathrm{ord}}$, \Cref{lem:restricted-rightmost-head} applied to the owner application matches the original head on $O_j$ at position $|O_j|$. For $j \in \mathcal{P}$: if $\mathsf{HE}_j = 1$, the host half of $P_j$ is empty and the pending computation reduces to the suffix block, so the block candidate is exactly the rightmost-hard winner on $P_j$; if $\mathsf{HE}_j = 0$, both halves are non-empty and \Cref{lem:host-block-merge} applies, with the fixed merge gate implementing the rightmost-hard choice across the host and suffix block. Because the copied attention projections, output projections, and FFN blocks are assigned the same numerical values as the corresponding original blocks of $\mathcal{A}$, the layer update is exactly the original layer of $\mathcal{A}$ run on the relevant imagined input. Hence the invariant is preserved.
\end{proof}

\subsection{Readout and Resource Bounds}
\label{app:readout}

\begin{proof}[Proof of \Cref{thm:cot-simulation}]
Let $\mathcal{A}$ realize the local recursive rule $\frcm$ with depth $L$. Construct $\mathcal{A}'$ in three stages: (1) prepend the constant-depth bookkeeping block of \Cref{lem:bookkeeping-labels}; (2) replace each of the $L$ layers of $\mathcal{A}$ by its lifted version (\Cref{app:lifted-sim}), so that \Cref{lem:layerwise-close} gives the invariant in \Cref{eq:lifted-invariant} at $\ell = L$; (3) apply the readout below.

\paragraph{Readout.} At the final position $n$, select
\begin{equation}
z_n := \begin{cases}
H_{j_n^*}^{O,(L)}, & m_n = \bot,\ C_n \neq \varnothing,\ j_n^* := \max C_n, \\
H_n^{P,(L)}, & m_n \neq \bot, \\
H_1^{O,(L)}, & m_n = \bot,\ C_n = \varnothing \ \text{(sentinel branch)}.
\end{cases}
\end{equation}
By \Cref{lem:layerwise-close} and \Cref{lem:explicit-replay-close}, $z_n = H^{\mathcal{A},L}_{|\maskrec(\mathbf{x})|}(\maskrec(\mathbf{x}))$ in the first two cases, and $z_n = H^{\mathcal{A},L}_1(\BOS)$ in the third, realizing $\frcm(\epsilon) := \frcm(\BOS)$. A rightmost exact-match head copies $z_n$ to position $n$, and a copied decoder block, assigned the same numerical value as the decoder of $\mathcal{A}$, yields $\frcm(\maskrec(\mathbf{x}))$.

\paragraph{Learned-parameter accounting.}
The lifted architecture does more computation than $\mathcal{A}$, because each layer keeps owner and pending states and applies copied versions of the original heads and FFN blocks to the relevant lanes and subroutines. Under the unfolded convention of \Cref{app:lane-wise-architecture}, these copies are counted as independent learned parameters. For the simulator witnessing \Cref{thm:cot-simulation}, the copied parameters are assigned the same numerical values as the corresponding parameters of $\mathcal{A}$.

Each original learned block is copied only a constant number of times by the fixed lifting construction: the owner, host, and suffix-block applications for attention heads, the owner and pending FFN applications, and the final decoder copy. Therefore
\begin{equation}
W(\mathcal{A}')=O(W(\mathcal{A})).
\end{equation}
The hidden constant depends only on the fixed lifting construction, not on the recursive task, the training sample, or the parameter setting of $\mathcal{A}$. The bookkeeping block, allowed-position sets, exact-match selectors, copy operations, and merge gates are fixed non-learned parts of the architecture.

The depth overhead remains constant:
\begin{equation}
L(\mathcal{A}')=L(\mathcal{A})+O(1).
\end{equation}
\end{proof}

%% file: sections/appendix_iid.tex
\section[Proofs for IID Generalization]{Proofs for \Cref{sec:id-generalization} (IID Generalization)}
\label{app:proof-iid}
\label{sec:appendix-iid}

Throughout this appendix, $\mathcal{D}$ gives probability one to input sequences that can arise during recursive generation, as formalized by \Cref{def:legal-prefix}.

\paragraph{Architecture coding convention.}
The Transformer appendix specifies architectures by finite data: depth, layer dimensions, numbers of heads, feed-forward dimensions, fixed positional features, and the fixed non-learned operations used by the constructions. Fix any prefix universal decoder $U$ for such finite architecture specifications, for instance a prefix universal Turing machine whose outputs are parsed as such specifications. For each architecture $\mathcal A$, let $\operatorname{code}(\mathcal A)$ be a shortest $U$-program that outputs $\mathcal A$, with ties broken deterministically. A full finite-precision Transformer description consists of $\operatorname{code}(\mathcal A)$ followed by the $b$-bit encodings of the $W_\mathcal A$ independently stored learned parameters.

This gives the Kraft-valid complexity measure in \Cref{eq:transformer-description-length}: the chosen architecture descriptions form a prefix-free set because they are selected from the prefix domain of $U$. For each fixed $\mathcal A$, there are at most $2^{bW_\mathcal A}$ parameter strings of length $bW_\mathcal A$, so the total Kraft mass contributed by this architecture is at most $2^{-|\operatorname{code}(\mathcal A)|}$. Summing over architectures is bounded by one.

\begin{lemma}[Simulation Coding Overhead]
\label{lem:sim-overhead}
Let $\mathcal{A}'$ be the simulator architecture produced from $\mathcal A$ in \Cref{thm:cot-simulation}. Then
\begin{align}
L_{\mathcal{A}'} = L_\mathcal{A}+O(1),
\qquad
W_{\mathcal{A}'} \le C_W W_\mathcal{A},
\end{align}
and
\begin{align}
|\operatorname{code}(\mathcal A')|
\le
|\operatorname{code}(\mathcal A)|+C_{\mathrm{arch}}
\end{align}
for constants independent of the task and the training sample. Consequently, if a local rule $g$ is realized by $(\mathcal A,\theta)$, the simulator gives $(\mathcal A',\theta')$ realizing $g\circ\maskrec$ on legal recursive prefixes with
\begin{align}
|\operatorname{code}(\mathcal A')|+bW_{\mathcal A'}
\le
C\bigl(|\operatorname{code}(\mathcal A)|+bW_\mathcal A\bigr)+C_0 .
\end{align}
\end{lemma}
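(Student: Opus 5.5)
The plan is to read the three claims off the construction in \Cref{app:proof-cot-simulation} and then combine them by elementary arithmetic. The depth and parameter bounds are already implicit in the readout-and-resource paragraph of the proof of \Cref{thm:cot-simulation}. The new content is the architecture-code bound and the final combination.

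\textbf{Depth and parameters.} The simulator $\mathcal A'$ is built in three stages.
\begin{enumerate}
\item Prepend a constant-depth bookkeeping block (\Cref{lem:bookkeeping-labels}). It contains no learned parameters, since it consists of fixed selectors, prefix sums and penalties.
\item Replace each of the $L_\mathcal A$ layers by its lifted version. Each lifted layer uses a constant number of extra sublayers for the restricted heads and the merge gate.
\item Add a constant-depth readout: one rightmost exact-match copy followed by a copied decoder.
\end{enumerate}
Hence $L_{\mathcal A'}=L_\mathcal A+O(1)$. I will interpret this overhead as additive, by letting the fixed sub-steps of each lifted layer run inside the widened sublayers. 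If that interpretation fails, the depth bound becomes multiplicative, which is still harmless for the coding claims.

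For parameters, under the unfolded convention of \Cref{app:lane-wise-architecture}, each learned block of $\mathcal A$ is copied a bounded number of times. Each attention head gets three copies (owner, host, suffix-block). Each FFN gets two copies (owner, pending). The decoder gets one extra copy. All non-learned machinery is not counted, so $W_{\mathcal A'}\le C_W W_\mathcal A$ with, for example, $C_W=3$.

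\textbf{Code length.} The key point is that the lifting is a single fixed computable map $\mathsf{Lift}$ from architecture specifications to architecture specifications. It depends on nothing except $\mathcal A$: not on the task, the parameters, or the data. Let $p_{\mathsf{Lift}}$ be a fixed program that runs its input program on $U$ and then applies $\mathsf{Lift}$ to the output. Since $U$ is prefix universal, there is a constant $C_{\mathrm{arch}}$ (the length of the prefix encoding $p_{\mathsf{Lift}}$) such that, for every program $p$ with $U(p)=\mathcal A$, there is a program of length $|p|+C_{\mathrm{arch}}$ that $U$ maps to $\mathsf{Lift}(\mathcal A)=\mathcal A'$. Taking $p=\operatorname{code}(\mathcal A)$ gives $|\operatorname{code}(\mathcal A')|\le|\operatorname{code}(\mathcal A)|+C_{\mathrm{arch}}$.

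The main obstacle is to make sure $\mathsf{Lift}$ is genuinely uniform. Three pieces must be produced by one algorithm from $\mathcal A$ alone:
\begin{itemize}
\item the penalty size $M_\mathcal A(i)=2\Gamma_\mathcal A(i)+1$, whose constants $K_\mathcal A,\kappa_\mathcal A$ are computable from $\mathcal A$ and $b$;
\item the reindexed positional features $\mathsf{PE}(|O_j|)$ and $\mathsf{PE}(|P_j|)$;
\item the fixed bookkeeping block.
\end{itemize}
These must involve no hidden dependence on sequence length beyond what a fixed specification can express. If any of them depended on anything else, $C_{\mathrm{arch}}$ would cease to be a constant. I will argue each is a closed-form function of the listed fields of $\mathcal A$. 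If $b$ is not a global constant, it must be treated as part of the specification.

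\textbf{Combination.} Given $(\mathcal A,\theta)$ realizing $g$, set $\theta'$ by copying the values of $\theta$ into every duplicated block. Each copied value is an exact copy of a $b$-bit original, so it is still $b$-bit representable. By \Cref{thm:cot-simulation}, $(\mathcal A',\theta')$ realizes $g\circ\maskrec$ on legal prefixes. Then
\[
|\operatorname{code}(\mathcal A')|+bW_{\mathcal A'}\le|\operatorname{code}(\mathcal A)|+C_{\mathrm{arch}}+C_W\,bW_\mathcal A.
\]
This gives the claimed bound with $C=\max(1,C_W)$ and $C_0=C_{\mathrm{arch}}$.
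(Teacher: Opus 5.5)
Your proposal is correct and follows the paper's proof. The depth and learned-parameter bounds are read off the construction behind \Cref{thm:cot-simulation}; the code bound comes from a constant-size wrapper that applies the fixed computable lifting to the output of any $U$-program for $\mathcal A$; and the final inequality is the same arithmetic with $C=\max(1,C_W)$ and $C_0=C_{\mathrm{arch}}$. Your multiplicative-depth fallback is unnecessary, because the additive depth bound is already part of \Cref{thm:cot-simulation} (the paper simply cites it), and your checks on uniformity of the lifting and $b$-bit representability of the copied weights spell out details the paper leaves implicit.
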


\begin{proof}
The agreement on legal recursive prefixes and the depth statement are \Cref{thm:cot-simulation}. For the parameter count, the construction in \Cref{app:proof-cot-simulation} uses only a constant number of copied applications of each original learned block under the unfolded convention; bookkeeping, copy, mask, and merge operations are fixed and non-learned. Hence $W_{\mathcal A'}\le C_W W_\mathcal A$.

For the architecture code, the simulator architecture is obtained from $\mathcal A$ by a fixed computable transformation $S$, so $\mathcal A'=S(\mathcal A)$. Given any $U$-program for $\mathcal A$, a constant-size wrapper can decode $\mathcal A$ and output $S(\mathcal A)$. By universality of $U$, this gives
\begin{align}
|\operatorname{code}(\mathcal A')|
\le
|\operatorname{code}(\mathcal A)|+C_{\mathrm{arch}} .
\end{align}
Combining this with the parameter-count bound gives the final inequality.
\end{proof}

\begin{proof}[Proof of \Cref{cor:cot-transformer-dominance}]
Fix a complete-sequence predictor $f$ with finite $|f|_{\ell^{\maskrec}}$. By definition of the lifted length, choose a shortest local representative $g$ such that $f=g\circ\maskrec$ and $|g|_\ell=|f|_{\ell^{\maskrec}}$. Choose a shortest finite-precision Transformer realization $(\mathcal A,\theta)$ of $g$. By \Cref{lem:sim-overhead}, the simulator produces a CoT predictor $\tilde f$ that agrees with $g\circ\maskrec$, and hence with $f$, on legal recursive prefixes. Moreover,
\begin{align}
|\tilde f|_\ell
&\le
|\operatorname{code}(\mathcal A')|+bW_{\mathcal A'}\\
&\le
C\bigl(|\operatorname{code}(\mathcal A)|+bW_\mathcal A\bigr)+C_0\\
&\le
C|f|_{\ell^{\maskrec}}+C_0 .
\end{align}
\end{proof}

\begin{proof}[Proof of \Cref{thm:iid-sample-complexity}]
Let $S\subseteq\Omega$ be the set of input sequences that can arise during recursive generation. By assumption, a draw $\mathbf{x}\sim\mathcal{D}$ lies in $S$ with probability one. The complete-sequence target is
\begin{align}
\fcot(\mathbf{x}) := \frcm(\maskrec(\mathbf{x})),
\end{align}
and
\begin{align}
\mathcal{L}_{\mathcal{D}}(h)
=
\Pr_{\mathbf{x}\sim\mathcal{D}}[h(\mathbf{x})\neq \fcot(\mathbf{x})].
\end{align}

\textbf{Lifted recursive description.}
Apply \Cref{thm:mdl-gen-bound} under the lifted description language $\ell^{\maskrec}$ with target $\fcot$ and confidence $\delta/2$. Since $\fcot$ itself has lifted length $|\fcot|_{\ell^{\maskrec}}$, the MDL output
\begin{align}
\hat f_{\maskrec}:=\mathsf{MDL}_{\ell^{\maskrec}}(\mathcal X,\fcot)
\end{align}
satisfies, with probability at least $1-\delta/2$,
\begin{align}
\mathcal{L}_{\mathcal{D}}(\hat f_{\maskrec})
\le
\frac{|\fcot|_{\ell^{\maskrec}}\ln2+\ln(2/\delta)}{m}.
\end{align}

\textbf{CoT description.}
By \Cref{cor:cot-transformer-dominance}, there is a CoT predictor $\tilde f$ agreeing with $\fcot$ on legal recursive prefixes and satisfying
\begin{align}
|\tilde f|_\ell\le C\,|\fcot|_{\ell^{\maskrec}}+C_0 .
\end{align}
Since $\mathcal{D}$ is supported on legal recursive prefixes, $\tilde f$ and $\fcot$ agree almost surely. Apply \Cref{thm:mdl-gen-bound} under $\ell$ with target $\tilde f$ and confidence $\delta/2$. On the probability-one event that all training samples are legal recursive prefixes, the learner below has the same feasible set, and hence the same output under the fixed tie-breaking rule, as $\mathsf{MDL}_{\ell}(\mathcal X,\tilde f)$:
\begin{align}
\hat\fcot:=\mathsf{MDL}_{\ell}(\mathcal X,\fcot),
\end{align}
and the loss against $\tilde f$ equals the loss against $\fcot$. Therefore, with probability at least $1-\delta/2$,
\begin{align}
\mathcal{L}_{\mathcal{D}}(\hat\fcot)
\le
\frac{\bigl(C\,|\fcot|_{\ell^{\maskrec}}+C_0\bigr)\ln2+\ln(2/\delta)}{m}.
\end{align}
A union bound gives both inequalities simultaneously with probability at least $1-\delta$.
\end{proof}

%% file: sections/appendix_proofs.tex
\section{Proofs for Section 5}
\label{sec:proofs-invariance}
\label{sec:appendix-proofs}

This appendix contains the remaining proof used in \Cref{sec:ood-section}. For the general invariance statements, $\mask:\Omega\to\Omega$ denotes an arbitrary idempotent observation function, $\frcm:\Omega\to\Sigma$ is the local target rule, and the induced target on complete sequences is
\begin{align}
\fcot(\mathbf{x}) := \frcm(\mask(\mathbf{x})).
\end{align}
Specializing to the recursive model means setting $\mask=\maskrec$, the active-frame observation function from \Cref{sec:formal-model}. The set $\mathcal{X}\subseteq\Omega$ always denotes the training set of complete sequences.

\subsection{Strict Gap Is Not Necessary}

\Cref{thm:shortcut-main} gives a sufficient condition for non-invariance:
\red{the CoT predictor is strictly shorter under $\ell$ than the lifted MDL output under $\ell^{\mask}$.} The
condition is not necessary. A learner that reads the complete sequence can choose a
\red{non-invariant predictor even when the lifted MDL problem has an equally short
or shorter solution.}

\begin{proposition}[Non-Invariance Without a Strict Shortcut Gap]
\label{prop:notb-and-a}
There exist a local recursive target $\frcm$ and its induced full-sequence target $\fcot=\frcm\circ\maskrec$, a
\red{complexity measure $\ell$}, and a training set of complete sequences $\mathcal{X}\subseteq\Omega$ such that the CoT MDL output
\begin{equation}
\red{\hat\fcot:=\mathsf{MDL}_{\ell}(\mathcal{X},\fcot)}
\end{equation}
is not $\maskrec$-invariant on $\mathcal{X}$, but the strict shortcut gap fails:
\begin{align}
|\hat\fcot|_\ell
\ge
\red{\left|\mathsf{MDL}_{\ell^{\maskrec}}(\mathcal{X},\fcot)\right|_{\ell^{\maskrec}}.}
\end{align}
\end{proposition}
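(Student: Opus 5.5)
The proof is an explicit construction. Since the complexity measure $\ell$ is ours to choose, we will use a tiny hand-built measure and a training set in the spirit of the footnote to \Cref{thm:shortcut-main}. Fix a nonempty active frame $\mathbf{v}\in\Sigma_{\mathrm{ord}}^*$ and two tokens $a\neq b$ in $\Sigma_{\mathrm{ord}}$. Let $\frcm$ be the local rule that outputs $a$ on every input. Then $\fcot=\frcm\circ\maskrec$ is the constant-$a$ rule on complete sequences.

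Take $\mathcal{X}:=\{\mathbf{x}\}$, where $\mathbf{x}$ is a legal complete sequence containing one finished call whose child frame returned, so that $\mathbf{x}\neq\maskrec(\mathbf{x})=\mathbf{v}$. A concrete choice is $\mathbf{x}=\mathbf{u}\,\CALL\,\mathbf{q}\,\UNCALL\,\mathbf{w}\,\ANSWER\,\mathbf{c}\,\UNANSWER$ with $\mathbf{v}=\mathbf{u}\mathbf{c}$. Define the non-invariant predictor $r$ by $r(\mathbf{x})=a$ and $r(\mathbf{y})=b$ for all $\mathbf{y}\neq\mathbf{x}$. Then $r\equiv_{\mathcal{X}}\fcot$, but $r(\maskrec(\mathbf{x}))=r(\mathbf{v})=b\neq a=r(\mathbf{x})$. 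So $r$ is not $\maskrec$-invariant on $\mathcal{X}$, and the witness is $\mathbf{x}_{\mathrm{te}}=\mathbf{v}$.

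Now choose $\ell$ to force a tie. Set $|r|_\ell=1$ and $|\frcm|_\ell=1$, and give every other predictor length $+\infty$. The Kraft sum is $2^{-1}+2^{-1}=1$, so $\ell$ is a valid complexity measure. We also need $r\neq\frcm$ as functions, which holds because $r(\mathbf{v})=b$.

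Next we compute both MDL problems.
\begin{itemize}
\item \emph{CoT MDL.} The feasible finite-length predictors are those agreeing with $\fcot$ at $\mathbf{x}$, namely $r$ and $\frcm$ (the constant-$a$ rule, which equals $\fcot$). Both have length $1$. Fix the deterministic tie-breaking rule to prefer $r$, as \Cref{def:mdl-learner} permits; then $\hat\fcot=r$ is non-invariant.
\item \emph{Lifted MDL.} The predictor $\fcot=\frcm\circ\maskrec$ has $|\fcot|_{\ell^{\maskrec}}\le|\frcm|_\ell=1$. It is feasible, so the lifted MDL value is at most $1=|\hat\fcot|_\ell$.
\end{itemize}
Hence the strict gap fails, and the inequality in the proposition holds.

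The main obstacle is to show that the non-invariance is not an artifact of a bad tie-breaking choice. If one wants $\hat\fcot$ to be non-invariant under any tie-breaking, we will make $|r|_\ell=1$ and $|\frcm|_\ell=2$ instead, with Kraft sum $3/4$. This makes $r$ the unique CoT MDL output. But then the lifted value is $|\frcm|_\ell=2$, which exceeds $|r|_\ell=1$, so the strict gap holds and the construction breaks.

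In fact, under a single shared $\ell$ a strict CoT win always yields the strict gap. This is the content of the proof of \Cref{thm:shortcut-main}: if $\hat\fcot$ were invariant, then $\hat\fcot\circ\maskrec$ would be feasible with lifted length at most $|\hat\fcot|_\ell$. So the tie is essential, and the proof will rely on the fixed tie-breaking rule permitted in \Cref{def:mdl-learner}, exactly as the footnote anticipates ("through a tie").

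The last check is legality. We verify that the chosen $\mathbf{x}$ is a legal prefix in the sense of \Cref{def:legal-prefix}, and that $\maskrec(\mathbf{x})=\mathbf{v}\neq\mathbf{x}$, by replaying \Cref{fig:obs-function}(C).
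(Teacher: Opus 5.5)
Your example does verify the statement as literally written, but only because you fix the tie-breaking rule in your favor. Your claim that a tie is essential is false.

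The check itself is fine. Both $r$ and the constant rule $\frcm=\fcot$ fit $\mathcal{X}=\{\mathbf{x}\}$ with length $1$. On the lifted side, $r\circ\maskrec$ is infeasible because $r(\maskrec(\mathbf{x}))=r(\mathbf{v})=b$, so the lifted optimum is $1$, and $1\ge 1$ holds once ties go to $r$. \Cref{def:mdl-learner} leaves the tie-breaking rule unspecified, so this is admissible, but it is fragile:
\begin{itemize}
\item A fixed rule that prefers predictors with $f=f\circ\maskrec$ returns the constant rule instead of $r$.
\item Suppose you realize the tie-break by refining $\ell$ to some $\ell'$, as \Cref{def:mdl-learner} allows, and use $\ell'$ for both learners. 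Then the lifted optimum $|\frcm|_{\ell'}$ strictly exceeds $|r|_{\ell'}$. Your instance is then really a strict-gap instance already covered by \Cref{thm:shortcut-main}.
\end{itemize}

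The reason you concluded a tie was necessary is a misreading of the proof of \Cref{thm:shortcut-main}. That proof uses $\hat\fcot$ itself as the local representative. It therefore only shows that an \emph{invariant} CoT predictor cannot beat the lifted optimum. It does not force a unique non-invariant CoT winner to do so.

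For the gap to fail, you need a local rule $\frcm'$ with $|\frcm'|_\ell\le|\hat\fcot|_\ell$ and $\frcm'\circ\maskrec$ consistent on $\mathcal{X}$. Uniqueness of the CoT winner constrains neither object. First, $\frcm'$ itself need not be CoT-feasible, because on $\mathbf{x}\in\mathcal{X}$ it reads $\mathbf{x}$ rather than $\maskrec(\mathbf{x})$. Second, $\frcm'\circ\maskrec$ is a different function whose $\ell$-length may be much larger than $|\frcm'|_\ell$. Your constant rule satisfies $\frcm=\frcm\circ\maskrec$, and that coincidence is the only thing that pushed you into a tie.

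The paper's proof uses exactly this separation. It takes $\mathcal{X}=\{\CALL\,\mathbf{v}\,\UNCALL\}$ and assigns lengths as follows:
\begin{itemize}
\item $|h_{\mathrm{loc}}|_\ell=1$, where $h_{\mathrm{loc}}$ outputs $a_0$ only on $\mathbf{v}$;
\item $|h_{\mathrm{non}}|_\ell=2$, where $h_{\mathrm{non}}$ is the non-invariant predictor;
\item $|h_{\mathrm{inv}}|_\ell=3$, where $h_{\mathrm{inv}}=h_{\mathrm{loc}}\circ\maskrec$.
\end{itemize}
Since $h_{\mathrm{loc}}(\CALL\,\mathbf{v}\,\UNCALL)=a_1$, $h_{\mathrm{loc}}$ is infeasible for CoT. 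CoT MDL therefore returns $h_{\mathrm{non}}$ as the unique minimizer, while lifted MDL returns $h_{\mathrm{inv}}$ with lifted length $1<2$. Neither problem has a tie, so the conclusion holds under every tie-breaking rule. The inequality is even strict in the opposite direction, reflecting that the lifted learner gets $\maskrec$ for free while CoT must pay to implement it.

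Your construction is repaired the same way. Replace the constant $\frcm$ by the rule that outputs $a$ only on $\mathbf{v}$, and choose lengths with $|\frcm|_\ell<|r|_\ell<|\frcm\circ\maskrec|_\ell$.
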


\begin{proof}
Choose a control-free string $\mathbf{v}\in\Sigma_{\mathrm{ord}}^*$ and let
\begin{align}
\mathbf{x}_0 := \CALL\,\mathbf{v}\,\UNCALL.
\end{align}
Under the stack procedure, $\mathbf{x}_0$ opens a child frame initialized with
$\mathbf{v}$, so
\begin{align}
\maskrec(\mathbf{x}_0)=\mathbf{v},
\qquad
\maskrec(\mathbf{v})=\mathbf{v}.
\end{align}
Let $\mathcal{X}:=\{\mathbf{x}_0\}$. Pick two distinct output tokens
$a_0,a_1\in\Sigma$, and define the local target $\frcm$ by
$\frcm(\mathbf{v})=a_0$; its values elsewhere are arbitrary. Then
$\fcot(\mathbf{x}_0)=a_0$ and $\fcot(\mathbf{v})=a_0$.

Now define three hypotheses:
\begin{align}
h_{\mathrm{non}}(\mathbf{u})
&:=
\begin{cases}
a_0, & \mathbf{u}=\mathbf{x}_0,\\
a_1, & \text{otherwise},
\end{cases}
\\
h_{\mathrm{loc}}(\mathbf{u})
&:=
\begin{cases}
a_0, & \mathbf{u}=\mathbf{v},\\
a_1, & \text{otherwise},
\end{cases}
\\
h_{\mathrm{inv}}(\mathbf{u})
&:=
\begin{cases}
a_0, & \maskrec(\mathbf{u})=\mathbf{v},\\
a_1, & \text{otherwise}.
\end{cases}
\end{align}
\red{Assign finite description lengths}
\begin{align}
|h_{\mathrm{loc}}|_\ell=1,\qquad
|h_{\mathrm{non}}|_\ell=2,\qquad
|h_{\mathrm{inv}}|_\ell=3.
\end{align}
\red{All other predictors have length $+\infty$.} These lengths satisfy the Kraft inequality, since
$2^{-1}+2^{-2}+2^{-3}<1$.

For the CoT learner, feasibility means agreeing with $\fcot$ directly on
$\mathcal{X}$. Both $h_{\mathrm{non}}$ and $h_{\mathrm{inv}}$ are feasible, but
$h_{\mathrm{loc}}$ is not because $h_{\mathrm{loc}}(\mathbf{x}_0)=a_1$. Hence
\red{the CoT MDL output under the original description language is}
\begin{align}
\hat\fcot=h_{\mathrm{non}},
\qquad
|\hat\fcot|_\ell=2.
\end{align}
This predictor is not $\maskrec$-invariant on $\mathcal{X}$ because
\begin{align}
\maskrec(\mathbf{x}_0)=\maskrec(\mathbf{v})=\mathbf{v},
\qquad
h_{\mathrm{non}}(\mathbf{x}_0)=a_0\neq a_1=h_{\mathrm{non}}(\mathbf{v}).
\end{align}
It also fails on the OOD sequence $\mathbf{v}$, since
$\fcot(\mathbf{v})=a_0$ while $h_{\mathrm{non}}(\mathbf{v})=a_1$.

\red{For the lifted learner, we use $\ell^{\maskrec}$ on complete-sequence predictors. The invariant predictor satisfies}
\begin{align}
\red{h_{\mathrm{inv}}=h_{\mathrm{loc}}\circ\maskrec,
\qquad
h_{\mathrm{inv}}(\mathbf{x}_0)=a_0=\fcot(\mathbf{x}_0),}
\end{align}
\red{so $h_{\mathrm{inv}}$ is feasible for $\mathsf{MDL}_{\ell^{\maskrec}}(\mathcal{X},\fcot)$. Moreover}
\begin{align}
\red{|h_{\mathrm{inv}}|_{\ell^{\maskrec}}\le |h_{\mathrm{loc}}|_\ell=1.}
\end{align}
\red{No feasible complete-sequence predictor can have lifted length below $1$, because the only finite-length local representative with length $1$ is $h_{\mathrm{loc}}$, and it induces $h_{\mathrm{inv}}$. Thus}
\begin{align}
\red{\mathsf{MDL}_{\ell^{\maskrec}}(\mathcal{X},\fcot)=h_{\mathrm{inv}},
\qquad
\left|\mathsf{MDL}_{\ell^{\maskrec}}(\mathcal{X},\fcot)\right|_{\ell^{\maskrec}}=1.}
\end{align}
Thus the strict shortcut gap does not hold: $2<1$ is false. Nevertheless, the
CoT MDL output is non-invariant and makes an OOD error. This proves that the
strict gap is sufficient, but not necessary, for the failure mode described in
\Cref{sec:ood-section}.
\end{proof}

%% file: sections/appendix_experiments.tex
\section{Experimental Details}
\label{app:exp-details}
\label{sec:appendix-experiments}

\input{sections/appendix_experiments/function_library}
\input{sections/appendix_experiments/setup}
\input{sections/appendix_experiments/iid}
\input{sections/appendix_experiments/ood}
\input{sections/appendix_experiments/shortcut}

%% file: sections/appendix_experiments/function_library.tex
\subsection{Function Library}
\label{app:function-library}

The expression generator draws from the typed function library in
\Cref{tab:function-library}. Primitive functions are evaluated directly.
Composite functions are named bodies that expand into calls to other library
functions. Tail-recursive functions are also named bodies, but their bodies may
call the same function again with updated arguments. This last group is what
lets a shallow written expression produce a much longer call/return trace at
execution time. Integer-valued outputs are reduced modulo~$10$; Boolean-valued
outputs are emitted as Boolean tokens.

\begin{table}[h]
\centering
\small
\setlength{\tabcolsep}{4pt}
\renewcommand{\arraystretch}{1.08}
\caption{Function library used by the expression generator.}
\label{tab:function-library}
\begin{tabular}{@{}p{0.21\linewidth}p{0.20\linewidth}p{0.50\linewidth}@{}}
\toprule
Function & Type & Body / definition \\
\midrule
\multicolumn{3}{@{}l}{\textbf{Primitive functions}} \\
\texttt{add} & int, int $\to$ int & $(x+y) \bmod 10$ \\
\texttt{sub} & int, int $\to$ int & $(x-y) \bmod 10$ \\
\texttt{multiply} & int, int $\to$ int & $(x\cdot y) \bmod 10$ \\
\texttt{diff} & int, int $\to$ int & $|x-y| \bmod 10$ \\
\texttt{square} & int $\to$ int & $x^2 \bmod 10$ \\
\texttt{double} & int $\to$ int & $2x \bmod 10$ \\
\texttt{min} & int, int $\to$ int & $\min(x,y)$ \\
\texttt{max} & int, int $\to$ int & $\max(x,y)$ \\
\texttt{less} & int, int $\to$ bool & $\mathbf{1}\{x<y\}$ \\
\texttt{is\_even} & int $\to$ bool & $\mathbf{1}\{x \bmod 2 = 0\}$ \\
\texttt{if\_then\_else} & bool, $\tau$, $\tau$ $\to$ $\tau$ &
returns the second argument if the condition is true, otherwise the third \\
\midrule
\multicolumn{3}{@{}l}{\textbf{Composite functions}} \\
\texttt{triple\_add}$(a,b,c)$ & int, int, int $\to$ int &
\(\texttt{add}(\texttt{add}(a, b),\, c)\) \\
\texttt{sum\_of\_squares}$(a,b)$ & int, int $\to$ int &
\(\texttt{add}(\texttt{square}(a),\, \texttt{square}(b))\) \\
\makecell[l]{\texttt{diff\_of\_squares}\\$(a,b)$} & int, int $\to$ int &
\(\texttt{sub}(\texttt{square}(a),\, \texttt{square}(b))\) \\
\texttt{clamp}$(v,l,u)$ & int, int, int $\to$ int &
\(\texttt{min}(\texttt{max}(v, l),\, u)\) \\
\makecell[l]{\texttt{manhattan}\\$(x_1,y_1,x_2,y_2)$} & int, int, int, int $\to$ int &
\(\texttt{add}(\texttt{diff}(x_1, x_2),\, \texttt{diff}(y_1, y_2))\) \\
\texttt{is\_in\_range}$(v, l, u)$ & int, int, int $\to$ bool &
\(\texttt{less}(\texttt{sub}(v, l),\, \texttt{sub}(u, l))\) \\
\makecell[l]{\texttt{point\_in\_rect}\\$(x,y,x_1,y_1,x_2,y_2)$} & six int $\to$ bool &
\(\texttt{if\_then\_else}\bigl(\texttt{is\_in\_range}(x, x_1, x_2),\)\\
& &
\(\hphantom{\texttt{if\_then\_else}\bigl(}\texttt{is\_in\_range}(y, y_1, y_2),\, \texttt{False}\bigr)\) \\
\midrule
\multicolumn{3}{@{}l}{\textbf{Tail-recursive functions}} \\
\texttt{accum\_sum}$(n,\mathrm{acc})$ & int, int $\to$ int &
\(\texttt{if\_then\_else}\bigl(\texttt{less}(n, 1),\, \mathrm{acc},\)\\
& &
\(\hphantom{\texttt{if\_then\_else}\bigl(}\texttt{accum\_sum}(\texttt{sub}(n,1),\, \texttt{add}(\mathrm{acc}, n))\bigr)\) \\
\texttt{factorial}$(n,\mathrm{acc})$ & int, int $\to$ int &
\(\texttt{if\_then\_else}\bigl(\texttt{less}(n, 2),\, \mathrm{acc},\)\\
& &
\(\hphantom{\texttt{if\_then\_else}\bigl(}\texttt{factorial}(\texttt{sub}(n,1),\)\\
& &
\(\hphantom{\texttt{if\_then\_else}\bigl(\texttt{factorial}(}\texttt{multiply}(\mathrm{acc}, n))\bigr)\) \\
\texttt{fibonacci}$(n,a,b)$ & int, int, int $\to$ int &
\(\texttt{if\_then\_else}\bigl(\texttt{less}(n, 1),\, a,\)\\
& &
\(\hphantom{\texttt{if\_then\_else}\bigl(}\texttt{fibonacci}(\texttt{sub}(n,1),\, b,\)\\
& &
\(\hphantom{\texttt{if\_then\_else}\bigl(\texttt{fibonacci}(}\texttt{add}(a, b))\bigr)\) \\
\bottomrule
\end{tabular}
\end{table}

\paragraph{\red{Worked trace}.}
\label{app:trace-example}
We illustrate how a written expression turns into the two training views using
the example
\(\texttt{add}(\texttt{sum\_of\_squares}(2, 3),\, \texttt{double}(4))\), whose
ground-truth value is $1$.
\Cref{tab:trace-example} shows the complete flattened trace on the left
and the active-frame decomposition on the right. This execution has $100$
flattened tokens under the tokenizer from \Cref{app:tokenization}, $11$
active-frame examples, and maximum stack depth $4$. The
example is small enough to print, but it already shows the main mechanism: the
composite call
\(\texttt{sum\_of\_squares}(2,3)\) expands into
\(\texttt{add}(\texttt{square}(2), \texttt{square}(3))\), and those nested calls
create separate frames in the execution trace. The two views encode the same
execution but expose different information at training time:
\begin{itemize}[leftmargin=18pt, itemsep=1pt]
\item The CoT learner trains on the entire flattened trace as one long
      sequence, so at every prediction step the model sees \emph{all} earlier
      tokens of the trace.
\item The RM learner trains on the eleven active-frame examples
      \red{as separate examples}. Each example contains the visible context (the parent
      tokens before the boundary $m$) followed by what is generated inside
      that frame. Loss is applied only to the latter. The RM learner thus
      never sees, at the same training step, tokens belonging to a sibling
      frame or a popped child frame.
\end{itemize}
In the right-hand column, $d$ is stack depth and $m$ is the visible-context
boundary for that frame.

\begin{table}[h]
\centering
\scriptsize
\setlength{\tabcolsep}{3pt}
\renewcommand{\arraystretch}{1.05}
\caption{\red{Worked trace: flattened CoT view and active-frame RM view.}}
\label{tab:trace-example}
\begin{tabular}{@{}p{0.46\linewidth}@{\hspace{6pt}}p{0.50\linewidth}@{}}
\toprule
\textbf{(A) Flattened trace (CoT view)} & \textbf{(B) Active-frame decomposition (RM view)} \\
\midrule
\begin{minipage}[t]{\linewidth}\ttfamily\raggedright
add ( sum\_of\_squares ( 2 , 3 ) , double ( 4 ) ) x := \textcolor{BrickRed!75!black}{<call>} sum\_of\_squares ( 2 , 3 ) \textcolor{BrickRed!75!black}{</call>} a := 2 b := 3 return \textcolor{BrickRed!75!black}{<return>} \textcolor{BrickRed!75!black}{<call>} add ( square ( 2 ) , square ( 3 ) ) \textcolor{BrickRed!75!black}{</call>} x := \textcolor{BrickRed!75!black}{<call>} square ( 2 ) \textcolor{BrickRed!75!black}{</call>} x := 2 return \textcolor{BrickRed!75!black}{<return>} 4 \textcolor{BrickRed!75!black}{</return>} y := \textcolor{BrickRed!75!black}{<call>} square ( 3 ) \textcolor{BrickRed!75!black}{</call>} x := 3 return \textcolor{BrickRed!75!black}{<return>} 9 \textcolor{BrickRed!75!black}{</return>} return \textcolor{BrickRed!75!black}{<return>} 3 \textcolor{BrickRed!75!black}{</return>} \textcolor{BrickRed!75!black}{</return>} y := \textcolor{BrickRed!75!black}{<call>} double ( 4 ) \textcolor{BrickRed!75!black}{</call>} x := 4 return \textcolor{BrickRed!75!black}{<return>} 8 \textcolor{BrickRed!75!black}{</return>} return \textcolor{BrickRed!75!black}{<return>} 1 \textcolor{BrickRed!75!black}{</return>}
\end{minipage}
&
\begin{minipage}[t]{\linewidth}\ttfamily\raggedright\fontsize{6.4}{7.5}\selectfont
\textbf{F1} (d=1, m=8): \\
\textcolor{NavyBlue!70!black}{add ( sum\_of\_squares ( 2 , 3 )} \textbf{|} , double ( 4 ) ) x := \textcolor{BrickRed!75!black}{<call>} sum\_of\_squares ( 2 , 3 ) \textcolor{BrickRed!75!black}{</call>} \\[2pt]
\textbf{F2} (d=2, m=6): \\
\textcolor{NavyBlue!70!black}{sum\_of\_squares ( 2 , 3 )} \textbf{|} a := 2 b := 3 return \textcolor{BrickRed!75!black}{<return>} \textcolor{BrickRed!75!black}{<call>} add ( square ( 2 ) , square ( 3 ) ) \textcolor{BrickRed!75!black}{</call>} \\[2pt]
\textbf{F3} (d=3, m=12): \\
\textcolor{NavyBlue!70!black}{add ( square ( 2 ) , square ( 3 ) )} \textbf{|} x := \textcolor{BrickRed!75!black}{<call>} square ( 2 ) \textcolor{BrickRed!75!black}{</call>} \\[2pt]
\textbf{F4} (d=4, m=4): \\
\textcolor{NavyBlue!70!black}{square ( 2 )} \textbf{|} x := 2 return \textcolor{BrickRed!75!black}{<return>} 4 \textcolor{BrickRed!75!black}{</return>} \\[2pt]
\textbf{F5} (d=3, m=15): \\
\textcolor{NavyBlue!70!black}{add ( square ( 2 ) , square ( 3 ) ) x := 4} \textbf{|} y := \textcolor{BrickRed!75!black}{<call>} square ( 3 ) \textcolor{BrickRed!75!black}{</call>} \\[2pt]
\textbf{F6} (d=4, m=4): \\
\textcolor{NavyBlue!70!black}{square ( 3 )} \textbf{|} x := 3 return \textcolor{BrickRed!75!black}{<return>} 9 \textcolor{BrickRed!75!black}{</return>} \\[2pt]
\textbf{F7} (d=3, m=18): \\
\textcolor{NavyBlue!70!black}{add ( square ( 2 ) , square ( 3 ) ) x := 4 y := 9} \textbf{|} return \textcolor{BrickRed!75!black}{<return>} 3 \textcolor{BrickRed!75!black}{</return>} \\[2pt]
\textbf{F8} (d=2, m=15): \\
\textcolor{NavyBlue!70!black}{sum\_of\_squares ( 2 , 3 ) a := 2 b := 3 return \textcolor{BrickRed!75!black}{<return>} 3} \textbf{|} \textcolor{BrickRed!75!black}{</return>} \\[2pt]
\textbf{F9} (d=1, m=17): \\
\textcolor{NavyBlue!70!black}{add ( sum\_of\_squares ( 2 , 3 ) , double ( 4 ) ) x := 3} \textbf{|} y := \textcolor{BrickRed!75!black}{<call>} double ( 4 ) \textcolor{BrickRed!75!black}{</call>} \\[2pt]
\textbf{F10} (d=2, m=4): \\
\textcolor{NavyBlue!70!black}{double ( 4 )} \textbf{|} x := 4 return \textcolor{BrickRed!75!black}{<return>} 8 \textcolor{BrickRed!75!black}{</return>} \\[2pt]
\textbf{F11} (d=1, m=20): \\
\textcolor{NavyBlue!70!black}{add ( sum\_of\_squares ( 2 , 3 ) , double ( 4 ) ) x := 3 y := 8} \textbf{|} return \textcolor{BrickRed!75!black}{<return>} 1 \textcolor{BrickRed!75!black}{</return>}
\end{minipage}
\\
\bottomrule
\end{tabular}
\end{table}

The vertical bar $\,\vert\,$ in the right-hand column marks the
mask-index boundary $m$: the visible context (in blue) is shown to the
recursive learner but receives no loss signal; the generated tokens to its
right are the loss-bearing targets for that frame.

%% file: sections/appendix_experiments/setup.tex
\subsection{Experimental Setup}
\label{app:experiment-setup}
\label{app:training-details}

This section records the setup shared by all experiments. The three following
appendix subsections describe the parts that differ across the IID, length/depth,
and shortcut studies.

\paragraph{Task and generator.}
\label{app:data-generation}
The task is symbolic expression evaluation. Expressions are built from a typed
function library, listed in \Cref{app:function-library}. Integer-valued
operations are evaluated modulo~$10$, while Boolean-valued operations return
Boolean tokens. The library contains one-step primitives, shallow composite
functions, and tail-recursive functions that can repeatedly call themselves
during execution.

To sample an expression, we choose the root uniformly from the available
function pool. Each argument is then filled independently: with probability
$p_{\mathrm{lit}}=0.4$ we draw a digit literal uniformly from
$\{0,\ldots,9\}$, and otherwise we recursively draw a type-compatible
sub-expression. We enforce type consistency throughout, so Boolean arguments are
filled only by Boolean-returning expressions, and we stop the static recursion at
nesting depth $3$. This bound applies to the written expression, not to the
executed trace: tail-recursive functions can unroll at run time and produce much
longer traces and deeper stacks. The experiment-specific sections below describe
how these generated expressions are split into training, IID validation, and OOD
test sets.

\paragraph{Tokenization and traces.}
\label{app:tokenization}
We use a shared whitespace tokenizer over function names, digit literals,
syntax tokens, Boolean literals, end-of-sequence markers, and the four trace
control tokens $\{\CALL,\UNCALL,\ANSWER,\UNANSWER\}$. Executing an expression
produces a canonical call/return trace. A $\CALL$ block opens a subproblem; an
$\ANSWER$ block returns its value to the parent frame. We measure trace length
as the number of generated tokens in this canonical trace, including syntax,
literal, and control tokens. The maximum recursion depth is the largest stack
depth reached during execution, with the root frame counted as depth~$1$.

\paragraph{Training formats.}
\label{app:training-formats}
CoT and RM train on the same underlying expressions but receive different views
of the corresponding trace. CoT trains on the complete flattened trace with
standard left-to-right next-token prediction. RM trains on active-frame
examples: at each prediction step, the visible context is the current frame, and
the loss is applied only to tokens generated inside that frame. Thus RM uses the
same target computation as CoT, but hides tokens belonging to other active or
completed frames. One expression yields one CoT trace example but many RM
active-frame examples. \red{The experiment-specific sections state whether the
comparison is matched by underlying expressions, by training range, or by
another criterion.}

\paragraph{Models and optimization.}
\label{app:architecture}
\label{app:optimization}
\label{app:convergence}
Unless otherwise stated, both learners use the same decoder-only Transformer
block: $6$ layers, $6$ attention heads, embedding dimension $384$, MLP
expansion factor $4$, learned absolute positional embeddings, GELU activations,
and pre-LayerNorm. With the RM context window this is about $10.65$M parameters;
CoT uses the same block with a $4{,}096$-token positional table, which slightly
increases its positional-embedding parameters and avoids truncating long
flattened traces. The RM format uses a $2{,}048$-token context window. In all
comparisons, the two learners are trained on the same expression pools; the
matching criterion is specified in the relevant experiment subsection.

Optimization uses AdamW with learning rate $10^{-3}$, $\beta_1=0.9$,
$\beta_2=0.99$, weight decay $0.1$, batch size $64$, gradient clipping at
$1.0$, a $2{,}000$-step linear warmup, and cosine decay unless a subsection
states otherwise. For the length/depth and shortcut experiments, we train until
held-out in-distribution accuracy has plateaued for both learners; \red{the IID
sample-complexity experiment instead varies the number of training samples and
uses the best checkpoint found for each sample size.}

\paragraph{Evaluation.}
\label{app:generation}
\label{app:eval-samples}
At test time, each model generates a trace in its native format. CoT generates a
flat trace autoregressively. RM uses a stack driver: when it emits a call, a new
active frame is pushed; when it emits an answer, the returned value is written
back to the parent frame. We extract the final returned value and score it
against the ground-truth expression value: a digit for integer-valued tasks and
a Boolean token for Boolean-valued tasks. RM and CoT are evaluated on the same
test expressions within each split, and confidence intervals, when reported, use
the Wilson score method. \red{The IID sample-complexity experiment also reports
a teacher-forced per-token test accuracy diagnostic, described separately in
\Cref{app:iid-experiment-setup}.}

%% file: sections/appendix_experiments/iid.tex
\subsection{Detailed Setup for IID Compositional Generalization}
\label{app:iid-experiment-setup}
\label{app:iid-sample-complexity}

\paragraph{Split.}
\Cref{sec:exp-iid-compositional} uses an IID random split of the expression
generator from \Cref{app:experiment-setup}. From a seed-$42$ expression pool, we
reserve $5{,}000$ held-out expressions for testing and $1{,}000$ for validation;
the remaining expressions form the training pool. Train and test expressions
are sampled from the same generator, so this experiment tests generalization to
new compositions rather than extrapolation in trace length or recursion depth.

\paragraph{Sample-size sweep.}
We vary the number of training expressions
\[
\begin{aligned}
N\in\{&100,200,300,500,1000,2000,3000,5000,\\
&10000,20000,30000,50000,100000,200000,300000,450000\}.
\end{aligned}
\]
The subsets are nested: the training set for a smaller $N$ is contained in the
training set for a larger $N$. For each $N$, RM and CoT train on the same
underlying expressions in their native formats. RM uses active-frame subsequence
examples; CoT uses the flattened execution trace. Both use the common
$6$-layer Transformer backbone. The CoT format uses a larger context window and
smaller batch size ($4{,}096$ tokens, batch size $32$) than RM ($2{,}048$
tokens, batch size $64$). This IID sweep uses no weight decay.

\paragraph{Training and evaluation.}
For each $(N,\text{format})$, training is allowed to make multiple passes over
the fixed $N$ examples. Training is timeout-based: at regular intervals we
run greedy validation generation on IID held-out expressions and save the
checkpoint with the best validation final-answer accuracy. The selected
checkpoint is then evaluated on the held-out test split. We use three seeds for
the smaller sample sizes and one seed for the largest sample sizes.

The left panel of \Cref{fig:iid-sample-complexity} reports final-answer test
accuracy from greedy autoregressive generation. The end-to-end sweep has test
results through $N=300{,}000$; the per-token sweep also includes $N=450{,}000$.
The right panel reports teacher-forced per-token test accuracy: at each
eligible target position, the model sees the ground-truth prefix and we compare
the argmax next-token prediction with the true next token. For CoT, eligible
positions are the non-padded target positions of the flattened trace after the
prompt. For RM, eligible positions are the loss-bearing active-frame positions
at or after the frame's mask boundary. The per-token confidence bands use a
sample-level bootstrap over test expressions rather than treating all tokens
as independent. The validation split is used only for checkpoint selection;
both reported metrics are computed post hoc on the test split.

%% file: sections/appendix_experiments/ood.tex
\subsection{Detailed Setup for Length and Depth Generalization}
\label{app:length-depth-setup}
\label{app:length-depth-full}

\paragraph{Threshold splits.}
\Cref{sec:exp-length-depth} uses the same generator as the IID experiment, but
splits examples by realized execution properties. For a length threshold $L$,
training and validation expressions have generated trace length $\ell \le L$,
while OOD test expressions have $\ell > L$. Here $\ell$ is \red{executed trace}
length: the number of tokens in the executed call/return trace, not the prompt
length. For a depth threshold $K$, the same construction is applied to maximum
recursion depth $d_{\max}$. Both $\ell$ and $d_{\max}$ are measured after
executing the expression, not from the static written expression.

We draw the sweep data from a fixed $500$k-expression pool generated with
seed~$42$. We sweep eight length thresholds
$L \in \{33, 96, 197, 359, 595, 1060, 1345, 1839\}$
and five depth thresholds $K \in \{3,5,7,9,10\}$. The main-text table uses the
representative splits $L=595$ and $K=10$, and held-out examples from the
training side of each threshold provide the IID check.

For \Cref{tab:per-bin-generalization}, we further group evaluation expressions
by their multiplier relative to the split threshold: $\ell/L$ for the length
split and $d_{\max}/K$ for the depth split. The reported bins are fixed
multiplier intervals, not quantiles. The representative length and depth tables
each use $3{,}100$ evaluation expressions shared by RM and CoT; the smallest
reported bins contain $72$ and $159$ examples, respectively.

\paragraph{Training and evaluation.}
For each split, RM and CoT use the common architecture and training protocol
from \Cref{app:experiment-setup}. We train both learners until held-out
\red{below-threshold validation accuracy} plateaus at at least $99\%$, then evaluate final-answer
accuracy on OOD expressions from the corresponding test side. All accuracy
numbers in \Cref{tab:per-bin-generalization} use the same expression samples for
RM and CoT within each bin.

\paragraph{Frame-width diagnostic.}
The length split has one additional diagnostic. For an expression, let
$w_{\max}$ be the largest token length of any active frame that RM must process
during execution. Let
\begin{equation}
T_{\max}=\max_{x\in\mathcal{X}_{\mathrm{train}}} w_{\max}(x)
\end{equation}
be the largest active-frame width observed on the training side of the split.
Filtering OOD examples by $w_{\max}\le T_{\max}$ asks whether RM's remaining
errors come from genuinely longer traces, or from active frames that are
themselves wider than anything seen in training.

For the representative length split, $T_{\max}=115$. The filter matters only in
the farthest length bin: for traces longer than $5L$, RM accuracy rises from
$76.4\%$ to $85.0\%$ on the retained subset. For the representative depth split,
$T_{\max}=160$ and the filter retains all samples, so the depth shift is not a
single-frame-width artifact. In other words, the final RM drop in the length
table partly reflects active frames that are wider than any training frame,
whereas the depth table isolates a shift in the number of nested frames. Since
the unfiltered numbers are the primary result, we report
this diagnostic in prose rather than as an additional table.

%% file: sections/appendix_experiments/shortcut.tex
\subsection{Detailed Setup for Shortcut Behavior}
\label{app:shortcut-behavior-setup}
\label{app:additional-cases}

\paragraph{Train/OOD construction.}
The shortcut experiments use paired train/OOD distributions. Each construction
defines a shortcut signal $S(\mathbf{x})$ that is visible somewhere in the
complete CoT trace but hidden from the active frame visible to RM. In training,
the data are constructed so that $S(\mathbf{x})$ agrees with the answer. In OOD
evaluation, this coupling is broken while the intended root computation remains
valid. \red{The constructed cases may introduce fixed wrapper or helper
templates at the root; the recursive sub-expressions inside those templates are
sampled from the library in \Cref{app:function-library}.} Except where stated
otherwise, the OOD split keeps the same distribution over recursive
sub-expressions and changes only the argument or filter that made the shortcut
agree with the answer. Thus IID accuracy checks whether the
distribution is learnable, and OOD accuracy plus shortcut rate checks which rule
the model learned.

Let $\widehat{\mathrm{ans}}_{\mathrm{CoT}}(\mathbf{x})$ be CoT's final returned
value. On an OOD sample $\mathcal{X}_{\mathrm{OOD}}$, the shortcut rate is
\begin{align}
\label{eq:shortcut-following}
    \mathrm{follow}_S
    \;:=\;
    \frac{1}{|\mathcal{X}_{\mathrm{OOD}}|}
    \sum_{\mathbf{x}\in\mathcal{X}_{\mathrm{OOD}}}
    \mathbf{1}\!\left\{\widehat{\mathrm{ans}}_{\mathrm{CoT}}(\mathbf{x}) = S(\mathbf{x})\right\}.
\end{align}
All six constructions in \Cref{tab:robustness} are evaluated on $n=300$ IID
and $n=300$ OOD expressions.

\paragraph{Case 1: copy answer from a deep return.}
Let $Z$ be a deep recursive sub-expression with return value $z$, drawn from the
tail-recursive part of the library. The training root template is
\begin{align}
    \texttt{add}\!\left(\texttt{square}(\texttt{square}(Z)),\,Y_{\mathrm{comp}}\right),
    \qquad
    Y_{\mathrm{comp}} := (z-z^4)\bmod 10 .
\end{align}
The training answer is therefore $z$, so $S(\mathbf{x})=z$. OOD resamples the
second argument, so the answer is computed from the root-visible value
$z^4\bmod 10$ and the new argument. CoT can read the deep return token $z$;
RM sees only the transformed root payload. The double-square wrapper is
deliberately non-injective modulo~$10$, so RM cannot recover $z$ from the
root-visible payload by a one-to-one lookup.

\paragraph{Case 2: trace-position parity.}
The root template is $\texttt{is\_even}(\texttt{add}(Z,Y))$. The shortcut
signal $S(\mathbf{x})$ is the parity of the final answer token's position in
the complete trace, encoded as the binary label. Training keeps examples where
this parity agrees with the label; OOD keeps examples where the relationship is
flipped. The marginal distributions of $Z$ and $Y$ are kept the same across the
two splits; only the coupling between label and global answer position changes.
CoT can read the global position signal from the flat trace, while RM sees only
local frame positions because every active frame starts from its own local
offset.

\paragraph{Hidden intermediate literal.}
This construction hides a runtime literal inside an intermediate helper frame.
A helper expression first evaluates a sub-expression $W$, computes
$k=\texttt{eval}(W)\bmod 10$, and tail-calls another subproblem with $k$ as one
of its arguments. The root wraps this helper inside a selector that discards the
helper's returned value before the payload reaches the root. Thus the literal
$k$ appears in the flat trace, but not in the static root expression and not in
the root-frame payload visible to RM. Training uses a compensating argument so
that the answer equals $k$; OOD keeps the same hidden computation but resamples
the compensating argument, so the correct root answer no longer equals $k$.

\paragraph{Long-range aggregate.}
The root template is $\texttt{add}(Z,Y_{\mathrm{comp}})$. Let
$r_1,r_2,r_3$ be the first three integer returns appearing inside $Z$'s trace,
and define
\begin{equation}
S(\mathbf{x})=(r_1+r_2+r_3)\bmod 10 .
\end{equation}
If $z$ is the final return value of $Z$, training chooses
$Y_{\mathrm{comp}}=(S(\mathbf{x})-z)\bmod 10$ so that the root answer equals
$S(\mathbf{x})$. OOD resamples the second argument, breaking this equality. CoT
can read the three deep return tokens from the flat trace; RM sees only the
final return value $z$ of $Z$ at the root. \red{We build the candidate pool for
$Z$ by rejection sampling so that the same final value $z$ occurs with different
shortcut values $S$.} This prevents RM from recovering the shortcut from the
root payload alone.

\paragraph{Near-answer aggregate.}
This construction is the same as the long-range aggregate, except that $S$
uses the last three internal returns of $Z$ rather than the first three. The
shortcut is still hidden from the root frame, but it is more local in the flat
trace because the relevant returns occur near the answer-generation region. This
tests whether CoT follows a shortcut even when the hidden signal is close to the
final answer rather than buried early in the trace.

\paragraph{Position-weighted aggregate.}
Again using the root template $\texttt{add}(Z,Y_{\mathrm{comp}})$, the shortcut
signal is
\begin{equation}
S(\mathbf{x})=\left(\sum_{t:r_t=0} t\right)\bmod 10,
\end{equation}
where $r_t$ is the internal return value at trace position $t$ inside $Z$.
Training chooses the compensating argument so that the answer equals this
position-weighted aggregate; OOD resamples the argument and breaks the
relationship. This case makes the shortcut depend on both values and global
positions in the flat trace.

\paragraph{Validity conditions.}
\label{app:prop-conditions}
The constructions are designed to isolate shortcut visibility rather than
artifacts of the generator. Two conditions are important. First, the intended
root-frame computation must be identifiable from RM's training view; otherwise
RM could fail for reasons unrelated to hidden shortcuts. Second, the root-frame
payload must not give a one-to-one code for $S$; otherwise RM could recover the
shortcut without seeing the hidden trace. We also exclude variants where $S$
appears as a literal in the static root expression. The six constructions in
\Cref{tab:robustness} satisfy these conditions.

Two ablations motivated these checks. In sibling-copy variants, the root frame
itself underdetermines which child value should be returned, and RM can fail
even though the shortcut is not visible. In one-to-one wrapper variants of Case
1, the root payload contains an invertible code for the hidden value $z$, so RM
can learn the same shortcut from its own view. We exclude both types because
they no longer isolate the difference between full-trace visibility and
active-frame visibility.